\documentclass{article}

\usepackage{microtype}
\usepackage{graphicx}
\usepackage{subcaption}
\usepackage{booktabs} 

\usepackage{hyperref}

\usepackage[accepted]{icml2026}

\usepackage{amsmath}
\usepackage{amssymb}
\usepackage{mathtools}
\usepackage{amsthm}

\usepackage{multirow}
\usepackage{makecell}
\usepackage{bm}

\usepackage[capitalize,noabbrev]{cleveref}

\theoremstyle{plain}
\newtheorem{theorem}{Theorem}[section]
\newtheorem{proposition}[theorem]{Proposition}

\theoremstyle{definition}
\newtheorem{definition}[theorem]{Definition}

\theoremstyle{remark}

\usepackage[textsize=tiny]{todonotes}

\icmltitlerunning{Behavior-Invariant Task Representation Learning with Transformer-based World Models for Offline Meta-RL}

\begin{document}

\twocolumn[


\icmltitle{Behavior-Invariant Task Representation Learning with \\
Transformer-based World Models for Offline Meta-Reinforcement Learning}



  \icmlsetsymbol{equal}{*}

  \begin{icmlauthorlist}
    \icmlauthor{Fuyuan Qian}{equal,sustech}
    \icmlauthor{Menglong Zhang}{equal,hkustgz}
    \icmlauthor{Song Wang}{sustech}
    \icmlauthor{Quanying Liu}{sustech,omni,slai}
  \end{icmlauthorlist}

  \icmlaffiliation{sustech}{Department of Biomedical Engineering, Southern University of Science and Technology, Shenzhen, Guangdong, China}
  \icmlaffiliation{hkustgz}{AI Thrust, The Hong Kong University of Science and Technology (Guangzhou), Guangzhou, Guangdong, China}
  \icmlaffiliation{omni}{Omni-Intelligence, Shenzhen, Guangdong, China}
  \icmlaffiliation{slai}{Shenzhen Loop Area Institute, Shenzhen, Guangdong, China}

  \icmlcorrespondingauthor{Quanying Liu}{liuqy@sustech.edu.cn}
  \icmlcorrespondingauthor{Menglong Zhang}{mzhang943@connect.hkust-gz.edu.cn}

  \icmlkeywords{Offline Meta-RL, Task Representation Learning, World Model, Distribution Shift}

  \vskip 0.3in
]



\printAffiliationsAndNotice{\icmlEqualContribution}

\begin{abstract}

Offline meta-reinforcement learning leverages static datasets to enable agents to generalize to unseen environments by combining offline efficiency with meta-learning adaptability, yet it faces key challenges from context and policy distribution shifts. These issues hinder agents from adapting to online environments, and are further exacerbated under sparse-reward settings. As a result, agents often become trapped in an inherent pattern dilemma, failing to achieve robust generalization. In this work, we propose a novel framework that integrates information-theoretic task representation learning with a Transformer-based stochastic world model. Our approach extracts task-defining latent variables that are invariant to behavior policy, thereby effectively mitigating the context distribution shift. To further handle policy shift and model exploitation, we apply a conservative value penalty to imagination-based rollouts, preventing the policy from exploiting model inaccuracies while maintaining robust adaptation. Extensive evaluations demonstrate that our method outperforms state-of-the-art approaches, with superior stability and generalization under out-of-distribution and sparse-reward settings.
\end{abstract}




\section{Introduction}

\begin{figure}[ht]
  \begin{center}
    \centerline{\includegraphics[width=0.8\columnwidth]{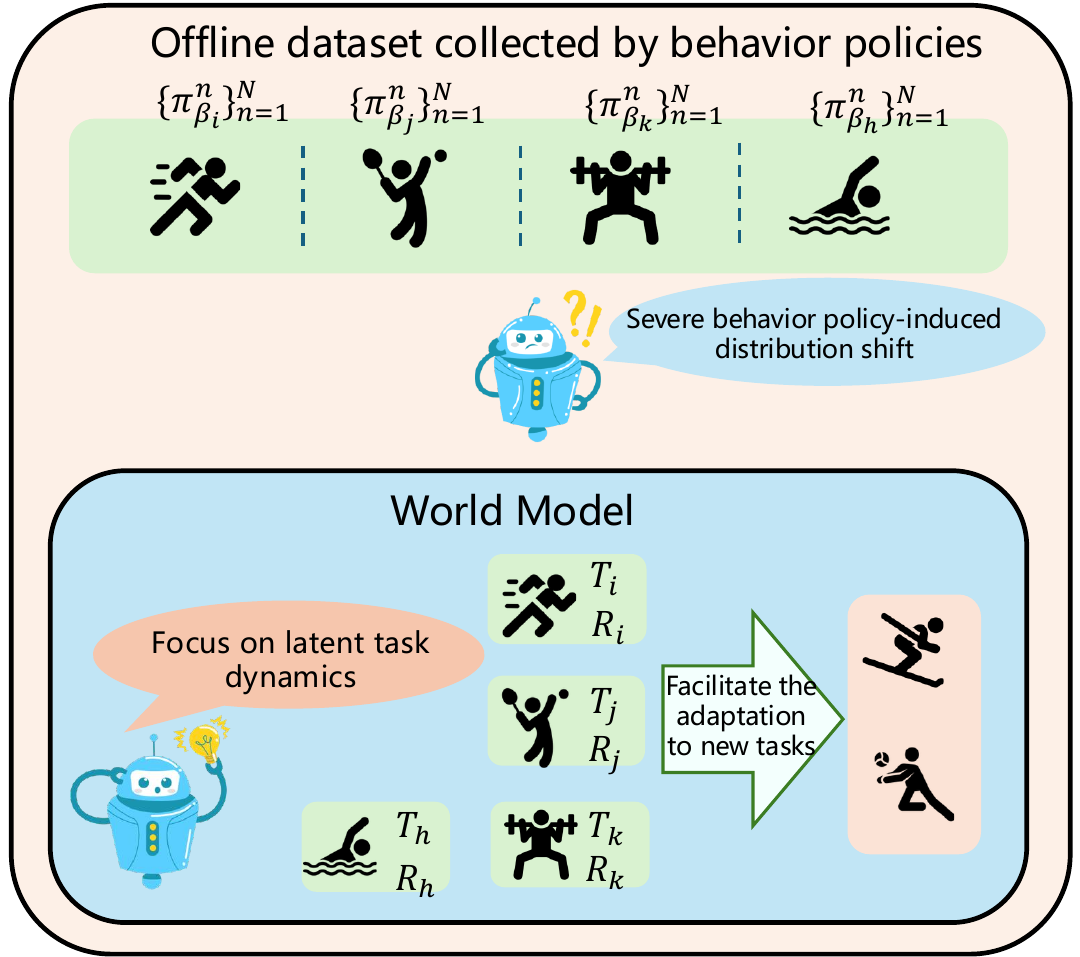}}
    \caption{
      Modeling latent task dynamics with a world model facilitates the capture of behavior-invariant task information, thereby supporting task inference and enabling rapid adaptation.
    }
    \label{motivation}
  \end{center}
  \vspace{-22pt}
\end{figure}

In meta-reinforcement learning, agents adapt to unknown environments by learning representations from similar tasks and exploiting prior knowledge \cite{finn2017model,duan2016rl,rakelly2019efficient,zintgrafvaribad}. Compared to online meta-RL, offline meta-RL is required to model a static task distribution and ensure that policies learned from offline data can robustly transfer to unseen tasks and subsequent online deployment while maintaining strong generalization \cite{dorfman2021offline,li2020focal,gao2023context,li2024towards}.
When reward signals are sparse, effective adaptation increasingly depends on learning robust representations of task dynamics and implicit transfer relationships to handle distributional shifts \cite{packer2021hindsight,grigsbyamago, zhanglearning}. Nevertheless, current offline meta-RL approaches struggle to learn such representations in sparse-reward and out-of-distribution environments.

Moreover, offline meta-RL faces two coupled distribution-shift challenges. First, context distribution shift arises because the contexts used for task inference during meta-training are sampled from static datasets collected by task-dependent behavior policies, whereas during meta-testing the agent must infer the task from contexts collected online by its exploration policy. The mismatch between behavior and exploration policies can make the learned context encoder unreliable on test-time contexts, leading to degraded task inference and generalization \cite{yuan2022robust,gao2023context,ajay2022distributionally}. Second, policy distribution shift arises because offline datasets cannot cover the full state-action space. When the learned policy selects out-of-distribution actions during optimization, the critic may assign overestimated values to unsupported regions, a problem that becomes more complex in offline meta-RL due to the need for cross-task generalization \cite{mitchell2021offline,wang2023offline}. Together with MDP ambiguity \cite{dorfman2021offline}, these issues give rise to an \textit{inherent pattern dilemma}, where agents overfit to behavior-policy-specific patterns encoded in the offline data, limiting their ability to generalize efficiently to unseen environments.

World models \cite{ha2018world,hafner2020dream} provide a natural mechanism for learning predictive dynamics and performing planning through latent imagination. By compressing observations into a compact latent space and predicting how latent states evolve under actions, a world model allows an agent to evaluate potential future outcomes without directly interacting with the real environment. This ability has led to strong performance in complex video games \cite{hafner2021mastering,hafner2025mastering,hafner2025training}, robotic control \cite{wu2023daydreamer}, and autonomous driving \cite{wang2024drivedreamer,wang2024driving}. 
By modeling task-specific latent dynamics in a compact latent space, world models help avoid overfitting to the empirical task distribution and partially filter out the effects induced by heterogeneous behavior policies.
This property naturally aligns with the requirements of offline meta-RL for learning robust task representations (Figure~\ref{motivation}).
In this work, we propose \textbf{MetaSTAR}, a novel offline \textbf{Meta}-reinforcement learning framework with a \textbf{S}tochastic \textbf{T}ransformer-based world model for beh\textbf{A}vior-invariant task \textbf{R}epresentations. Motivated by the information-theoretic view of causal disentanglement, MetaSTAR constructs a stochastic world model to predict forward latent dynamics in a compact state space. By forcing the model to capture transition-related and reward-related evolution rather than behavior-policy-specific correlations, MetaSTAR learns dynamics-centric task representations that are less sensitive to heterogeneous data-collection policies, thereby mitigating context distribution shift. To further address policy distribution shift in offline meta-RL, MetaSTAR combines the learned world model with contextual imagination and conservative policy optimization. The world model generates context-conditioned imagined rollouts, while the conservative value objective penalizes unsupported state-action regions and discourages the policy from exploiting model errors. This design enables safer latent-space imagination and more reliable transfer from offline training to online adaptation, especially under sparse rewards and out-of-distribution tasks. Our main contributions are as follows:
\begin{itemize}
\setlength{\itemsep}{2pt}
\setlength{\parskip}{0pt}
\setlength{\parsep}{0pt}
    \item From an information-theoretic perspective on task representation, we propose MetaSTAR that leverages world models to learn behavior-invariant representations in offline meta-reinforcement learning.
    \item We theoretically validate the effectiveness of world models in filtering the impact of behavior-policy-induced distribution shift during the meta-training stage, providing principled justification for behavior-invariant task representation learning.
    \item Empirically, MetaSTAR alleviates the inherent pattern dilemma and achieves strong online adaptation performance across a wide range of sparse-reward and out-of-distribution tasks.
\end{itemize}







\section{Preliminaries}

\subsection{Problem Formulation}

We consider a context-based offline meta-reinforcement learning setting,
where tasks are drawn from a distribution $p(\mathcal{M})$.
Each task $M_i \in \mathcal{M}$ is modeled as a Markov Decision Process (MDP)
defined by the tuple
$M_i = (\mathcal{S}, \mathcal{A}, T_i, R_i, \rho_0, \gamma)$.
All tasks share the same state space $\mathcal{S}$ and action space $\mathcal{A}$,
while differing in their transition dynamics $T_i$ and reward functions $R_i$.
In the offline setting, the agent has access only to a static dataset
$\mathcal{D} = \{\mathcal{D}_i\}_{i=1}^{N_\text{tasks}}$,
where each $\mathcal{D}_i$ consists of transitions $\{(s_j^i,a_j^i,r_j^i,s_j'^i)\}_{j=1}^{N_\text{size}}$ collected by its corresponding behavior policies $\{\pi_{\beta_i}^n\}_{n=1}^{N_{\text{update}}}$.
Given a context $c_i \subset \mathcal{D}_i$,
the agent first infers a task representation $z_i$ via an encoder $q_\phi(z_i \mid c_i)$.
The learned task representation $z_i$ is then concatenated with the state $s$
and provided as input to the policy $\pi_\theta(a \mid s, z_i)$ for policy learning. The overall optimization objective can be expressed as:
\begin{equation}
\max_{\theta,\phi}
\;\mathbb{E}_{M_i \sim p(\mathcal{M}),\, c_i \sim \mathcal{D}_i,\, z_i \sim q_\phi(z\mid c_i)}
\Big[
J_{M_i}\big(\pi_\theta(\cdot \mid s, z_i)\big)
\Big].
\end{equation}

The agent is required to infer task-specific dynamics from limited context and adapt to unseen tasks. As discussed in the introduction, offline meta-RL is challenged by both context distribution shift and policy distribution shift, which can make task inference unreliable and policy improvement unstable. These issues are further amplified under sparse rewards and out-of-distribution task adaptation, where task-identifying signals are limited and offline datasets provide incomplete coverage of the state-action space.
The above issues can be summarized as follows: the learned patterns are overly rigid, resulting in limited exploratory behavior and driving agents into an \textit{inherent pattern dilemma}. Since offline datasets are inevitably shaped by the behavior policies used for data collection, task representations learned without appropriate constraints may capture correlations that are predictive within the training distribution but fail to transfer to unseen tasks. Consequently, it is crucial to learn \textbf{behavior-invariant task representations} (Definition~\ref{BIT}) that preserve task-relevant transition and reward structures while suppressing behavior-policy-specific variations. Such representations are essential for handling task distribution shifts and achieving robust domain adaptation.



\subsection{Information-Theoretic Task Representation}


To formally define behavior-invariant task representations from an information-theoretic perspective, we adopt the framework proposed in UNICORN \cite{li2024towards} which posits that an ideal task representation $Z$ should maximize the mutual information $I(Z; M)$ with the task variable $M$. However, since the true task identity $M$ is unobservable in practice, the representation can only be inferred from the observed context sequence $X$. 
Following the causal decomposition, the context variable $X$ is decoupled into behavior-related components $X_b=(s,a)$ and task-related components $X_t=(r,s')$. By the chain rule, this yields
\begin{equation}
    I(Z; X) = \underbrace{I(Z; X_t \mid X_b)}_{\text{Primary Causality}} + \underbrace{I(Z; X_b)}_{\text{Lesser Causality}}.
\end{equation}
Based on this decomposition, UNICORN introduces the following central theorem to bound $I(Z; M)$:
\begin{equation}
    I(Z; X_t \mid X_b) \le I(Z; M) \le I(Z; X_t \mid X_b) + I(Z; X_b).
\end{equation}
Therefore, robustness to context shift can be promoted by maximizing the \emph{primary causality} $I(Z; X_t \mid X_b)$, while suppressing the \emph{lesser causality} $I(Z; X_b)$. Intuitively, $Z$ should preserve predictive information about future states and rewards, which characterize task dynamics, while discarding spurious correlations regarding which actions the behavior policy would inherently select. 


In our work, we argue that a Transformer-based world model provides a principled realization of this objective (Theorem~\ref{thm:primary_causality}). By training the world model to predict future latent states from historical latent states and actions, the dynamics-learning objective encourages $Z$ to capture predictive transition-related and reward-related structures. This can be interpreted as a variational surrogate for increasing the primary causality $I(Z;X_t\mid X_b)$ in the latent space. Consequently, the learned representation is encouraged to distill behavior-invariant latent task dynamics, providing compact and informative task signals for robust task inference and adaptation.

\begin{definition}[Behavior-invariant task representation]
\label{BIT}
Given a context sequence $X=(X_b,X_t)$ collected from task $M_i$, with $X_b=(s,a)$ and $X_t=(r,s')$, a representation $Z$ is called a \emph{behavior-invariant task representation} if it preserves task-relevant information about $T_i$ and $R_i$ by promoting high primary causality $I(Z;X_t\mid X_b)$, while suppressing behavior-policy-specific information measured by the lesser causality $I(Z;X_b)$.
\end{definition}



%

\section{Method}
In this section, we first introduce the process of modeling latent task dynamics using a Transformer-based world model (Section~\ref{sec3.1}).
We then describe how world model-based planning is combined with offline datasets to enable conservative policy learning (Section~\ref{sec3.2}).
Finally, we provide a theoretical analysis showing that optimizing the world model can be viewed as approximately learning behavior-invariant task representations (Section~\ref{sec3.3}).


\begin{figure*}[ht]
  \begin{center}
    \centerline{\includegraphics[width=0.85\textwidth]{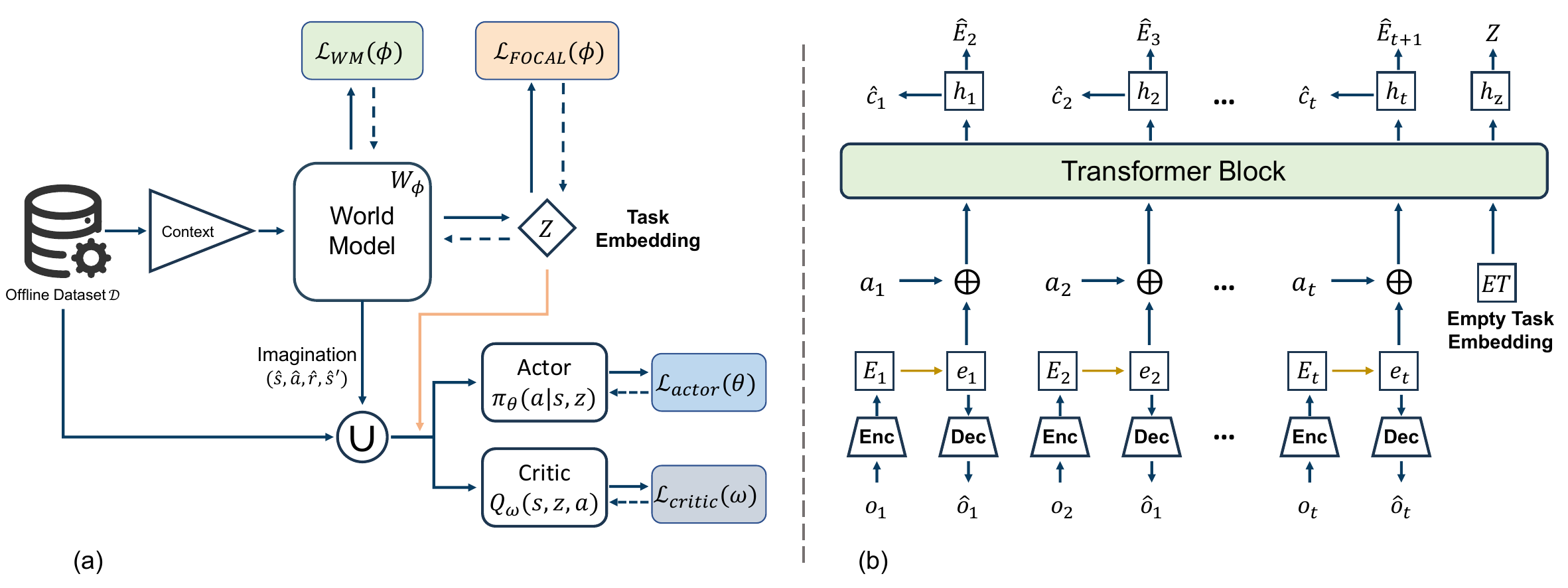}}
    \caption{MetaSTAR Framework. (a) The world model is responsible not only for inferring task representation from context but also for augmenting the data distribution via imagination. (b) The Transformer module simultaneously handles temporal encoding and dynamic prediction, ensuring that the extracted task representations are causal.}
    \label{framework}
  \end{center}
  \vspace{-20pt}
\end{figure*}

\subsection{World Model Learning}
\label{sec3.1}


In offline settings, data are collected by heterogeneous behavior policies, leading to diverse action distributions that can bias task representations and hinder reliable task identification.
Standard context encoders may therefore capture correlations induced by behavior policies rather than task-relevant dynamics, especially under sparse rewards and out-of-distribution settings. 
In contrast, world models learn compact latent dynamics of the underlying MDP and naturally support long-horizon imagination.
Motivated by these advantages, we employ a Transformer-based world model to learn behavior-invariant task representations.

\textbf{Latent Dynamics for Task Representation.} To learn behavior-invariant task representations, our approach models task-relevant latent dynamics rather than directly encoding trajectories in the original state space. This design reduces the influence of heterogeneous behavior policies and enables robust latent dynamics modeling across multiple offline tasks. 
For a trajectory $\tau^i = \{(s^i_t, a^i_t, r^i_t, s_t'^i)\}_{t=1}^T$
\footnote{For clarity of presentation, we omit explicit task indices when no confusion arises.} from task $M_i$, we define the observation as $o_t = [s_t, r_{t-1}]$. 
Explicitly incorporating the reward from the previous time step $r_{t-1}$ as part of the observation serves as a crucial signal for characterizing task-specific MDP dynamics, and facilitates the model’s ability to distinguish between different task reward functions
\citep{zintgraf2019varibad, choshen2023contrabar,rimon2024mamba}. We  empirically validate the importance of this design in Appendix~\ref{app:prev_reward_ablation}. To represent trajectories from multiple MDPs while preserving stochasticity in the latent space, we employ a variational auto-encoder (VAE) \cite{kingma2013auto} to map observation $o_t$ into a continuous latent distribution $E_t$. Specifically, at each time step, we sample a latent embedding $e_t \sim E_t$, which serves as the stochastic latent observation used for dynamics prediction.
To capture long-horizon temporal dependencies across different tasks in offline datasets, we employ a Transformer as the deterministic component of the world model~\citep{hafner2019learning,hafner2020dream,chen2022transdreamer}, serving as the sequence model in the latent state space. 
The sequence of stochastic latent observations and actions,
$\tau_{1:t}^{e,a} = \{e_1, a_1, \dots, e_t, a_t\}$,
is provided as input to the Transformer, which outputs a context embedding $h_t$ that aggregates historical information in the latent space.
The resulting context embedding $h_t$ is then fed into two independent MLP heads: a dynamics predictor, which forecasts the latent state distribution at the next time step $\hat{E}_{t+1}$, and a continuation predictor, which predicts the continuation flag $\hat{c}_t \in \{0,1\}$.
The stochastic dynamics predictor allows the model to represent uncertainty in latent transitions, which helps reduce over-reliance on rigid patterns in offline data and improves robustness over purely deterministic representation architectures. 
The components of our world model are defined as follows:
\begin{equation}
\begin{aligned}
\label{eq1}
\text{Observation encoder:} \quad 
& e_t \sim q_\phi(e_t \mid o_t), \\
\text{Observation decoder:} \quad 
& \hat{o}_t \sim p_\phi(\hat{o}_t \mid e_t), \\
\text{Sequence model:} \quad 
& h_{1:t} = f_\phi(\tau_{1:t}^{e,a}), \\
\text{Latent dynamics predictor:} \quad 
& \hat{e}_{t+1} \sim p_\phi(\hat{e}_{t+1} \mid h_t), \\
\text{Continuation predictor:} \quad 
& \hat{c}_t \sim p_\phi(\hat{c}_t \mid h_t), \\
\text{Task embedding projector:} \quad 
& z \sim q_\phi(z \mid h_z).
\end{aligned}
\end{equation}

To identify the underlying MDP from the latent history, we append a learnable query token at the end of the trajectory, as illustrated in Figure~\ref{framework} (b). 
Through the self-attention mechanism, this embedding attends to the historical latent trajectory, aggregating task-relevant information and producing a global context embedding $h_z$. Finally, a task-specific MLP head $q_{\phi}(z \mid h_z)$ projects $h_z$ into the task embedding space to yield the global task embedding $z$.


\textbf{Loss Functions.} The objective is to minimize a joint loss comprising the prediction loss $\mathcal{L}_{\text{pred}}$, the dynamics loss $\mathcal{L}_{\text{dyn}}$, and the representation loss $\mathcal{L}_{\text{rep}}$:
\begin{equation}
\label{eq:L_wm}
\mathcal{L}_{\mathrm{WM}}(\phi) \! \doteq \! \mathbb{E} \left[ \sum_{t=1}^T (\mathcal{L}_{\text{pred}}(\phi) + \mathcal{L}_{\text{dyn}}(\phi) + \beta_{\mathrm{rep}} \mathcal{L}_{\text{rep}}(\phi)) \right].
\end{equation}

The prediction loss consists of two components: observation reconstruction $\log p_\phi(o_t \mid e_t)$ and continuation prediction $\log p_\phi(c_t \mid h_t)$. The former is optimized via the negative log-likelihood of $o_t$ to ensure that the stochastic latent $e_t$ preserves sufficient information for reconstruction. The latter is trained as a logistic regression task to predict episode termination. The dynamics and representation losses govern the alignment between the posterior $q_\phi(e_{t+1} \mid o_{t+1})$ and the prior $p_\phi(\hat{e}_{t+1} \mid h_t)$. We employ KL balancing with a "free-bits" threshold of 1.0 to prevent the KL divergence from dominating the total loss, thereby avoiding posterior collapse. Specifically, $\mathcal{L}_{\text{dyn}}(\phi)$ trains the Transformer’s prior to match the encoder's posterior. By applying a stop-gradient ($\text{sg}$) to the encoder, we force the Transformer to internalize the underlying environment dynamics without allowing the encoder to simplify the latent space to minimize the loss. Finally, $\mathcal{L}_{\text{rep}}(\phi)$ ensures that observation representations remain consistent with the temporal dynamics, facilitating more stable and accurate multi-step imagination.
\begin{equation}
\label{eq:pred_dyn_rep_loss}
\begin{aligned}
\mathcal{L}_{\text{pred}}(\phi) \! &\doteq \! - \log p_\phi(o_t \mid e_t) - \log p_\phi(c_t \mid h_t), \\
\mathcal{L}_{\text{dyn}}(\phi) \! &\doteq \! \max(1, \text{KL}[\text{sg}(q_\phi(e_{t+1} | o_{t+1})) \parallel p_\phi(\hat{e}_{t+1} | h_t)]), \\
\mathcal{L}_{\text{rep}}(\phi) \! &\doteq \! \max(1, \text{KL}[q_\phi(e_{t+1} | o_{t+1}) \parallel \text{sg}(p_\phi(\hat{e}_{t+1} | h_t))]).
\end{aligned}
\end{equation}

While the world model loss $\mathcal{L}_{\mathrm{WM}}$ ensures that the task embedding $z$ captures the \textit{primary causality} (Theorem~\ref{thm:primary_causality}), it does not explicitly guarantee that embeddings from different tasks are well-separated in the latent space. To ensure that $z$ is a discriminative and sufficient statistic for the task variable $M$, we incorporate the distance metric learning objective from FOCAL \cite{li2020focal}. 
The FOCAL loss serves as a contrastive regularizer that optimizes the task-identifiability. This objective treats task representation learning as a distance metric learning problem, aiming to cluster embeddings of the same task while repelling those from different tasks. Specifically, given a batch of task embeddings $\{z^i, z^j\}$ extracted from different trajectory segments, the FOCAL loss is defined as:
\begin{equation}
\label{eq:L_FOCAL}
\begin{aligned}
\mathcal{L}_{\text{FOCAL}}(\phi) \doteq \mathbb{E}_{i,j} \Big[ &\mathbf{1}_{\{i=j\}} \|z^i - z^j\|_2^2 \\
&+ \mathbf{1}_{\{i \neq j\}} \frac{1}{\|z^i - z^j\|_2^2 + \epsilon} \Big].
\end{aligned}
\end{equation}

By minimizing this loss, we enforce the task representation $z$ inferred from different trajectory segments of the same MDP to lie in a consistent region of the latent space, while separating embeddings from different MDPs. This regularization improves task discriminability across data collected by heterogeneous behavior policies and reduces the dependence of $z$ on behavior-related components $X_b$. The total objective for the world model in MetaSTAR is:
\begin{equation}
\label{eq:total_loss}
\mathcal{L}_{\text{total}}(\phi) = \mathcal{L}_{\mathrm{WM}}(\phi) + \lambda \mathcal{L}_{\text{FOCAL}}(\phi).
\end{equation}

This integrated objective encourages the world model to learn latent dynamics grounded in stable task information rather than spurious variations induced by behavior policies. In particular, $\mathcal{L}_{\mathrm{WM}}$ promotes predictive modeling of task-related components $X_t=(r,s')$, while $\mathcal{L}_{\mathrm{FOCAL}}$ ensures that the resulting task embedding $z$ is structured and discriminative. The effect of the contrastive loss weight $\lambda$ is analyzed in Appendix~\ref{app:lambda_sensitivity}.

\subsection{Agent Learning}
\label{sec3.2}
Following the training of $W_\phi$, we proceed to learn a conservative critic $Q_\omega(s, z, a)$ and a meta-policy $\pi_\theta(a|s, z)$. The primary objective of this stage is to mitigate \textbf{policy distribution shift} by discouraging the policy from selecting unsupported state-action pairs and exploiting model inaccuracies in out-of-support regions.

\textbf{Planning with Contextual Imagination.} We use the learned world model $W_\phi$ to generate imagined rollouts for policy learning. While sequence-based world models naturally condition predictions on historical trajectories, our \emph{contextual imagination} refers specifically to the offline meta-RL procedure of warming up the world model with real task context before imagination. This distinction is important because offline meta-RL suffers from \textit{MDP ambiguity}~\cite{dorfman2021offline}: when the static context lacks sufficient identifying transitions, a meta-agent may fail to determine which MDP generated the data. Unlike standard single-task world-model rollouts, which use history mainly for predictive accuracy, contextual imagination uses real context to establish a task-consistent latent history before generating synthetic transitions for downstream policy learning.
Specifically, we sample a context segment of length $L$ from the offline dataset: $\tau_{1: L} = \{ (s_t, a_t, r_t, s'_t) \}_{t=1}^{L}$. This prefix is fed into the world model $W_\phi$ to establish a historical prior. Through its self-attention mechanism, the world model aggregates these historical transitions to encounter potential identifying state-action pairs. Taking the terminal state $s_L$ of the warm-up sequence as the anchor, the world model continues the sequence autoregressively for a horizon of $H$: $\hat{\tau}_{L+1:L+H} = \{ (\hat{s}_t, \hat{a}_t, \hat{r}_t, \hat{s}'_t) \}_{t=L+1}^{L+H}$. During this phase, actions are sampled from the current meta-policy $\hat{a}_t \sim \pi_\theta(\cdot|\hat{s}_t, z)$. 
By prefixing the imagination with real historical context, the generated rollouts are conditioned on evidence about the current task rather than produced from an ambiguous or uninformative latent state. This ensures that the generated transitions are not only physically plausible but also grounded in the specific MDP identified during the warm-up phase, effectively alleviating the identifiability challenges in offline meta-RL.

\textbf{Conservative Meta-Value Evaluation.} To mitigate \textbf{value overestimation} associated with policy distribution shift, we implement a conservative evaluation objective for the critic $Q_\omega(s, z, a)$. In the offline setting, the meta-policy may select actions outside the support of the offline dataset, causing the world model to generate transitions in regions where the learned dynamics are unreliable. Without regularization, the critic tends to assign erroneously high values to these regions, leading the policy to exploit model hallucinations.


Drawing on conservative model-based optimization~\citep{yu2021combo}, we train the critic using real transitions sampled from the offline dataset together with imagined transitions generated by the learned world model. Specifically, at each update, we sample a mini-batch of real transitions $d$ from $\mathcal{D}$ and use $W_\phi$ to generate a set of context-conditioned imagined transitions $\hat d_\phi$ through rollout. We then form the augmented transition batch $d_f \doteq d \cup \hat d_\phi$, where the relative amount of imagined data is determined by the number of imagination rollouts and the rollout horizon $H$. The conservative meta-value objective is formulated as:
\begin{equation}
\label{eq:conservative_q}
\begin{aligned}
\mathcal{L}_{\text{critic}}(\omega) \! &  \doteq \!\beta \left( \mathbb{E}_{s, a \sim \rho} [Q_\omega(s, z, a)] - \mathbb{E}_{s, a \sim \mathcal{D}} [Q_\omega(s, z, a)] \right) \\
&+ \frac{1}{2} \mathbb{E}_{s, a \sim d_f} \left[ \left( Q_\omega(s, z, a) - \widehat{\mathcal{B}}^\pi Q_\omega(s,z,a) \right)^2 \right],
\end{aligned}
\end{equation}

 where $\rho(s, a) = d^{\pi}_{\widehat{\mathcal{M}}}(s)\pi_\theta(a|s,z)$ represents the state-action marginal distribution induced by the current meta-policy $\pi_\theta$ when it is executed in the learned world model $\widehat{\mathcal{M}}$. The first term actively lowers the $Q$-values of states-actions pairs that are evaluated as having high value by the current policy but are not present in the real offline data. To ensure the critic remains grounded in the real experience distribution $\mathcal{D}$, we need to increase the $Q$-values of the state-action pairs in $\mathcal{D}$, thereby preventing the meta-policy from using the imagined data generated by the world model beyond the support region and generating overly high value estimates. The second term ensures the Bellman consistency on the augmented data $d_f$, which consists of both real transitions and imagined transitions. The operator $\widehat{\mathcal{B}}^\pi$ is the context-dependent Bellman operator, defined as $\widehat{\mathcal{B}}^\pi Q_\omega(s,z,a) \doteq r + \gamma \mathbb{E}_{a' \sim \pi_\theta(\cdot|s', z)} [Q_{\bar{\omega}}(s',z,a')]$, where $\bar{\omega}$ denotes the target network parameters.

\textbf{Meta-Policy Improvement.} Finally, under the constraints of the conservative critic, we optimize the meta-policy parameters $\theta$ by minimizing the following objective on the augmented data $d_f$ :
\begin{equation}
\label{eq:actor_loss}
\mathcal{L}_{\text{actor}}(\theta) \doteq - \mathbb{E}_{\substack{s \sim d_f \\ a \sim \pi_\theta(\cdot|s, z)}} [Q_\omega(s, z, a)] - \alpha \mathcal{H}(\pi_\theta(\cdot|s, z)).
\end{equation}


The conservative critic acts as an implicit regularizer for policy improvement. Since in the regions lacking data support, $Q_\omega$ is tuned to be pessimistic, maximizing it is equivalent to implicitly setting a safety constraint: the policy naturally avoids taking actions that lead to unreliable imagined states, because these states are assigned lower values during the value evaluation stage. Therefore, the agent can safely leverage the generalization ability of the world model to discover the optimal trajectory for the given task $M_i$, even in sparse-reward environments where the offline dataset may only provide suboptimal or fragmented evidence. This enables the agent to mitigate the inherent pattern dilemma in the offline setting: it no longer merely imitates the behavior policy in $\mathcal{D}$, but learns a general adaptive policy that can effectively interpolate and extrapolate on the task manifold, thereby achieving higher stability and performance in unseen tasks.

\subsection{Theoretical Analysis of Behavior-invariant Representation in World Model}
\label{sec3.3}

By extracting latent task dynamics from the world model, we theoretically show that such dynamics substantially reduce the influence of behavior-induced action distributions on task identification, thereby improving robustness under heterogeneous behavior policies.



\begin{theorem}[World model dynamics as primary causality]
\label{thm:primary_causality}
Given a stochastic world model with latent states $e_t$ and temporal context embeddings
$h_t = f_\phi(e_{1:t}, a_{1:t})$, $X_b=\{e_t,a_t\}$ denote behavior-related variables and
$X_t=\{e_{t+1}\}$ denote task-related variables.
Let $Z$ be a global task representation aggregated from $\{h_t\}$. 
Then optimizing the latent dynamics of the world model $\mathcal{L}_{\mathrm{dyn}}
=
\mathbb{E}\!\left[
\mathrm{KL}\!\left(
q_\phi(e_{t+1} \mid o_{t+1})
\,\|\, 
p_\phi(\hat e_{t+1} \mid h_t)
\right)
\right]$
 can be viewed as maximizing the primary causality $I(Z;X_t\mid X_b)$.

\end{theorem}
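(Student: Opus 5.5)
The plan is to use the standard variational (Barber--Agakov) lower bound on conditional mutual information. We use the world-model prior $p_\phi(\hat e_{t+1}\mid h_t)$ as the variational decoder, and show that minimizing $\mathcal{L}_{\mathrm{dyn}}$ tightens and raises this bound. First write
\begin{equation*}
I(Z;X_t\mid X_b)=H(e_{t+1}\mid e_t,a_t)-H(e_{t+1}\mid Z,e_t,a_t).
\end{equation*}
The first term depends only on the data distribution and the encoder posterior $q_\phi(e_{t+1}\mid o_{t+1})$. Since $\mathcal{L}_{\mathrm{dyn}}$ stops gradients through the encoder, we treat it as a constant with respect to the sequence-model parameters. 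Maximizing primary causality therefore reduces to minimizing the conditional entropy $H(e_{t+1}\mid Z,e_t,a_t)$.

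Second, we bound this conditional entropy by a cross-entropy under any variational family. We need $h_t$ to be a function of $(Z,e_{\le t},a_{\le t})$, or more simply take $Z$ to be aggregated from, and at least as informative as, the $h_t$ used for prediction. The data processing inequality then gives
\begin{equation*}
H(e_{t+1}\mid Z,X_b)\le H(e_{t+1}\mid h_t)\le \mathbb{E}\big[-\log p_\phi(e_{t+1}\mid h_t)\big].
\end{equation*}
The gap in the second inequality is exactly $\mathbb{E}\,\mathrm{KL}\big(p(e_{t+1}\mid h_t)\,\|\,p_\phi(\cdot\mid h_t)\big)$. Combining this with the first step yields
\begin{equation*}
I(Z;X_t\mid X_b)\ge C+\mathbb{E}\big[\log p_\phi(e_{t+1}\mid h_t)\big],
\end{equation*}
where $C$ collects the terms that are constant in the sequence-model parameters.

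Third, we relate this cross-entropy to $\mathcal{L}_{\mathrm{dyn}}$. Expanding the KL gives
\begin{equation*}
\mathrm{KL}\big(q_\phi(e_{t+1}\mid o_{t+1})\,\|\,p_\phi(\hat e_{t+1}\mid h_t)\big)=-\mathbb{E}_{q}\big[\log p_\phi(e_{t+1}\mid h_t)\big]-H\big(q_\phi(\cdot\mid o_{t+1})\big).
\end{equation*}
The entropy term is again constant under the stop-gradient. Thus $-\mathcal{L}_{\mathrm{dyn}}$ equals the variational lower bound up to an additive constant, and decreasing $\mathcal{L}_{\mathrm{dyn}}$ increases a lower bound on $I(Z;X_t\mid X_b)$. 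The bound becomes tight when $p_\phi$ matches the true latent transition given $h_t$. The free-bits clipping only flattens the objective below one nat and can be remarked on separately.

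The main obstacle is the second step: justifying that conditioning on the $h_t$ used for prediction can be replaced by conditioning on $(Z,X_b)$. Here $Z$ is a single global token while $h_t$ is per-step, so $h_t$ is not literally a function of $Z$. I would handle this by either defining $Z$ as the collection $\{h_t\}$ together with the query output, or by assuming that the query token attends to enough of $h_{1:T}$ that $e_{t+1}\perp h_t\mid (Z,e_{\le t},a_{\le t})$ approximately. The latter justifies the sense of ``can be viewed as'' in Theorem~\ref{thm:primary_causality}. I would state this sufficiency assumption explicitly and note that the FOCAL term encourages it.
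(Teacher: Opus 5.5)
Your argument is correct modulo the sufficiency hypothesis you flag, and its core matches the paper's. Both proofs reduce primary causality to the conditional entropy $H(e_{t+1}\mid h_t)$. Both bound that entropy by the cross-entropy under $p_\phi(\hat e_{t+1}\mid h_t)$ via Gibbs' inequality. Both then identify this cross-entropy with $\mathcal{L}_{\mathrm{dyn}}$ plus the stop-gradiented posterior entropy.

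Where you differ is in how $Z$ and $X_b$ are eliminated. The paper asserts $I(Z;X_t\mid X_b)=I(h_t;X_t\mid X_b)$ without proof. It then expands $I(h_t;X_t\mid X_b)=I(X_t;h_t,X_b)-I(X_t;X_b)$. Finally it discards $I(X_t;X_b\mid h_t)$ via a Markov chain $X_b\to h_t\to X_t$ justified ``by architecture,'' and reaches an equivalence with $I(X_t;h_t)$ up to constants. You instead write $I(Z;X_t\mid X_b)=H(X_t\mid X_b)-H(X_t\mid Z,X_b)$ and use the single inequality $H(e_{t+1}\mid Z,X_b)\le H(e_{t+1}\mid h_t)$. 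This gives a Barber--Agakov bound $I(Z;X_t\mid X_b)\ge \mathrm{const}-\mathcal{L}_{\mathrm{dyn}}$. Your conclusion is one-sided rather than an equivalence, but it rests on fewer unproved steps. It avoids the Markov-chain claim, which is a property of the true joint law of $(e_t,a_t,h_t,e_{t+1})$, not of what the predictor receives as input. It also replaces the paper's bare equality with an explicit hypothesis, which is what ``can be viewed as'' should rest on.

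Check the wording of that hypothesis, though. Requiring $h_t$ to be a function of $(Z,e_{\le t},a_{\le t})$, or $e_{t+1}\perp h_t\mid(Z,e_{\le t},a_{\le t})$, is vacuous, because $h_t=f_\phi(e_{1:t},a_{1:t})$ already holds. It also does not give your inequality: $X_b=\{e_t,a_t\}$ contains only the current step, and $H(e_{t+1}\mid Z,e_t,a_t)\ge H(e_{t+1}\mid Z,e_{\le t},a_{\le t})$. What you actually need is $e_{t+1}\perp h_t\mid(Z,e_t,a_t)$, or directly $H(e_{t+1}\mid Z,e_t,a_t)\le H(e_{t+1}\mid h_t)$. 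Your first option, letting $Z$ determine $h_t$ (e.g.\ by including $\{h_t\}$ in $Z$), satisfies this.
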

This theorem shows that optimizing the latent dynamics of a world model effectively captures task-relevant primary causality in offline meta-RL settings.


\begin{proposition}
For each task $M$ and behavior policy $\pi$, the task-token posterior is calibrated:
$\mathbb E_{\tau\sim p(\tau\mid M,\pi)}\Big[ \mathrm{KL}\big(q_\phi(z\mid\tau)\|p(z\mid M)\big) \Big] \le \varepsilon$,
which is enforced by minimizing $\mathcal L_{\mathrm{dyn}}$.
\end{proposition}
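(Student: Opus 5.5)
The statement is really an assumption (calibration of the task-token posterior) paired with a claim that minimizing $\mathcal L_{\mathrm{dyn}}$ enforces it. I therefore plan to read it as a conditional: if the dynamics loss is driven to within $\varepsilon$ of its optimum, the expected KL between the token posterior and an idealized task prior $p(z\mid M)$ is at most $\varepsilon$ (up to constants). The route is to link $q_\phi(z\mid\tau)$ to the per-step priors $p_\phi(\hat e_{t+1}\mid h_t)$ through the query-token architecture, and then use the data-processing inequality for KL.

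\textbf{Step 1: define the reference $p(z\mid M)$.} I would take $p(z\mid M)$ to be the pushforward of the true task-conditional transition kernel $T_M$ through the same projector $q_\phi(z\mid h_z)$. Concretely, $h_z$ is computed from a latent history whose next-step latents are generated by $T_M$ rather than by the model. Since $h_z$ attends only to $\{e_t,a_t\}$ and actions enter as conditioning variables, the behavior policy appears in both distributions only through the same action sequence.

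\textbf{Step 2: control the action-conditional trajectory distributions.} Fix $M$ and $\pi$. Let $P$ be the law of the latent trajectory $e_{1:T}$ given $a_{1:T}$ under the posterior encoder, and $Q$ its law under the model prior rollout. Apply the KL chain rule over time. The factors $\pi(a_t\mid\cdot)$ are shared by both laws and cancel. This gives
\[
\mathrm{KL}(P\|Q)=\sum_{t}\mathbb E\big[\mathrm{KL}\big(q_\phi(e_{t+1}\mid o_{t+1})\,\|\,p_\phi(\hat e_{t+1}\mid h_t)\big)\big],
\]
which is exactly $\mathcal L_{\mathrm{dyn}}$ when the free-bits clamp is inactive, and is bounded by $\mathcal L_{\mathrm{dyn}}$ plus the free-bits slack $T$ otherwise.

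\textbf{Step 3: pass to the task token.} The map from $\tau$ to $z$ is a Markov kernel, namely the Transformer followed by $q_\phi(z\mid h_z)$. By the data-processing inequality, the KL between the induced $z$-marginals is bounded by $\mathrm{KL}(P\|Q)$. I then need to replace the marginal comparison with the expected per-trajectory KL that the statement requires.

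\textbf{Main obstacle.} The replacement in Step 3 is the hardest part, because data processing controls the marginal, not $\mathbb E_\tau\,\mathrm{KL}(q_\phi(z\mid\tau)\,\|\,p(z\mid M))$. My first attempt would use the convexity and chain-rule decomposition
\[
\mathbb E_\tau\,\mathrm{KL}(q(z\mid\tau)\|p(z\mid M)) = \mathrm{KL}(\bar q\|p) + I_q(z;\tau\mid M),
\]
where $\bar q$ is the $z$-marginal. I would then bound the mutual-information term by the FOCAL intra-task term, which shrinks within-task dispersion. Failing that, I would impose as an explicit assumption that $z$ is nearly deterministic given $(M,\pi)$, i.e.\ $I_q(z;\tau\mid M)\le\delta$, and set $\varepsilon=\mathcal L_{\mathrm{dyn}}+\delta$. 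I would also note that the result is only a variational, interpretive statement in the same spirit as Theorem~\ref{thm:primary_causality}, not a sharp guarantee.
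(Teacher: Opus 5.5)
\textbf{Verdict: genuine gap.} Your derivation produces a reference distribution that depends on $\pi$, and it leaves the term that actually matters uncontrolled by $\mathcal L_{\mathrm{dyn}}$. The paper gives no proof to compare against. In the appendix the statement is simply posited (``There exists a task-conditional reference distribution $p(z\mid M)$ such that for any behavior policy $\pi$\dots''), and its link to $\mathcal L_{\mathrm{dyn}}$ rests only on a remark about a well-specified model and sufficient optimization. It is then used as a hypothesis of Theorem~\ref{thm:main32}. Your reading of it as an assumption therefore agrees with the paper; the problems are in the derivation you add.

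\textbf{The reference is not policy-independent.} The statement needs one $p(z\mid M)$ that serves every $\pi$, and that policy-independence is the whole source of the behavior invariance used downstream. Your reference is the pushforward of a latent history containing the behavior actions, and, through the $e_t$, the $\pi$-dependent state visitation. So it is really $p(z\mid M,\pi)$, or even $p(z\mid M,a_{1:T})$. Your remark that the policy ``appears in both distributions only through the same action sequence'' concedes this. The action factors in Step 2 also cancel only if $Q$ is built with the data-side kernel $P(a_t\mid e_{1:t},a_{1:t-1})$, since $\pi$ acts on $s_t$ and not on the latent past. That ties the reference to $\pi$ even more tightly. Data processing then compares $\bar q$ only with a reference that moves with $\pi$, which gives no invariance.

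\textbf{Steps 1 and 2--3 use different references.} If, as Step 1 says, the next-step latents come from the true kernel $T_M$ through the encoder, the reference is $\bar q$ itself. Then $\mathrm{KL}(\bar q\|p)=0$ and $\mathcal L_{\mathrm{dyn}}$ never enters; it enters only if the reference is the model-prior rollout. Separately, the equality in Step 2 should be ``$\le$'' by joint convexity: given the latent past, the data law of $e_{t+1}$ is a mixture of $q_\phi(e_{t+1}\mid o_{t+1})$ over $o_{t+1}$.

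\textbf{The leftover term $I_q(z;\tau\mid M)$ is not controlled by $\mathcal L_{\mathrm{dyn}}$, and cannot be.}
\begin{itemize}
\item $\mathcal L_{\mathrm{dyn}}$ involves only the encoder, the causal states $h_{1:T}$ and the prior head. The appended query token and the projector $q_\phi(z\mid h_z)$ receive no signal from it.
\item Since $h_t$ must carry $a_t$ to predict $e_{t+1}$, a task head that reads off, say, the segment's mean action costs nothing in $\mathcal L_{\mathrm{dyn}}$.
\item Two policies with well-separated mean actions then induce nearly disjoint laws of $z$. By Pinsker, at least one of them has expected KL to any common $p(z\mid M)$ bounded away from zero.
\item Your Step 3 bound survives this example only because your reference carries the actions too. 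So no argument from $\mathcal L_{\mathrm{dyn}}$ alone can give the statement.
\end{itemize}

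\textbf{Your fallback does all the work.} The assumption $I_q(z;\tau\mid M)\le\delta$ already yields the conclusion of Theorem~\ref{thm:main32} directly by data processing, $I(Z;X_b\mid M)\le I(Z;\tau\mid M)\le\delta$. So $\mathcal L_{\mathrm{dyn}}$ does no work in the chain. The FOCAL intra-task term does not help either. It bounds the dispersion of embeddings, not mutual information, unless you also assume a variance floor on $q_\phi(z\mid h_z)$, and in any case it is not $\mathcal L_{\mathrm{dyn}}$.

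What is missing is an assumption on the task head itself. For example, assume $q_\phi(z\mid\tau)$ depends on $\tau$ only through a statistic whose law given $M$ is the same for every $\pi$. Stating such a hypothesis is, in effect, what the paper does.
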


\begin{theorem}[Dynamics-induced behavior invariance]
\label{thm:main32}
Consider the task latent $Z$ inferred by the world model task token.
Under standard posterior--prior calibration of the dynamics KL,
if $\mathcal L_{\mathrm{dyn}}\le \varepsilon$, then
$I(Z;X_b\mid M)\le \varepsilon$.

\end{theorem}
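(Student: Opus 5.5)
The plan is to bound $I(Z;X_b\mid M)$ by an expected KL divergence to a task-only reference distribution, and then to identify that KL with the calibration quantity in the preceding Proposition. The key tool is the variational characterization of conditional mutual information: for any conditional distribution $r(z\mid m)$,
\begin{equation*}
I(Z;X_b\mid M)=\mathbb{E}_{M,X_b}\big[\mathrm{KL}\big(p(z\mid X_b,M)\,\|\,p(z\mid M)\big)\big]\le \mathbb{E}_{M,X_b}\big[\mathrm{KL}\big(p(z\mid X_b,M)\,\|\,r(z\mid M)\big)\big].
\end{equation*}
The inequality holds because the gap is exactly $\mathbb{E}_M[\mathrm{KL}(p(z\mid M)\|r(z\mid M))]\ge 0$. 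I will take $r(z\mid M)=p(z\mid M)$, the task-level prior from the Proposition, so that the right-hand side depends only on how far the encoder posterior drifts from a behavior-independent target.

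The first step is to express $p(z\mid X_b,M)$ through the encoder. The behavior-related variables $X_b$ are a function of the trajectory $\tau$ (the pairs $(e_t,a_t)$), and $\tau\sim p(\tau\mid M,\pi)$. Therefore
\begin{equation*}
p(z\mid X_b,M)=\mathbb{E}_{\tau\mid X_b,M}\big[q_\phi(z\mid\tau)\big],
\end{equation*}
a mixture of encoder posteriors. By joint convexity of KL,
\begin{equation*}
\mathrm{KL}\big(p(z\mid X_b,M)\,\|\,p(z\mid M)\big)\le \mathbb{E}_{\tau\mid X_b,M}\big[\mathrm{KL}\big(q_\phi(z\mid\tau)\,\|\,p(z\mid M)\big)\big].
\end{equation*}
Taking the outer expectation over $X_b$ and over the behavior policy $\pi$ gives
\begin{equation*}
I(Z;X_b\mid M)\le \mathbb{E}_{M,\pi}\,\mathbb{E}_{\tau\sim p(\tau\mid M,\pi)}\big[\mathrm{KL}\big(q_\phi(z\mid\tau)\,\|\,p(z\mid M)\big)\big].
\end{equation*}
By the Proposition, this is at most $\varepsilon$ for each $(M,\pi)$, and hence also on average.

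The remaining link is the hypothesis $\mathcal{L}_{\mathrm{dyn}}\le\varepsilon$, which is what makes the Proposition's calibration bound hold with this same $\varepsilon$. This is the main obstacle: $\mathcal{L}_{\mathrm{dyn}}$ is a KL between the latent-state posterior and the Transformer prior over $e_{t+1}$, not a KL over $z$. I will treat ``standard posterior--prior calibration'' as the assumption that the task token $z=q_\phi(\cdot\mid h_z)$ is obtained from the same Transformer whose predictions are scored by $\mathcal{L}_{\mathrm{dyn}}$. Under that assumption the Proposition transfers the bound, so its $\varepsilon$ is controlled by $\mathcal{L}_{\mathrm{dyn}}$.

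To justify the transfer I will use the data-processing inequality. The map from the history to $z$ goes through $h_z$, which is built from the same sequence model $f_\phi$. So whatever discrepancy exists between the behavior-conditioned and behavior-marginalized predictive distributions at the level of $h_t$ is not increased by the projection to $z$. If this identification cannot be made rigorous, I will state it explicitly as the calibration assumption the theorem invokes and rely on the Proposition directly. The conclusion $I(Z;X_b\mid M)\le\varepsilon$ then follows from the chain of inequalities above.
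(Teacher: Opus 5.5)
Your argument is essentially the paper's. The paper first uses data processing ($X_b$ is a function of $\tau$) to get $I(Z;X_b\mid M)\le I(Z;\tau\mid M)$. It then writes the latter as $\mathbb{E}_{\tau}[\mathrm{KL}(q_\phi(z\mid\tau)\,\|\,p(z\mid M))]$ minus the nonnegative term $\mathrm{KL}(q_\phi(z\mid M)\,\|\,p(z\mid M))$. You instead apply the same variational identity directly at the level of $X_b$ and let convexity of KL in its first argument do the work of the data-processing step. Both chains end at the calibration quantity, which is at most $\varepsilon$. In your first display, keep the true conditional marginal of $Z$ given $M$ notationally distinct from the Proposition's reference $p(z\mid M)$.

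You should drop the attempted data-processing justification of the $\mathcal{L}_{\mathrm{dyn}}\Rightarrow$ calibration link, because it cannot work. $\mathcal{L}_{\mathrm{dyn}}$ compares two distributions over $e_{t+1}$, and its value is unchanged by any modification of the task projector $q_\phi(z\mid h_z)$. It therefore cannot by itself control a divergence of $q_\phi(z\mid\tau)$. The paper, exactly like your fallback, simply takes that link as part of the calibration Proposition.
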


\begin{theorem}[Behavior-policy robustness of task representation]
\label{thm:main33}
Under the same condition, for any two behavior policies $\pi,\pi'$ on task $M$,
the induced representation distributions satisfy
$\mathrm{TV}\!\big(p(Z\mid M,\pi),\,p(Z\mid M,\pi')\big)\le \sqrt{2\varepsilon}$.
\end{theorem}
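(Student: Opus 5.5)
The plan is to compare each conditional law $p(Z\mid M,\pi)$ with the common reference $p(z\mid M)$ from the Proposition, and then combine Pinsker's inequality with the triangle inequality for total variation. That reference depends on $M$ but not on the behavior policy, which is what makes it useful here.

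\textbf{Step 1.} Since $Z$ is drawn through $q_\phi(z\mid\tau)$ with $\tau\sim p(\tau\mid M,\pi)$, the induced representation law is the mixture
\begin{equation*}
p(Z\mid M,\pi)=\mathbb{E}_{\tau\sim p(\tau\mid M,\pi)}\big[q_\phi(\cdot\mid\tau)\big].
\end{equation*}

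\textbf{Step 2.} KL divergence is jointly convex, and the second argument $p(z\mid M)$ is fixed. Jensen's inequality together with the Proposition therefore gives
\begin{equation*}
\mathrm{KL}\big(p(Z\mid M,\pi)\,\|\,p(z\mid M)\big)\le \mathbb{E}_{\tau}\big[\mathrm{KL}(q_\phi(z\mid\tau)\,\|\,p(z\mid M))\big]\le\varepsilon .
\end{equation*}
The same bound holds for $\pi'$. This is where the hypothesis $\mathcal L_{\mathrm{dyn}}\le\varepsilon$ enters, through the calibration property stated in the Proposition (the ``same condition'' as in Theorem~\ref{thm:main32}).

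\textbf{Step 3.} Pinsker's inequality then yields
\begin{equation*}
\mathrm{TV}\big(p(Z\mid M,\pi),p(z\mid M)\big)\le\sqrt{\varepsilon/2},
\end{equation*}
and likewise for $\pi'$. The triangle inequality gives
\begin{equation*}
\mathrm{TV}\big(p(Z\mid M,\pi),p(Z\mid M,\pi')\big)\le 2\sqrt{\varepsilon/2}=\sqrt{2\varepsilon},
\end{equation*}
which is exactly the claimed constant.

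\textbf{Main obstacle.} The step needing most care is Step 2. I must make sure the calibration bound applies uniformly to every behavior policy, including $\pi'$, and not only on average over the policies in the dataset. The Proposition is stated for each $M$ and each $\pi$, so I will invoke it pointwise. An alternative route goes through Theorem~\ref{thm:main32}: bound $I(Z;X_b\mid M)\le\varepsilon$ and then use the KL of each conditional to the marginal. That route only controls an average over policies, so I prefer the direct reference-measure argument above.
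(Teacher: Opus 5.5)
Your proof is correct and reaches the same constant $\sqrt{2\varepsilon}$, but it takes a different route from the paper.

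\textbf{The paper's route.} The paper does not use $p(z\mid M)$ as an anchor. It introduces a uniform binary variable $\Pi$ on $\{\pi,\pi'\}$. It then bounds $I(Z;\Pi\mid M)\le I(Z;X_b\mid M)\le\varepsilon$ via the Markov chain $\pi\to X_b\to\tau\to Z$ and Theorem~\ref{thm:main32}. Next it identifies $I(Z;\Pi\mid M)$ with the Jensen--Shannon divergence between the two conditionals. Finally it applies Pinsker against the midpoint $\bar P=\tfrac12(p(Z\mid M,\pi)+p(Z\mid M,\pi'))$, with Cauchy--Schwarz, to get $\mathrm{TV}\le\sqrt{2I(Z;\Pi\mid M)}$.

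\textbf{How the two routes relate.} Both start from the identity $\mathbb{E}_\tau[\mathrm{KL}(q_\phi(z\mid\tau)\,\|\,p(z\mid M))]=I(Z;\tau\mid M)+\mathrm{KL}(q_\phi(z\mid M)\,\|\,p(z\mid M))$. The paper keeps the mutual-information term; your Jensen step keeps the aggregated-posterior term.

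\textbf{What your route buys.} It is shorter and uses only that $\pi$ affects $Z$ through $\tau$. It is also manifestly pointwise in $(\pi,\pi')$. So it avoids the paper's loosely justified step $I(Z;\Pi\mid M)\le I(Z;\pi\mid M)$, which holds only if the mutual-information bound is taken under the uniform two-point prior on $\{\pi,\pi'\}$.

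\textbf{What the paper's route buys.} Its hypothesis is phrased purely in terms of the lesser causality, with no reference distribution. It also needs only the \emph{average} of $\mathrm{KL}(p(Z\mid M,\pi)\,\|\,\bar P)$ and $\mathrm{KL}(p(Z\mid M,\pi')\,\|\,\bar P)$ to be small, rather than each KL to a fixed $p(z\mid M)$.

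For that reason, your remark that the mutual-information route ``only controls an average over policies'' is too pessimistic. With the uniform binary $\Pi$, that average is exactly the pairwise Jensen--Shannon divergence. Under pointwise calibration it is at most $\varepsilon$, because $\tfrac12\mathrm{KL}(P\,\|\,r)+\tfrac12\mathrm{KL}(Q\,\|\,r)$ is minimized over references $r$ at $r=\bar P$. Your Step 2 already bounds it by $\varepsilon$ at $r=p(z\mid M)$.
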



All proofs are provided in Appendix~\ref{proof1}.

\section{Experiments}
\begin{figure*}[ht]
  \begin{center}
    \centerline{\includegraphics[width=\textwidth]{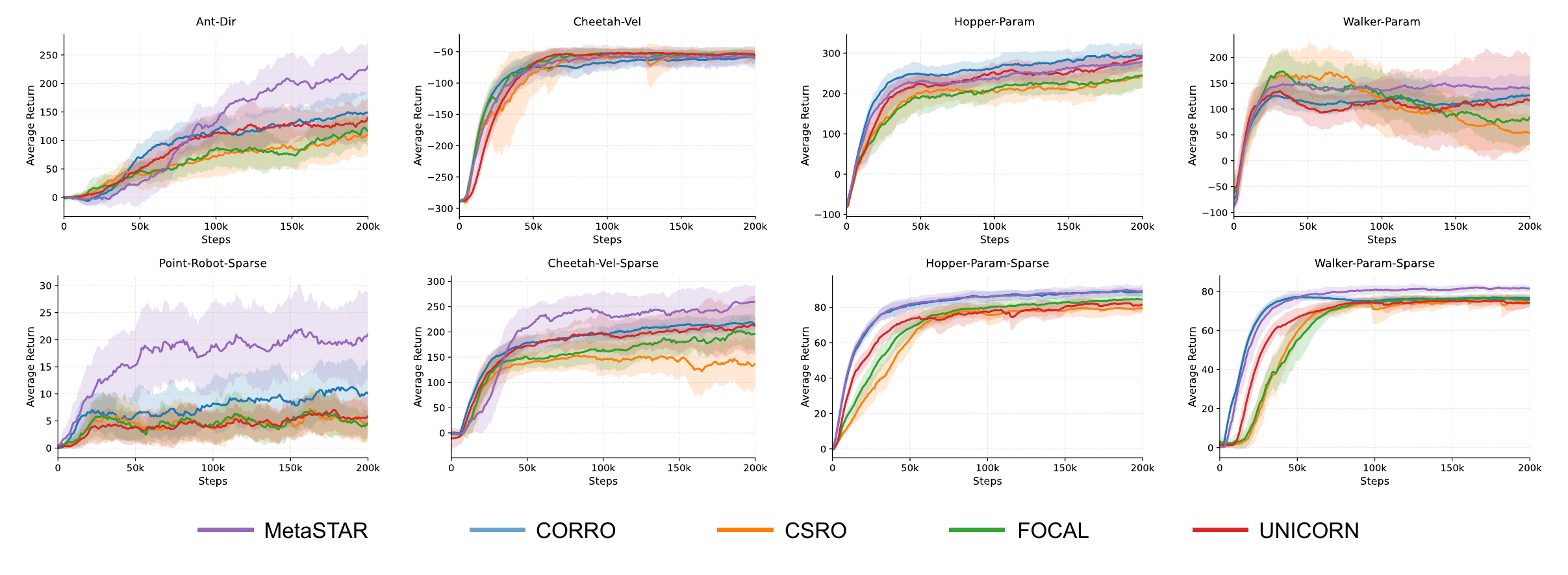}}
    \caption{Average online meta-testing performance on 4 dense-reward environments and 4 sparse-reward environments.}
    \label{online_learning_curve}
  \end{center}
  \vspace{-20pt}
\end{figure*}
In this section, we evaluate MetaSTAR on multiple dense-reward and challenging sparse-reward tasks, aiming to address the following three core questions: 1. Does MetaSTAR outperform existing state-of-the-art (SOTA) methods in standard offline meta-reinforcement learning benchmarks, especially in sparse reward settings? 2. Can MetaSTAR maintain effective adaptation capabilities when the distribution of test tasks significantly differs from the training data distribution? 3. Can the proposed behavior-invariant representation learning framework with transformer-based world models effectively decouple task dynamics from behavior policies and generate a structured latent space?

\subsection{Experimental Settings}

\textbf{Environments and Baselines.} The experimental environments include Point-Robot-Sparse and continuous control tasks based on the MuJoCo \cite{todorov2012mujoco} physics engine, covering Ant-Dir, Cheetah-Vel, Hopper-Param, Walker-Param, and the sparse-reward variants with the exception of Ant-Dir (see Appendix~\ref{environment_details} for details). We compare MetaSTAR against four representative context-based offline meta-reinforcement learning baseline methods: \textbf{FOCAL} \cite{li2020focal}, \textbf{CORRO} \cite{yuan2022robust}, \textbf{CSRO} \cite{gao2023context} and \textbf{UNICORN} \cite{li2024towards}. Details of these baselines are provided in Appendix~\ref{comrl}.

\textbf{Data Collection.} For each environment, we randomly sample 40 different tasks, partitioning them into a set of 30 tasks for offline meta-training, and the remaining 10 tasks for testing. For each task $M_i$, the offline dataset is constructed from the replay buffer accumulated throughout the training of an independently trained SAC agent~\cite{haarnoja2018soft}. Thus, each task-specific buffer contains mixed-proficiency transitions induced by a non-stationary sequence of behavior policies, denoted as $\{\pi_{\beta_i}^n\}_{n=1}^{N_{\text{update}}}$, from early exploration to more proficient policies. To evaluate the robustness of MetaSTAR in \textbf{out-of-distribution (OOD)} scenarios, we divided the data of four environments (Ant-Dir, Point-Robot-Sparse, Cheetah-Vel, and Cheetah-Vel-Sparse) into in-distribution (ID) and OOD tasks based on the task goals (see Table~\ref{tab:env_settings} in Appendix~\ref{environment_details}).

\textbf{Evaluation Protocols.} In the meta-testing phase, we employ two evaluation protocols: offline test and online test. Offline test is an idealized but impractical evaluation method. It directly uses pre-collected offline data as the context, thus ignoring the context shift problem, and usually achieves higher performance. In contrast, online test involves the agent collecting context data through online interactions with the test environment using its current exploration policy. As this represents a more practical evaluation protocol, we place greater emphasis on MetaSTAR's performance during \textbf{online testing}. The results are averaged over 3 random seeds. In the tables, the best result is \textbf{bolded}, and the second-best result is \underline{underlined}.

\subsection{Main Results on Meta-Testing} 
\begin{table*}[htbp]
\caption{Average meta-testing returns of MetaSTAR against other baselines.}
\label{averge_testing_returns}
\centering
\scriptsize 
\setlength{\tabcolsep}{3.7pt} 
\begin{tabular}{lcccccccccc}
\toprule
\multirow{2}{*}{Environment} & \multicolumn{2}{c}{CORRO} & \multicolumn{2}{c}{CSRO} & \multicolumn{2}{c}{FOCAL} & \multicolumn{2}{c}{UNICORN} & \multicolumn{2}{c}{MetaSTAR} \\
\cmidrule(lr){2-3} \cmidrule(lr){4-5} \cmidrule(lr){6-7} \cmidrule(lr){8-9} \cmidrule(lr){10-11}
 & Offline & Online & Offline & Online & Offline & Online & Offline & Online & Offline & Online \\
\midrule
Ant-Dir & $183\pm41$ & $\underline{149\pm39}$ & $141\pm42$ & $109\pm33$ & $194\pm42$ & $115\pm27$ & $\underline{210\pm40}$ & $140\pm36$ & $\mathbf{236\pm48}$ & $\mathbf{230\pm42}$  \\
Cheetah-Vel & $-34\pm3$ & $-60\pm10$ & $-38\pm10$ & $\underline{-57\pm14}$ & $-36\pm6$ & $-57\pm7$ & $\mathbf{-32\pm3}$ & $\mathbf{-56\pm10}$ & $\underline{-33\pm4}$ & $-60\pm19$ \\
Hopper-Param & $\mathbf{300\pm29}$ & $\mathbf{294\pm28}$ & $236\pm22$ & $242\pm31$ & $253\pm35$ & $245\pm30$ & $\underline{296\pm24}$ & $\underline{289\pm22}$ & $276\pm24$ & $279\pm25$ \\
Walker-Param & $\underline{126\pm25}$ & $\underline{127\pm22}$ & $28\pm33$ & $54\pm34$ & $116\pm31$ & $84\pm50$ & $103\pm30$ & $117\pm86$ & $\mathbf{140\pm24}$ & $\mathbf{139\pm27}$ \\
\midrule
Point-Robot-Sparse  & $38\pm7$ & $\underline{10\pm6}$ & $13\pm7$ & $4\pm3$ & $28\pm5$ & $5\pm3$ & $\underline{40\pm5}$ & $6\pm3$ & $\mathbf{42\pm5}$ &  $\mathbf{21\pm8}$ \\
Cheetah-Vel-Sparse & $\underline{260\pm13}$ & $\underline{216\pm15}$ & $169\pm59$ & $137\pm46$ & $242\pm18$ & $196\pm31$ & $238\pm14$ & $212\pm60$ & $\mathbf{296\pm19}$ & $\mathbf{260\pm34}$ \\
Hopper-Param-Sparse & $\underline{89\pm2}$ & $\underline{89\pm2}$ & $83\pm4$ & $79\pm3$ & $88\pm3$ & $84\pm4$ & $85\pm3$ & $81\pm4$ & $\mathbf{90\pm4}$ & $\mathbf{89\pm3}$ \\
Walker-Param-Sparse & $\underline{78\pm1}$ & $\underline{76\pm1}$ & $75\pm4$ & $75\pm3$ & $76\pm3$ & $76\pm2$& $75\pm5$ & $74\pm2$ & $\mathbf{81\pm2}$ & $\mathbf{82\pm2}$ \\

\bottomrule
\end{tabular}
\vspace{-10pt}
\end{table*}

Table~\ref{averge_testing_returns} and Figure~\ref{online_learning_curve} present the average returns of MetaSTAR compared to baseline methods in multiple environments. The experimental results show that MetaSTAR achieves superior performance in the vast majority of tasks. In the two dense-reward tasks of Ant-Dir and Walker-Param, the offline and online testing returns of MetaSTAR are significantly higher than those of the baselines. Notably, in Ant-Dir, MetaSTAR achieves an online return of $230 \pm 42$, while the suboptimal baseline CORRO was only $149 \pm 39$. The advantage of MetaSTAR is even more pronounced in \textbf{sparse-reward environments}, where baseline methods frequently succumb to pattern collapse. Particularly in Cheetah-Vel-Sparse and Point-Robot-Sparse, MetaSTAR achieves high returns of $296 \pm 19$ and $21 \pm 8$ respectively, far exceeding the closest baseline CORRO ($216 \pm 15$ and $10 \pm 6$). This validates that by maximizing the primary causality, MetaSTAR can extract features intrinsic to the task dynamics from the sparse rewards, rather than overfitting to the behavior-induced correlations in the offline data. This indicates that MetaSTAR can effectively mitigates the context distribution shift and achieves more robust online adaptation.

\begin{table*}[htbp]
\caption{Average online adaptation performance for the final episode.}
\label{averge_online_adaptation}
\centering
\scriptsize 
\setlength{\tabcolsep}{3.7pt} 
\begin{tabular}{lcccccccccc}
\toprule
\multirow{2}{*}{Environment} & \multicolumn{2}{c}{CORRO} & \multicolumn{2}{c}{CSRO} & \multicolumn{2}{c}{FOCAL} & \multicolumn{2}{c}{UNICORN} & \multicolumn{2}{c}{MetaSTAR} \\
\cmidrule(lr){2-3} \cmidrule(lr){4-5} \cmidrule(lr){6-7} \cmidrule(lr){8-9} \cmidrule(lr){10-11}
 & ID & OOD & ID & OOD & ID & OOD & ID & OOD & ID & OOD \\
\midrule
Ant-Dir & $136\pm34$ & $-40\pm20$ & $143\pm30$ & $-7\pm14$ & $75\pm16$ & $0\pm10$ & $\underline{200\pm24}$ & $\underline{1\pm10}$ & $\mathbf{269\pm24}$ & $\mathbf{59\pm19}$ \\
Cheetah-Vel & $-76\pm13$ & $-214\pm1$ & $-223\pm17$ & $-287\pm3$ & $-164\pm45$ & $-115\pm10$ & $\underline{-68\pm1}$ & $\underline{-110\pm7}$ & $\mathbf{-53\pm1}$ & $\mathbf{-107\pm21}$ \\
Hopper-Param  & $\mathbf{302\pm19}$ & / & $257\pm10$ & / & $259\pm19$ & / & $259\pm16$ & / & $\underline{270\pm12}$ & / \\
Walker-Param  & $\underline{119\pm13}$ & / & $63\pm17$ & / & $53\pm4$ & / & $47\pm6$ & / & $\mathbf{158\pm6}$ & /  \\
\midrule
Point-Robot-Sparse & $\underline{17\pm5}$ & $\mathbf{10\pm3}$ & $2\pm2$ & $\underline{8\pm5}$ & $5\pm2$ & $7\pm5$ & $4\pm3$ & $6\pm4$ & $\mathbf{36\pm5}$ & $7\pm2$ \\
Cheetah-Vel-Sparse & $214\pm26$ & $18\pm1$ & $183\pm8$ & $96\pm20$ & $190\pm15$ & $119\pm30$ & $\underline{240\pm19}$ & $\underline{135\pm21}$ & $\mathbf{269\pm15}$ & $\mathbf{142\pm7}$ \\
Hopper-Param-Sparse & $75\pm1$ & / & $\underline{77\pm1}$ & / & $76\pm1$ & / & $74\pm1$ & / & $\mathbf{81\pm1}$ & / \\
Walker-Param-Sparse & $\underline{88\pm1}$ & / & $70\pm1$ & / & $87\pm2$ & / & $80\pm1$ & / & $\mathbf{89\pm1}$ & /  \\

\bottomrule
\end{tabular}
\vspace{-10pt}
\end{table*}

\subsection{Out-of-Distribution Adaptation} 
\begin{figure}[ht]
  \vspace{-5pt}
  \begin{center}
    \centerline{\includegraphics[width=\columnwidth]{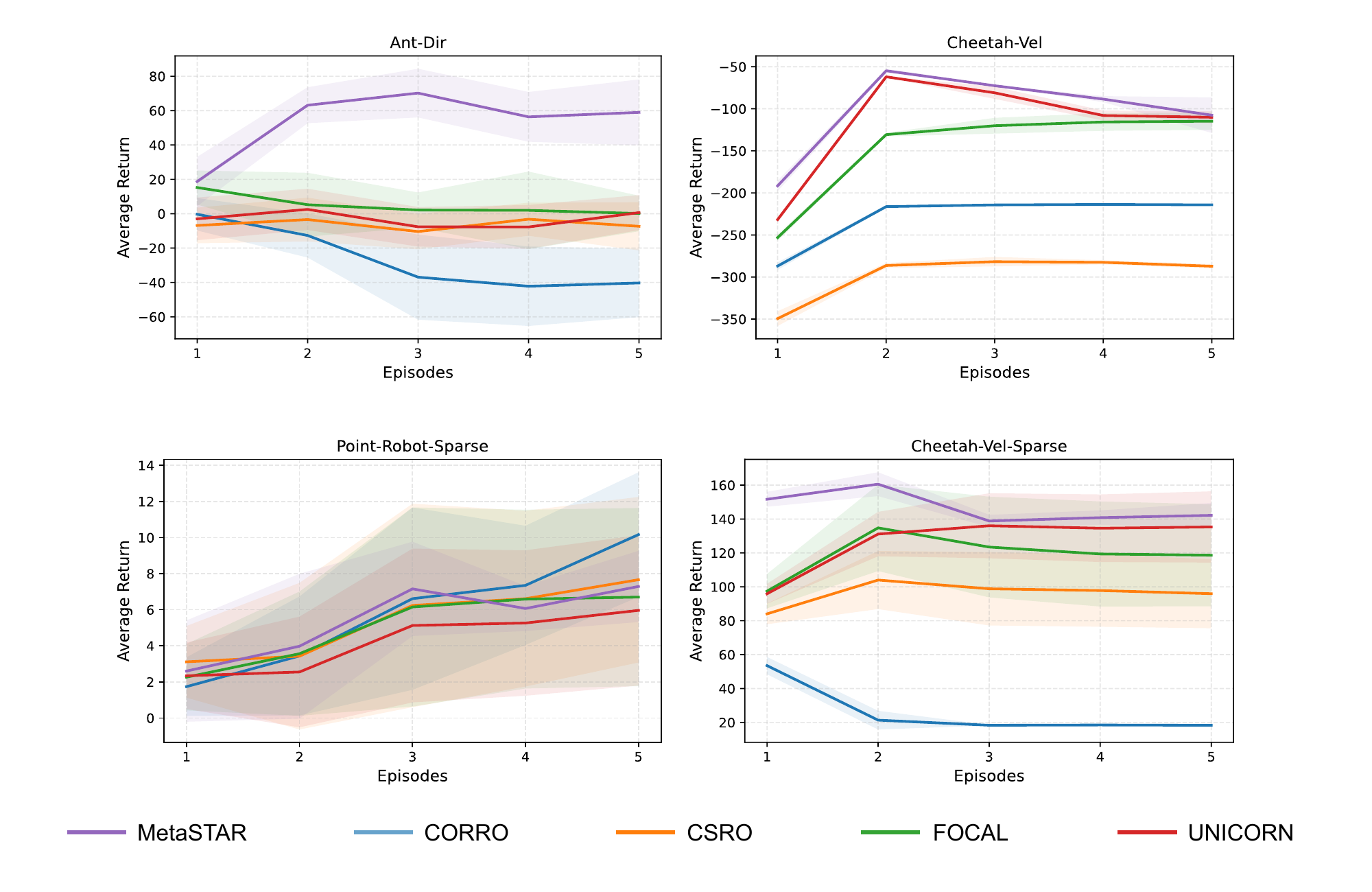}}
    \caption{Average online adaptation performance of the first five episodes on \textbf{out-of-distribution} tasks.}
    \label{ood_episodes_return}
  \end{center}
  \vspace{-22pt}
\end{figure}

To address the second question, we evaluate MetaSTAR under out-of-distribution online adaptation, where policy distribution shift becomes more challenging. Table~\ref{averge_online_adaptation} details the final episode performance of each baseline in both ID and OOD scenarios for online adaptation. The relevant adaptation curves are presented in Figure~\ref{ood_episodes_return} (OOD) and Figure~\ref{episodes_return} (ID). From Figure~\ref{ood_episodes_return}, we can observe that in the OOD setting, MetaSTAR rapidly attains high performance within the first episode of OOD exploration and maintains stability thereafter. This indicates its ability to rapidly infer task-relevant information with limited online interaction. This success stems from the synergy between our conservative value regularization and the imagination rollouts of the world model, which enables the policy to explore safely when encountering unknown states, thereby preventing policy collapse due to overestimation of value. Conversely, while baselines like CORRO perform adequately in ID settings, their performance deteriorates significantly in OOD scenarios. 
These results suggest that MetaSTAR maintains stronger online adaptation ability under task distribution shifts, rather than overfitting to the offline training distribution.

\subsection{Visualization of Task Representations}
\begin{figure}[ht]
  \vspace{-9pt}
  \begin{center}
    \centerline{\includegraphics[width=\columnwidth]{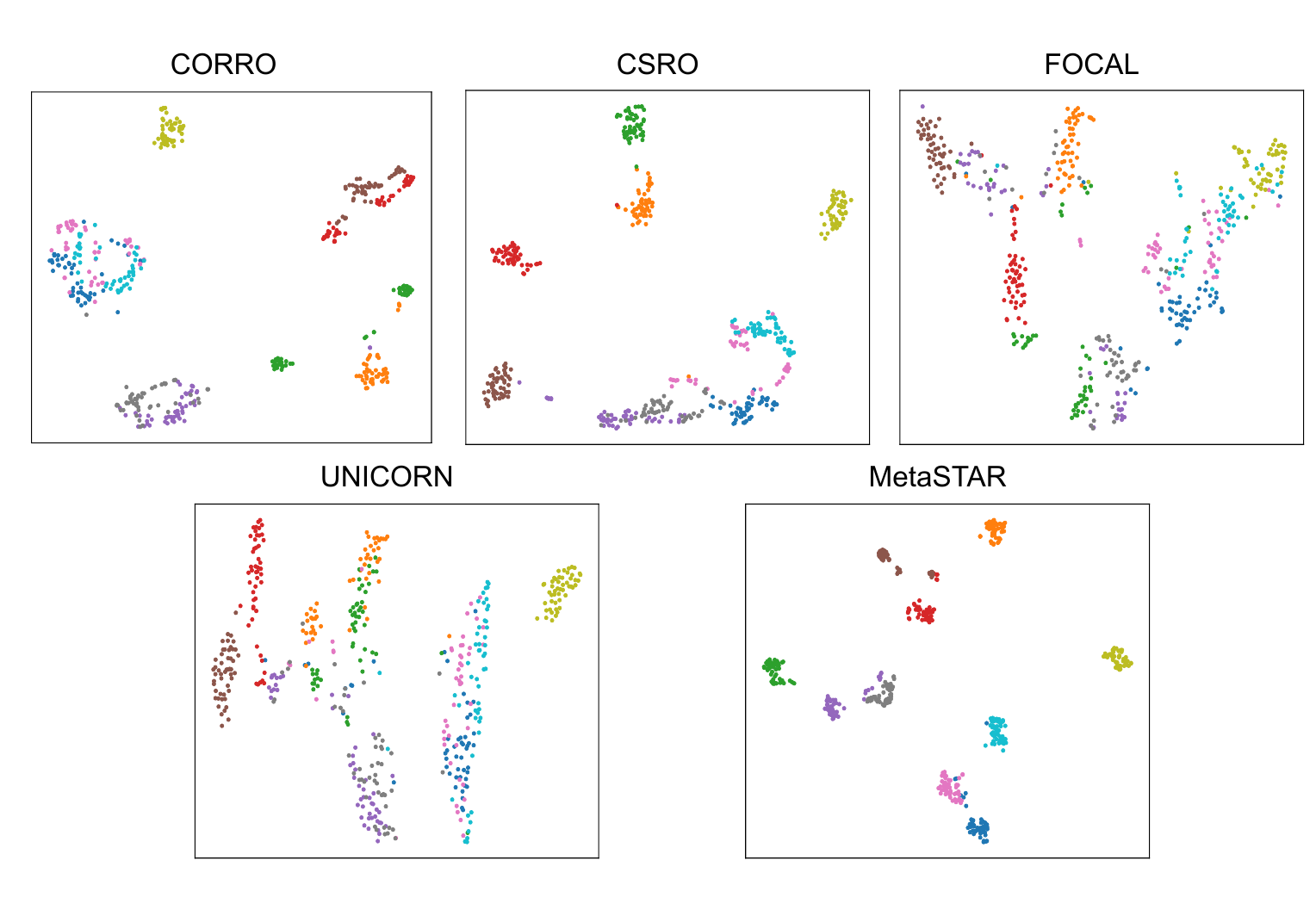}}
    \caption{T-SNE visualization of the learned task representation space in Hopper-Param.}
    \label{tsne_main}
  \end{center}
  \vspace{-20pt}
\end{figure}
To qualitatively examine the property of \textbf{behavior invariance}, we used t-SNE \cite{maaten2008visualizing} to visualize the learned task representations. Figure~\ref{tsne_main} shows the latent space structures of different methods in Hopper-Param. As clearly shown in this figure, the task representations generated by MetaSTAR exhibit highly structured clustering characteristics. Samples from the same task are closely clustered, while the boundaries between different tasks are distinct. In contrast, the representations of other baselines are more disordered and even overlap between different tasks. This suggests that their task representations are susceptible to the influence of behavior. This support our theoretical analysis that the Transformer-based world model successfully extracts the behavior-invariant task representations. Additional t-SNE visualizations are provided in Appendix~\ref{app:tsne_visualization}, including dense-reward environments in Figure~\ref{dense_tsne} and sparse-reward environments in Figure~\ref{sparse_tsne}.

\subsection{Euclidean Distance of Task Representations}
\begin{figure}[ht]
  \vspace{-7pt}
  \begin{center}
    \centerline{\includegraphics[width=\columnwidth]{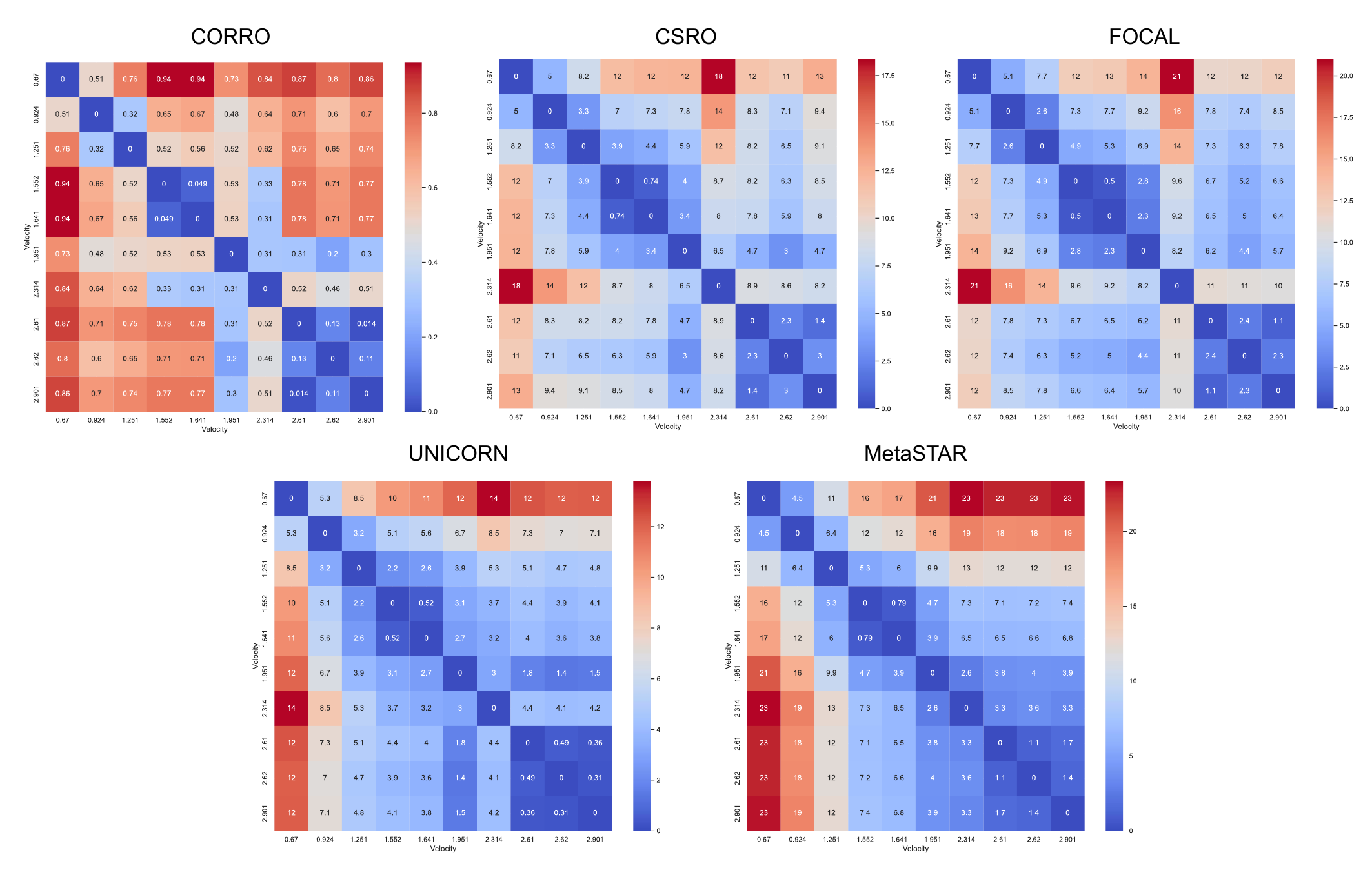}}
    \caption{Euclidean distance of task representations on Cheetah-Vel-Sparse.}
    \label{dist_matrix_cheetah_vel_sparse}
  \end{center}
  \vspace{-20pt}
\end{figure}

Beyond t-SNE visualization, we further analyze whether the learned representation space preserves the semantic relationships among unseen tasks. Specifically, we compute the pairwise Euclidean distances between task embeddings and compare the resulting distance patterns with the underlying task parameters. Figure~\ref{dist_matrix_cheetah_vel_sparse} shows the distance matrices on Cheetah-Vel-Sparse, where the task goal is the target velocity sampled from [0.0, 3.0].
MetaSTAR exhibits a clear locally smooth structure: tasks with similar target velocities are mapped closer in the latent space, while tasks with larger velocity gaps tend to have larger representation distances. This trend remains visible despite the sparse-reward setting, indicating that MetaSTAR can extract task-relevant structure from limited feedback. In contrast, the baseline methods show weaker or less consistent distance patterns, suggesting that their representations are less aligned with the underlying task semantics. These results complement the t-SNE visualizations and provide further evidence that MetaSTAR encourages task representations to capture task-relevant structure while reducing behavior-induced variations. Additional Euclidean distance analyses on Cheetah-Vel and Point-Robot-Sparse are provided in Appendix~\ref{app:euclidean_distance}.

\subsection{Ablation Study}
We conduct ablation studies to examine the contribution of the world-model objective $\mathcal{L}_{\mathrm{WM}}$ and conservative policy optimization. Figure~\ref{ablation_online} and Figure~\ref{ablation_offline} compare the full MetaSTAR with its ablated variants under online and offline testing. Removing $\mathcal{L}_{\mathrm{WM}}$ generally degrades performance, confirming the importance of world-model-based representation learning for task inference and cross-task generalization. Removing conservative policy optimization also hurts performance in many settings, indicating its role in reducing value overestimation and limiting model exploitation during imagination-based policy learning. When both components are removed, the model becomes a simplified metric-learning variant without latent dynamics learning or conservative imagination-based policy optimization, leading to the largest degradation, especially in sparse-reward environments. These results show that the two components are complementary: $\mathcal{L}_{\mathrm{WM}}$ improves task representation under context distribution shift, while conservative policy optimization stabilizes policy learning under policy distribution shift. We also observe task-dependent effects; for example, on Hopper-Param, removing conservative policy optimization slightly improves performance, possibly because dense rewards and informative offline trajectories reduce the need for imagined data, while model rollouts may introduce additional bias.

\section{Related Works}

\textbf{Task Representation Learning in Meta-RL.} Learning compact and expressive task representations is central to meta-RL, as it enables agents to extract task-relevant information and rapidly adapt corresponding tasks~\cite{beck2025tutorial}. Prior works explore different representation methods in meta-RL, including reconstruction-based inference \cite{zintgrafvaribad,zintgraf2021exploration}, distance metric learning~\cite{li2020focal}, contrastive learning \cite{yuan2022robust,choshen2023contrabar} and bisimulation metrics \cite{zhanglearning2,zang2023understanding,zhanglearning}.

\textbf{Information-theoretic Perspectives on Task Inference.} Information-theoretic and causal perspectives have been widely used to study representation learning, invariance, and generalization~\cite{scholkopf2021toward,mu2022domino,wang2022causal,yu2024skill}. In offline meta-RL, CORRO maximizes task-related information with contrastive learning over transition tuples~\cite{yuan2022robust}, while CSRO reduces context shift by suppressing policy-related information in task representations~\cite{gao2023context}. UNICORN further provides a unified information-theoretic framework that interprets FOCAL, CORRO, and CSRO as different approximations of task information~\cite{li2024towards}. BATI performs behavior-agnostic task inference through dynamics-based likelihood~\cite{ma2025behavior}, while recent work analyzes the impact of task representation shift during encoder-policy optimization~\cite{zhang2025scrutinize}.

\textbf{World Models.}
World models learn stochastic latent dynamics and enable imagination-based planning under partial observability~\citep{hafner2019learning,hafner2020dream,hafner2021mastering,hafner2025mastering,hafner2025training}.
Transformer-based world models further improve long-range dependency modeling and complex dynamics prediction~\citep{micheli2022transformers,robine2023transformer,zhang2023storm,chen2022transdreamer,burchi2025learning}.
In meta-RL, MAMBA directly instantiates a Dreamer-style learning-and-imagination pipeline for task adaptation, and MuDreamer explores robustness by relaxing reconstruction-centric
training~\citep{rimon2024mamba,burchi2024mudreamer}.

\section{Conclusions}

We present MetaSTAR, a novel offline meta-RL framework for robust offline-to-online adaptation under sparse-reward and out-of-distribution settings. MetaSTAR uses a stochastic Transformer-based world model to learn behavior-invariant task representations that capture task-relevant dynamics while reducing variations induced by heterogeneous behavior policies. It further combines contextual imagination with conservative policy optimization to leverage imagined transitions while mitigating value overestimation and model exploitation in unsupported regions. Our theoretical analysis and empirical results suggest that world-model-based representation learning provides an effective way to address context and policy distribution shifts in offline meta-RL. We believe MetaSTAR offers a promising direction for robust task adaptation and efficient transfer from offline datasets to online deployment in real-world environments.

\textbf{Limitation.} 
MetaSTAR introduces additional computational overhead compared with model-free offline meta-RL baselines, mainly due to stochastic Transformer-based world-model training and contextual imagination during policy optimization. Although this cost is incurred entirely during offline training and does not substantially affect test-time action inference, it may limit scalability to very large task collections or high-dimensional observation domains. A detailed training-time analysis is provided in Appendix~\ref{app:training_time}.

\section*{Acknowledgements}
This work was supported by Brain Science and Brain-like Intelligence Technology - National Science and Technology Major Project (2021ZD0200500), the National Natural Science Foundation of China (62472206, 3254100307), National Key R\&D Program of China (2025YFC3410000), Shenzhen Science and Technology Innovation Committee (RCYX20231211090405003, JCYJ20220818100213029), Guangdong Basic and Applied Basic Research Foundation (2026B1515020099), Guangdong S\&T Program (Grant No. 2026B0101110003), Shanghai Municipal Special Program for Basic Research on General AI Foundation Models (2025SHZDZX026D05), GuangDong Basic and Applied Basic Research Foundation (2025A1515011645 to ZC.L.), GuangDong Basic and Applied Basic Research Foundation (2026A1515010121 to XK.S.), Shenzhen Doctoral Startup Project (RCBS20231211090748082 to XK.S.), Shenzhen Loop Area Institute under grant FPF10120250012, and the open research fund of the Guangdong Provincial Key Laboratory of Mathematical and Neural Dynamical Systems, the Center for Computational Science and Engineering at Southern University of Science and Technology, Shenzhen Key Laboratory of Smart Healthcare Engineering.

\section*{Impact Statement}
This research aims to promote the development of offline meta-reinforcement learning. MetaSTAR learns behavior-invariant task representations from static datasets and employs a world model for planning within the latent space. This strategy effectively expands the support region of the offline datasets, and demonstrates a stronger generalization ability in sparse-reward and out-of-distribution scenarios. Therefore, our method can significantly reduce the need for high-risk and costly real-world exploration during the training stage. These advancements can lower the deployment costs and safety risks in fields such as industrial robots, healthcare, and autonomous driving, as conducting online trial-and-error is often not feasible in these domains.


\bibliography{example_paper}
\bibliographystyle{icml2026}

\newpage
\appendix
\onecolumn
\section{Theorems and Proofs}

\begin{theorem}[World model dynamics as primary causality]
\label{primary_causality_proof}
Given a stochastic world model with latent states $e_t$ and temporal context embeddings
$h_t = f_\phi(e_{1:t}, a_{1:t})$, $X_b=\{e_t,a_t\}$ denote behavior-related variables and
$X_t=\{e_{t+1}\}$ denote task-related variables.
Let $Z$ be a global task representation aggregated from $\{h_t\}$. 
Then optimizing the latent dynamics of the world model $\mathcal{L}_{\mathrm{dyn}}
=
\mathbb{E}\!\left[
\mathrm{KL}\!\left(
q_\phi(e_{t+1} \mid o_{t+1})
\,\|\, 
p_\phi(\hat e_{t+1} \mid h_t)
\right)
\right]$
 can be viewed as maximizing the primary causality $I(Z;X_t\mid X_b)$.

\end{theorem}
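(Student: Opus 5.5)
The argument is a standard variational (Barber--Agakov) lower bound on a conditional mutual information: minimizing the dynamics KL becomes maximizing a predictive log-likelihood, which lower-bounds $I(Z;X_t\mid X_b)$ up to a term independent of $\phi$.

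First, write the primary causality as
\[
I(Z;X_t\mid X_b)=H(X_t\mid X_b)-H(X_t\mid X_b,Z),
\]
where $X_t=e_{t+1}$ and $X_b=(e_t,a_t)$. The first term depends only on the (encoder-induced) data distribution and not on the sequence model or the task token. For the second term, any conditional density $p_\phi(e_{t+1}\mid X_b,Z)$ gives the Gibbs-inequality bound
\[
-H(X_t\mid X_b,Z)\ge \mathbb{E}\big[\log p_\phi(e_{t+1}\mid X_b,Z)\big].
\]
The gap in this inequality is $\mathbb{E}\,\mathrm{KL}\big(p(e_{t+1}\mid X_b,Z)\,\|\,p_\phi\big)$. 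Hence
\[
I(Z;X_t\mid X_b)\ge H(X_t\mid X_b)+\mathbb{E}\log p_\phi(e_{t+1}\mid X_b,Z).
\]

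Second, identify the variational decoder with the world model's prior head $p_\phi(\hat e_{t+1}\mid h_t)$. The justification is that $h_t=f_\phi(e_{1:t},a_{1:t})$ contains $X_b$ as its last input. The paper also states that $Z$ is aggregated from $\{h_t\}$ via the task token attending over the same history. So I will take as a modeling assumption that $h_t$ is (approximately) a sufficient summary of $(X_b,Z)$ for predicting $e_{t+1}$: whatever $Z$ encodes about the task that helps prediction is available in $h_t$ through the shared attention over the context. Under this assumption $p_\phi(\cdot\mid h_t)$ is a legitimate member of the variational family $p_\phi(\cdot\mid X_b,Z)$.

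Third, relate the bound to $\mathcal{L}_{\mathrm{dyn}}$. Since $e_{t+1}\sim q_\phi(e_{t+1}\mid o_{t+1})$ (with stop-gradient on the posterior in the dynamics term),
\[
\mathbb{E}_{q}\!\left[-\log p_\phi(e_{t+1}\mid h_t)\right]=\mathrm{KL}\big(q_\phi(e_{t+1}\mid o_{t+1})\,\|\,p_\phi(\hat e_{t+1}\mid h_t)\big)+\mathbb{H}\big(q_\phi(\cdot\mid o_{t+1})\big).
\]
With the posterior held fixed, the entropy term is constant in the parameters being updated. So decreasing $\mathcal{L}_{\mathrm{dyn}}$ increases $\mathbb{E}\log p_\phi(e_{t+1}\mid h_t)$, which raises the lower bound on $I(Z;X_t\mid X_b)$ and tightens its gap. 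This is the precise sense in which optimizing $\mathcal{L}_{\mathrm{dyn}}$ ``can be viewed as'' maximizing the primary causality. The free-bits clipping only makes this hold where the KL exceeds $1$, and I will note it as a technicality.

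The main obstacle is the second step: $h_t$ is not literally a function of $(X_b,Z)$, and $Z$ is computed from $\{h_t\}$ rather than conversely. The bound therefore holds for $Z$ only under the sufficiency assumption. My fallback is to state the result first for $\tilde Z=h_t$, where the bound is rigorous because $X_b\subset$ the inputs of $h_t$. I would then argue via the data-processing inequality and the FOCAL-driven consistency of $z$ across segments that $Z$ retains this predictive information. I would present this as the interpretive step the theorem's informal wording allows.
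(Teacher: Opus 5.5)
Your argument has the same skeleton as the paper's. You write the primary causality as a constant (fixed by the data and the stop-gradiented encoder) minus a conditional entropy. You upper-bound that entropy by the cross-entropy of the prior head $p_\phi(\hat e_{t+1}\mid h_t)$ via Gibbs' inequality. You then identify this cross-entropy with $\mathcal{L}_{\mathrm{dyn}}$ plus a frozen posterior entropy.

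The decomposition differs, and yours is the cleaner one. The paper passes from $I(h_t;X_t\mid X_b)$ to $I(X_t;h_t)$ by setting $I(X_t;X_b\mid h_t)=0$. Its only justification is that the predictor reads $h_t$ alone. That is a property of the model, not a conditional independence of the data distribution: for a constant $f_\phi$ the term equals $I(X_t;X_b)$. Your Barber--Agakov bound keeps $X_b$ in the conditioning and needs no such claim, since a decoder that ignores $X_b$ only loosens the bound. If you replace the paper's equality by $I(X_t;X_b\mid h_t)\ge 0$, its chain becomes exactly your inequality $I(h_t;X_t\mid X_b)\ge H(X_t\mid X_b)-\mathbb{E}[-\log p_\phi(e_{t+1}\mid h_t)]$. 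Your route is thus the rigorous version of the paper's computation, trading an ``equality up to a constant'' for a genuine lower bound.

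The weak point, which you share with the paper, is the passage from $h_t$ to $Z$. The paper simply asserts $I(Z;X_t\mid X_b)=I(h_t;X_t\mid X_b)$. You flag it, which is better, but both of your proposed justifications point the wrong way.

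First, for $p_\phi(\cdot\mid h_t)$ to be an admissible decoder in the upper bound on $H(X_t\mid X_b,Z)$, you need $h_t$ to be (approximately) a function of $(X_b,Z)$, or at least $H(X_t\mid X_b,Z)\le H(X_t\mid h_t)$. Your stated reason is that whatever $Z$ encodes is already available in $h_t$. That says $Z$ adds nothing to $h_t$, whereas the bound needs $(X_b,Z)$ to contain what $h_t$ knows about $e_{t+1}$.

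Second, because $Z$ is computed from $\{h_t\}$, data processing gives $I(Z;X_t\mid X_b)\le I(h_{1:T};X_t\mid X_b)$. That is an upper bound, so it cannot show that $Z$ retains the predictive information. The FOCAL term does not help either: it constrains only distances between segment embeddings, not information about $e_{t+1}$.

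In general neither variable is a function of the other. $Z$ is a low-dimensional bottleneck of the whole segment, and for $t<T$ that segment contains $e_{t+1}$ itself. So your argument actually establishes two things. It gives a rigorous bound for $\tilde Z=h_t$, which holds simply because a decoder depending on $h_t$ alone is admissible for the pair $(X_b,h_t)$. It gives the statement for $Z$ only under an explicitly assumed condition in the correct direction, such as $H(X_t\mid X_b,Z)\le H(X_t\mid h_t)$. That is the same level of rigor the paper reaches, but made transparent.
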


\begin{proof}
\label{proof1}
In world models, the observation $o_t=[s_t, r_{t-1}]$ is encoded into the stochastic latent states $e_t$. Therefore, the behavior-related components is $X_b = \{e_{t}, a_{t}\}$ and the task-related components is $X_t = \{e_{t+1}\}$. As $Z$ is global task representation distilled from $\{h_1, \dots, h_T\}$ via self-attention, maximizing information about the environment's physics $I(Z; X_t | X_b)$ is equivalent to maximizing the mutual information between the hidden state and the future latent state:
\begin{equation}
    I(Z; X_t | X_b) = I(h_t; X_t | X_b).
\end{equation}

Using the chain rule for mutual information:
\begin{equation}
    I(h_t; X_t | X_b) = I(X_t; h_t, X_b) - I(X_t; X_b).
\end{equation}


In the offline setting, the dataset is fixed. $I(X_t; X_b)$ depends only on the environment, rendering it a constant relative to the optimization of the dynamics model. Let $\equiv$ denote equality up to a constant, then
\begin{equation}
I(h_t; X_t | X_b) \equiv I(X_t; h_t, X_b).
\end{equation}

Using the chain rule:
\begin{equation}
I(X_t; h_t, X_b) = I(X_t; h_t) + I(X_t; X_b|h_t).
\end{equation}

By architecture, the dynamics predictor $p_\phi$ only receives $h_t$ as input. The raw history $X_b$ provides no additional information about the future $e_{t+1}$ once the context embedding $h_t$ is computed. Thus, $X_b \to h_t \to X_t$ forms a Markov Chain, implying $I(X_t; X_b | h_t) = 0$. Thus, we have:
\begin{equation}
I(h_t; X_t | X_b) \equiv I(X_t; h_t).
\end{equation}

Expanding the mutual information:
\begin{equation}
\begin{aligned}
I(X_t; h_t) & = H(X_t) - H(X_t | h_t) \\
            & = H(e_{t+1}) - H(e_{t+1} | h_t).
\end{aligned}
\end{equation}

Since the marginal entropy of the latent states $H(e_{t+1})$ is determined by the encoder and the fixed dataset, maximizing the mutual information is equivalent to minimizing the conditional entropy $H(e_{t+1}| h_t)$:
\begin{equation}
\max I(h_t; X_t | X_b) \iff \min H(e_{t+1} | h_t).
\end{equation}

The true conditional entropy $H(e_{t+1} | h_t)$ is intractable. We use the world model's prior distribution $p_\phi(\hat{e}_{t+1} | h_t)$ as a variational approximation. By Gibbs' Inequality, the cross-entropy serves as an upper bound:
\begin{equation}
H(e_{t+1} | h_t) \leq \mathbb{E}_{e \sim q_\phi, h \sim f_\phi} [-\log p_\phi(e_{t+1} | h_t)].
\end{equation}
In the world model objective, this is minimized via the KL Divergence (Dynamics Loss):
\begin{equation}
\mathcal{L}_{\mathrm{dyn}} = \mathbb{E} [\mathrm{KL}(q_\phi(e_{t+1} | o_{t+1}) \parallel p_\phi(\hat{e}_{t+1} | h_t))] = \mathbb{E}_q [-\log p_\phi(e_{t+1} | h_t)] - H(q_\phi).
\end{equation}

Since the posterior entropy $H(q_\phi)$ is independent of the dynamics predictor, minimizing $\mathcal{L}_{\mathrm{dyn}}$ directly minimizes the variational upper bound of the conditional entropy, thereby maximizing the primary causality $I(Z; X_t|X_b)$. That is:

\begin{equation}
\min \mathcal{L}_{\mathrm{WM}}
\;\Rightarrow\;
\min \mathcal{L}_{\mathrm{dyn}}
\;\Rightarrow\;
\max I(Z; X_t \mid X_b).
\end{equation}

\end{proof}


\begin{proposition}
    \label{ass:app_markov}
Conditioned on a fixed task $M$, the behavior policy $\pi$ influences the sampled
segment $\tau$ only through $X_b$ (state-action visitation), and $Z$ is inferred
from $\tau$. Hence, the following Markov chain holds:
\begin{equation}
    \pi \;\rightarrow\; X_b \;\rightarrow\; \tau \;\rightarrow\; Z
\qquad (\text{conditioned on } M).
\end{equation}

\end{proposition}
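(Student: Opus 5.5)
The statement to justify is the Markov chain $\pi \to X_b \to \tau \to Z$ conditioned on $M$. I will treat it as a structural consequence of two facts: how segments are generated in a fixed MDP, and how the world model computes $Z$. Equivalently, I will show two conditional independences given $M$:
\begin{equation}
\tau \perp \pi \mid (X_b, M), \qquad Z \perp (\pi, X_b) \mid (\tau, M).
\end{equation}
Together these give the factorization
\begin{equation}
p(\pi, x_b, \tau, z \mid M) = p(\pi\mid M)\, p(x_b\mid \pi, M)\, p(\tau\mid x_b, M)\, q_\phi(z\mid \tau),
\end{equation}
which is the definition of the chain.

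\textbf{Step 1: the second link.} This is immediate from the architecture in \eqref{eq1}. The task token $z$ is sampled from $q_\phi(z\mid h_z)$, and $h_z$ is a deterministic function of $\tau$ through the encoder $q_\phi(e_t\mid o_t)$, the sequence model $f_\phi$, and the query token, plus independent sampling noise. Hence $p(z\mid \tau,\pi,x_b,M)=q_\phi(z\mid\tau)$. Here $X_b$ is a sub-tuple of $\tau$, so conditioning on $\tau$ already fixes it.

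\textbf{Step 2: the first link.} I will write $\tau=(X_b,X_t)$, where $X_b=\{(s_t,a_t)\}$ and $X_t=\{(r_t,s'_t)\}$. I then expand the trajectory likelihood under $M=(\mathcal S,\mathcal A,T,R,\rho_0,\gamma)$:
\begin{equation}
p(\tau\mid M,\pi)=\underbrace{\rho_0(s_1)\prod_t \pi(a_t\mid s_t)}_{\text{depends on }\pi}\;\underbrace{\prod_t T(s'_t\mid s_t,a_t)\,R(r_t\mid s_t,a_t)}_{\text{independent of }\pi}.
\end{equation}
Dividing by the marginal of $X_b$ gives
\begin{equation}
p(X_t\mid X_b,M,\pi)=\prod_t T(s'_t\mid s_t,a_t)\,R(r_t\mid s_t,a_t),
\end{equation}
which does not involve $\pi$. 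Since $\tau$ is determined by $(X_b,X_t)$, this yields $p(\tau\mid X_b,M,\pi)=p(\tau\mid X_b,M)$. The offline data come from a sequence $\{\pi_{\beta}^n\}$ rather than a single policy. I will handle this by letting $\pi$ denote the mixture or index variable. The factorization still holds because the index enters only through the action factors.

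\textbf{Main obstacle.} The real difficulty is the consistency of $s_{t+1}$ with $s'_t$ within a segment, which makes $X_b$ and $X_t$ overlap. I will handle it either by working with i.i.d. transition tuples, as in the context sets $c_i\subset\mathcal D_i$, or by noting that the overlap is a deterministic constraint independent of $\pi$, which preserves the factorization. With both links in place, the data-processing inequality is available for the subsequent Theorems~\ref{thm:main32} and~\ref{thm:main33}.
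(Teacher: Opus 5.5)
Your argument is correct and uses the same two-link skeleton as the paper. The paper, however, derives neither link. Both are written into the hypothesis (``$\pi$ influences $\tau$ only through $X_b$'' and ``$Z$ is inferred from $\tau$''), so the statement functions as a modeling assumption rather than a result, and the only inference is the word ``Hence''. Your Step~1 matches the paper's second premise, and your Step~2 replaces the first premise with a derivation from the trajectory likelihood.

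What this buys is that the hidden condition becomes visible. The factorization works because the action factors are $\pi(a_t\mid s_t)$, i.e.\ the behavior policy does not react to rewards inside the segment. If it did, then in a stochastic MDP knowing $\pi$ together with $X_b$ would carry information about the rewards, and the first link would fail. So you have made the paper's assumption explicit rather than removed it.

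This also qualifies your remark that a policy index ``enters only through the action factors''. That holds only if the index is chosen independently of observed rewards. It is not literally true for buffers collected by a SAC learner whose policy is updated on past rewards. There the link survives because the benchmarks' dynamics and rewards are deterministic, which makes $X_t$ a function of $(X_b,M)$.

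One correction to Step~2. For a contiguous segment of length $L$, the conditional $p(X_t\mid X_b,M,\pi)=\prod_t T(s'_t\mid s_t,a_t)R(r_t\mid s_t,a_t)$ is not right. For $t<L$, $s'_t=s_{t+1}$ already lies in $X_b$, so those transition factors belong to $p(X_b\mid M,\pi)$. The law of $s_1$ also belongs there, and for a mid-trajectory segment it is the behavior-induced visitation rather than $\rho_0$. The correct conditional is $\prod_{t<L}\delta(s'_t-s_{t+1})\prod_{t}R(r_t\mid s_t,a_t)\,T(s'_L\mid s_L,a_L)$, which is still free of $\pi$. So your second treatment of the overlap is the one that goes through. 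The i.i.d.-tuple option does not fit MetaSTAR, whose Transformer predicts $e_{t+1}$ from $h_t$ and therefore consumes contiguous segments.
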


\begin{proposition}
\label{ass:app_calib}
There exists a task-conditional reference distribution $p(z\mid M)$ such that for
any behavior policy $\pi$,
\begin{equation}
\label{eq:app_eps_calib}
\mathbb E_{\tau\sim p(\tau\mid M,\pi)}
\Big[
\mathrm{KL}\big(q_\phi(z\mid \tau)\,\|\,p(z\mid M)\big)
\Big]
\;\le\; \varepsilon.
\end{equation}
Moreover, minimizing $\mathcal{L}_{\mathrm{dyn}}
=
\mathbb{E}\!\left[
\mathrm{KL}\!\left(
q_\phi(e_{t+1} \mid o_{t+1})
\,\|\, 
p_\phi(\hat e_{t+1} \mid h_t)
\right)
\right]$ ensures \eqref{eq:app_eps_calib} holds with
$\varepsilon$ upper bounded by $\mathcal L_{\mathrm{dyn}}$ (up to a task-dependent constant
absorbed into $\varepsilon$).
\end{proposition}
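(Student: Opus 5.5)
The plan is to build the reference distribution $p(z\mid M)$ out of the world model itself, and then to pass the per-step dynamics KL up to the task token. The tools are the chain rule for KL divergence and the data-processing inequality.

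\textbf{Step 1 (reference process).} Fix a task $M$ and a behavior policy $\pi$. I compare two joint laws over latent segments $(e_{1:T},a_{1:T})$:
\begin{itemize}
\item $P_\pi$, the \emph{encoded} law, in which $o_{t+1}$ comes from the real task $M$ and $e_{t+1}\sim q_\phi(e_{t+1}\mid o_{t+1})$;
\item $\widehat P_\pi$, the \emph{imagined} law, in which the actions are the same but $e_{t+1}\sim p_\phi(\hat e_{t+1}\mid h_t)$ is drawn from the Transformer prior.
\end{itemize}
By the chain rule for KL, and since both laws share the action kernel $\pi(a_t\mid\cdot)$, the action terms cancel. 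This gives
\begin{equation}
\mathrm{KL}\big(P_\pi\,\|\,\widehat P_\pi\big)=\sum_{t=1}^{T-1}\mathbb E_{P_\pi}\Big[\mathrm{KL}\big(q_\phi(e_{t+1}\mid o_{t+1})\,\|\,p_\phi(\hat e_{t+1}\mid h_t)\big)\Big]\le T\,\mathcal L_{\mathrm{dyn}},
\end{equation}
where $\mathcal L_{\mathrm{dyn}}$ is the expectation over training segments from task $M$. To be careful, the encoder posterior is stochastic given $o_{t+1}$. I would therefore treat $P_\pi$ as a mixture over observations and use joint convexity of KL to keep the inequality direction.

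\textbf{Step 2 (push to $Z$).} The task token output $z\sim q_\phi(z\mid h_z)$ is a (stochastic) function of the latent segment, by the Markov chain in Proposition~\ref{ass:app_markov}. The data-processing inequality then gives $\mathrm{KL}(q_\phi(z\mid M,\pi)\,\|\,\hat q_\phi(z\mid M,\pi))\le T\,\mathcal L_{\mathrm{dyn}}$, where $\hat q_\phi$ is the token law under imagination. To obtain the \emph{expected} per-segment KL $\mathbb E_\tau[\mathrm{KL}(q_\phi(z\mid\tau)\,\|\,p(z\mid M))]$ rather than a KL of marginals, I would apply the same chain-rule argument conditionally on the warm-up segment. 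The step I expect to be hardest comes next.

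\textbf{Step 3 (removing $\pi$).} The reference must satisfy $p(z\mid M)$ independent of $\pi$, but $\hat q_\phi(z\mid M,\pi)$ still depends on the actions fed to the model. My first attempt is to set $p(z\mid M)$ to the imagined token law under a fixed anchor policy $\pi_0$, for example the mixture of all behavior policies in $\mathcal D_M$. I would then bound $\mathrm{KL}(\hat q_\phi(\cdot\mid M,\pi)\,\|\,\hat q_\phi(\cdot\mid M,\pi_0))$ by a task-dependent constant $C_M$. This constant reflects how sensitive the imagined token is to actions, and it is controlled in practice by the FOCAL term, which clusters same-task embeddings.

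KL does not satisfy a triangle inequality, so combining the Step 2 bound with $C_M$ needs extra care. I would either use the log-sum/convexity bound for mixtures, or assume bounded density ratios so that $\mathrm{KL}(P\|R)\le \mathrm{KL}(P\|Q)+\mathrm{KL}(Q\|R)+\log\sup\tfrac{dQ}{dR}$. Either route gives $\varepsilon\le T\,\mathcal L_{\mathrm{dyn}}+C'_M$.

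The statement says the constant is ``absorbed into $\varepsilon$''. This is exactly where I expect the argument to rest on a genuine modeling assumption rather than on a derivation, and I would state that assumption explicitly.
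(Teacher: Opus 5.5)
\textbf{Main gap: Step 2 controls only the marginal term.} The paper does not derive this statement. It uses it as an assumption (the ``standard posterior calibration condition''), supported only by the remark that under a well-specified model and sufficient optimization the token posteriors cluster around a task-specific distribution. Your derivation breaks earlier than you expect: at Step 2, not Step 3. For any $\tau$-free reference,
$\mathbb E_{\tau}\big[\mathrm{KL}(q_\phi(z\mid\tau)\,\|\,p(z\mid M))\big]=\mathrm{KL}\big(q_\phi(z\mid M,\pi)\,\|\,p(z\mid M)\big)+I(Z;\tau\mid M)$,
where $q_\phi(z\mid M,\pi)=\mathbb E_\tau[q_\phi(z\mid\tau)]$. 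This is the same identity used in the proof of Theorem~\ref{thm:app_thm32}. Even if Step 3 went through, your chain-rule and data-processing route would bound only the first, marginal term. Calibration additionally requires $I(Z;\tau\mid M)\le\varepsilon$, i.e.\ the token must be nearly constant across segments of the same task.

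Conditioning on a warm-up prefix does not supply this. The token is read off the whole segment, so you would only compare $q_\phi(z\mid\tau_{1:L})$ with $\hat q_\phi(z\mid\tau_{1:L})$. Both depend on the prefix, and neither is the full-segment posterior or a $\tau$-free reference.

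\textbf{Why no argument from $\mathcal L_{\mathrm{dyn}}$ alone can work.} $\mathcal L_{\mathrm{dyn}}$ involves only the encoder posterior and the prior head at $h_t$, $t\le T$. The projector $q_\phi(z\mid h_z)$, and under causal masking the appended query token, receive no signal from it. Suppose $h_z$ varies across segments of one task. A constant projector and one that amplifies this variation then have identical $\mathcal L_{\mathrm{dyn}}$. Yet the first has calibration error $0$, while the second has error at least $I(Z;\tau\mid M)>0$, which grows with the amplification. So the ``Moreover'' clause is an assumption about the task token, not a consequence of dynamics fidelity. The assumption you isolate in Step 3 (action-insensitivity of the imagined token) is not the one actually missing.

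\textbf{Two further gaps.} First, in Step 1 the bound $\mathbb E_{P_\pi}[\cdot]\le T\,\mathcal L_{\mathrm{dyn}}$ holds only when $\pi$ is the data-collecting behavior mixture, because $\mathcal L_{\mathrm{dyn}}$ is an expectation under $\mathcal D$. The statement is for arbitrary $\pi$, and the subsequent theorems target exactly that case (in particular the online exploration policy). There you need a coverage factor such as $\sup dP_\pi/dP_{\mathcal D}$. Second, an additive non-vanishing $C'_M$ makes the bound useless for Theorems~\ref{thm:app_thm32}--\ref{thm:app_thm33}, which need $\varepsilon$ small.

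\textbf{A better use of FOCAL.} Invoking FOCAL is outside the stated hypothesis. If you want an honest sufficient condition, though, FOCAL is the right tool, used for a different purpose than yours. Its same-task term penalizes exactly the within-task spread that calibration measures. For example, take a head $\mathcal N(\mu(\tau),\sigma^2 I)$ and reference $\mathcal N(\bar\mu_M,\sigma^2 I)$ with $\bar\mu_M=\mathbb E_\tau\mu(\tau)$. Then the left-hand side equals $\mathbb E_\tau\|\mu(\tau)-\bar\mu_M\|_2^2/(2\sigma^2)$. This too is controlled only under the training behavior distribution.
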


\paragraph{Remark.}
Proposition~\ref{ass:app_calib} is the standard ``posterior calibration'' condition
in latent-variable world models: the inferred task latent from a segment is close
to a task-identity distribution $p(z\mid M)$.
In our architecture, $\mathcal{L}_{\mathrm{dyn}}$ enforces posterior--prior consistency
($q_\phi(e_{t+1} \mid o_{t+1})$ vs.\ $p_\phi(\hat{e}_{t+1}\mid h_t)$); under a well-specified model and
sufficient optimization, the induced posteriors cluster around a task-specific
distribution, yielding \eqref{eq:app_eps_calib}.

\begin{theorem}[Dynamics-induced behavior invariance]
\label{thm:app_thm32}
Under Proposition~\ref{ass:app_markov}--\ref{ass:app_calib}, if
$\mathcal L_{\mathrm{dyn}}\le \varepsilon$, then the lesser causality of the task
representation is bounded as
\begin{equation}
    I(Z;X_b\mid M)\;\le\;\varepsilon.
\end{equation}
Equivalently, the task representation $Z$ is $\varepsilon$-invariant to
behavior-induced context $X_b$ under a fixed task.
\end{theorem}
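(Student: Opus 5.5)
The plan is to bound the conditional mutual information $I(Z;X_b\mid M)$ by an expected KL divergence to a task-conditional reference distribution, using the ``golden formula'' (variational characterization) of mutual information. The key fact is that for any fixed distribution $r(z)$ on the representation space,
\begin{equation}
I(Z;Y\mid M=m) \;=\; \mathbb{E}_{Y\mid m}\big[\mathrm{KL}(p(z\mid Y,m)\,\|\,p(z\mid m))\big] \;\le\; \mathbb{E}_{Y\mid m}\big[\mathrm{KL}(p(z\mid Y,m)\,\|\,r(z))\big],
\end{equation}
since the difference between the right- and left-hand sides is $\mathrm{KL}(p(z\mid m)\,\|\,r)\ge 0$. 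We will apply this with $r(z)=p(z\mid M)$, the reference distribution supplied by Proposition~\ref{ass:app_calib}.

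\textbf{Reducing from $X_b$ to $\tau$.} By Proposition~\ref{ass:app_markov}, conditionally on $M$ we have the chain $X_b\to\tau\to Z$. The conditional data-processing inequality then gives $I(Z;X_b\mid M)\le I(Z;\tau\mid M)$. This step is what lets us avoid handling the behavior component directly: everything reduces to how much $Z$ depends on the whole segment $\tau$ once the task is fixed.

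\textbf{Bounding $I(Z;\tau\mid M)$ by the calibration term.} Since $Z$ is inferred from $\tau$ by the encoder, $p(z\mid\tau,M)=q_\phi(z\mid\tau)$. The trajectory distribution $p(\tau\mid M)$ mixes over behavior policies, $p(\tau\mid M)=\int p(\tau\mid M,\pi)\,dp(\pi\mid M)$. Applying the golden-formula bound with $r=p(z\mid M)$ yields
\begin{equation}
I(Z;\tau\mid M)\;\le\;\mathbb{E}_{M}\,\mathbb{E}_{\pi}\,\mathbb{E}_{\tau\sim p(\tau\mid M,\pi)}\big[\mathrm{KL}(q_\phi(z\mid\tau)\,\|\,p(z\mid M))\big].
\end{equation}
Proposition~\ref{ass:app_calib} bounds the inner expectation by $\varepsilon$ uniformly in $\pi$, so the outer expectations are also at most $\varepsilon$. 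Finally, the second part of Proposition~\ref{ass:app_calib}, together with the hypothesis $\mathcal L_{\mathrm{dyn}}\le\varepsilon$, identifies this $\varepsilon$ with the dynamics-loss level, absorbing the task-dependent constant. Chaining the three inequalities gives $I(Z;X_b\mid M)\le\varepsilon$.

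\textbf{Main obstacle.} The only nontrivial step is the link between $\mathcal L_{\mathrm{dyn}}$ and the calibration bound; it is not derived from first principles here but taken from Proposition~\ref{ass:app_calib}. I would state explicitly that the theorem inherits that proposition's hypotheses (well-specified model, constant absorbed into $\varepsilon$). I would also check that the data-processing step is legitimate when $\pi$ is treated as random. This requires that, conditionally on $M$, the encoder $q_\phi$ does not depend on $\pi$, which holds because the encoder sees only $\tau$.
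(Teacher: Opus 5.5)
Your proposal is correct and follows essentially the same route as the paper. You use data processing to pass from $X_b$ to $\tau$, then the identity $I(Z;\tau\mid M)=\mathbb{E}_{\tau}[\mathrm{KL}(q_\phi(z\mid\tau)\,\|\,p(z\mid M))]-\mathrm{KL}(q_\phi(z\mid M)\,\|\,p(z\mid M))$ with the nonnegative aggregated-posterior term dropped, the calibration bound of Proposition~\ref{ass:app_calib}, and that same proposition to tie $\varepsilon$ to $\mathcal L_{\mathrm{dyn}}$. Your explicit averaging over a random behavior policy, using that calibration holds uniformly in $\pi$, is a slightly more careful version of the paper's fixed-$\pi$ presentation rather than a different argument.
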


\begin{proof}
Since $X_b$ is a function of the trajectory segment $\tau$, by data processing,
\begin{equation}
    I(Z;X_b\mid M)\;\le\; I(Z;\tau\mid M).
\end{equation}
We next upper bound $I(Z;\tau\mid M)$ using the calibration condition
\eqref{eq:app_eps_calib}. Using the identity
\begin{equation}
    I(Z;\tau\mid M)
=
\mathbb E_{\tau\sim p(\tau\mid M,\pi)}
\Big[
\mathrm{KL}\big(q_\phi(z\mid\tau)\,\|\,p(z\mid M)\big)
\Big]
-
\mathrm{KL}\big(q_\phi(z\mid M)\,\|\,p(z\mid M)\big),
\end{equation}
where $q_\phi(z\mid M)=\mathbb E_{\tau\sim p(\tau\mid M,\pi)}[q_\phi(z\mid\tau)]$
is the aggregated posterior under task $M$.
Since KL is nonnegative, we obtain
\begin{equation}
I(Z;\tau\mid M)
\le
\mathbb E_{\tau\sim p(\tau\mid M,\pi)}
\Big[
\mathrm{KL}\big(q_\phi(z\mid\tau)\,\|\,p(z\mid M)\big)
\Big]
\le \varepsilon.    
\end{equation}
Combining yields $I(Z;X_b\mid M)\le \varepsilon$.
Finally, by Proposition~\ref{ass:app_calib}, minimizing $\mathcal L_{\mathrm{dyn}}$
ensures the above holds with $\varepsilon$ upper bounded by $\mathcal L_{\mathrm{dyn}}$.
\end{proof}

\begin{theorem}[Behavior-policy robustness of task representation]
\label{thm:app_thm33}
Fix a task $M$ and any two behavior policies $\pi$ and $\pi'$.
Let $P_M^\pi \triangleq p_\phi(z\mid M,\pi)$ and
$P_M^{\pi'} \triangleq p_\phi(z\mid M,\pi')$ denote the induced distributions of $Z$.
Under Proposition~\ref{ass:app_markov}, if $I(Z;X_b\mid M)\le \varepsilon$, then
\begin{equation}
\mathrm{TV}\!\big(P_M^\pi,\;P_M^{\pi'}\big)\;\le\;\sqrt{2\varepsilon}.    
\end{equation}
In particular, by Theorem~\ref{thm:app_thm32}, if $\mathcal L_{\mathrm{dyn}}\le \varepsilon$,
then $\mathrm{TV}(P_M^\pi,P_M^{\pi'})\le \sqrt{2\varepsilon}$.
\end{theorem}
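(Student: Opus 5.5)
We want to bound $\mathrm{TV}(P_M^\pi,P_M^{\pi'})$ by $\sqrt{2\varepsilon}$ using Pinsker's inequality together with the calibration condition of Proposition~\ref{ass:app_calib}. The cleanest route does not go through $I(Z;X_b\mid M)$ directly. It compares each induced distribution to the common reference $p(z\mid M)$.

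\textbf{Step 1: reduce to the aggregated posterior.} By Proposition~\ref{ass:app_markov}, under a fixed task the representation depends on the policy only through the sampled segment. Hence
\[
P_M^\pi(z)=\mathbb E_{\tau\sim p(\tau\mid M,\pi)}\big[q_\phi(z\mid\tau)\big],
\]
and likewise for $\pi'$.

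\textbf{Step 2: bound each KL to the reference.} By convexity of KL in its first argument (Jensen), and then calibration \eqref{eq:app_eps_calib},
\[
\mathrm{KL}\big(P_M^\pi\,\|\,p(z\mid M)\big)\le \mathbb E_{\tau\sim p(\tau\mid M,\pi)}\big[\mathrm{KL}(q_\phi(z\mid\tau)\,\|\,p(z\mid M))\big]\le\varepsilon.
\]
This is exactly the nonnegative correction term dropped in the proof of Theorem~\ref{thm:app_thm32}, so it is already available. The same bound holds for $\pi'$, since calibration is stated uniformly over behavior policies.

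\textbf{Step 3: Pinsker and the triangle inequality.} Pinsker gives $\mathrm{TV}(P_M^\pi,p(z\mid M))\le\sqrt{\varepsilon/2}$, and the same for $\pi'$. The triangle inequality for TV then yields
\[
\mathrm{TV}(P_M^\pi,P_M^{\pi'})\le 2\sqrt{\varepsilon/2}=\sqrt{2\varepsilon},
\]
which is exactly the claimed constant. The final clause then follows by chaining with Theorem~\ref{thm:app_thm32} and Proposition~\ref{ass:app_calib}, which make $\varepsilon$ controlled by $\mathcal L_{\mathrm{dyn}}$.

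\textbf{Main obstacle.} The theorem is phrased as a consequence of the single hypothesis $I(Z;X_b\mid M)\le\varepsilon$, but mutual information under one data distribution does not by itself control how $Z$ shifts when the policy changes. I would first try the following: view the policy as a mixing variable $\Pi$ uniform on $\{\pi,\pi'\}$. Then $I(Z;\Pi\mid M)\le I(Z;X_b\mid M)$ by data processing along the chain of Proposition~\ref{ass:app_markov}, and $I(Z;\Pi\mid M)$ is the Jensen--Shannon divergence between $P_M^\pi$ and $P_M^{\pi'}$. Pinsker applied to each half gives
\[
\tfrac12\mathrm{TV}\le\sqrt{\varepsilon/2},
\]
which again yields $\mathrm{TV}\le\sqrt{2\varepsilon}$. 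This requires interpreting $I(Z;X_b\mid M)$ under the mixture of the two behavior policies. If that interpretation is contested, I would fall back on the calibration-based argument of Steps 1--3, which uses the "same condition" as Theorem~\ref{thm:app_thm32}.
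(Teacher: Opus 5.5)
Your proposal is correct, but your primary argument (Steps 1--3) takes a different route from the paper. The paper's proof is exactly the route in your ``Main obstacle'' paragraph:
\begin{itemize}
\item a uniform binary $\Pi$ on $\{\pi,\pi'\}$;
\item data processing along the chain of Proposition~\ref{ass:app_markov};
\item identification of $I(Z;\Pi\mid M)$ with the Jensen--Shannon divergence;
\item Pinsker against the midpoint $\bar P$, then the triangle inequality and Cauchy--Schwarz, giving $\sqrt{2I(Z;\Pi\mid M)}$.
\end{itemize}
This yields the same constant as your $\tfrac12\mathrm{TV}\le\sqrt{\varepsilon/2}$.

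Your Steps 1--3 instead anchor both induced distributions at the policy-independent reference $p(z\mid M)$ of Proposition~\ref{ass:app_calib}. You bound each $\mathrm{KL}(P_M^\pi\,\|\,p(z\mid M))$ by convexity, which is equivalent to using the term discarded in the proof of Theorem~\ref{thm:app_thm32}. This gives a cleaner argument: you posit no distribution over policies and need no data-processing step. Because the reference is shared (the existential quantifier precedes ``for any $\pi$''), the bound also holds for every pair of policies at once. The two routes are consistent, since $\mathrm{JSD}(P,P')\le\tfrac12\mathrm{KL}(P\,\|\,R)+\tfrac12\mathrm{KL}(P'\,\|\,R)$ for any $R$. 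The price is that the conclusion comes from calibration. So Steps 1--3 prove the ``in particular'' clause directly, but not the first clause from $I(Z;X_b\mid M)\le\varepsilon$. For that clause, your mixture argument, i.e.\ the paper's, does the work.

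Your caveat about the first clause is well founded and sharper than the paper. The paper's final step $I(Z;\Pi\mid M)\le I(Z;\pi\mid M)$ never says under which distribution of the policy variable the hypothesis is evaluated. Under a single fixed $\pi$ the clause is false: if $X_b$ is deterministic under $\pi$, then $I(Z;X_b\mid M)=0$, while $P_M^{\pi'}$ can be at total variation $1$ from $P_M^\pi$. Evaluating the hypothesis under the uniform mixture of $\pi$ and $\pi'$, as you propose, is the reading that makes the paper's argument valid. Calibration implies that reading: average the calibration bound over the two policies and reuse the identity from the proof of Theorem~\ref{thm:app_thm32}. So the ``in particular'' clause holds by either route.
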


\begin{proof}
By Proposition~\ref{ass:app_markov} and data processing,
\[
I(Z;\pi\mid M)\;\le\; I(Z;X_b\mid M)\;\le\;\varepsilon.
\]
Now define a binary random variable $\Pi\in\{\pi,\pi'\}$ with $\mathbb P(\Pi=\pi)=\mathbb P(\Pi=\pi')=\tfrac12$.
Then $I(Z;\Pi\mid M)$ equals the Jensen--Shannon divergence between
$P_M^\pi$ and $P_M^{\pi'}$ (in nats). Let $\bar P \triangleq \tfrac12(P_M^\pi+P_M^{\pi'})$.
By Pinsker's inequality,
\begin{equation}
    \mathrm{TV}(P_M^\pi,\bar P)\le \sqrt{\mathrm{KL}(P_M^\pi\|\bar P)/2},
\qquad
\mathrm{TV}(P_M^{\pi'},\bar P)\le \sqrt{\mathrm{KL}(P_M^{\pi'}\|\bar P)/2}.
\end{equation}
Using triangle inequality and Cauchy--Schwarz,
\begin{equation}
\mathrm{TV}(P_M^\pi,P_M^{\pi'})
\le \mathrm{TV}(P_M^\pi,\bar P)+\mathrm{TV}(\bar P,P_M^{\pi'})
\le \sqrt{2\Big(\tfrac12\mathrm{KL}(P_M^\pi\|\bar P)+\tfrac12\mathrm{KL}(P_M^{\pi'}\|\bar P)\Big)}
= \sqrt{2 I(Z;\Pi\mid M)}.    
\end{equation}
Finally $I(Z;\Pi\mid M)\le I(Z;\pi\mid M)\le \varepsilon$ yields
$\mathrm{TV}(P_M^\pi,P_M^{\pi'})\le \sqrt{2\varepsilon}$.
\end{proof}

\section{Code of MetaSTAR}
Our code is available at \url{https://github.com/QianFY/MetaSTAR}

\section{MetaSTAR Pseudo-code}
\begin{algorithm}[H]
  \caption{Meta-training}
  \label{alg:Meta-training}
  \begin{minipage}[t]{0.6\textwidth}
  \begin{algorithmic}
    \STATE {\bfseries Require:} Offline Datasets $\bm{\mathcal{D}}=\{\mathcal{D}_i\}_{i=1}^{\text{train}}$ of a set of training tasks $\bm{M}=\{M_i\}_{i=1}^{N_{\text{train}}}$, initialize learned policy $\pi_{\theta}$, Q-function $Q_{\omega}$, world model $W_{\phi}$
    \STATE {\bfseries Parameter:} $\theta, \omega, \phi$
    \WHILE{not done}
    \STATE Sample task batch $\{M_j\}_{j=1}^{N_{\text{batch}}}$ from $\bm{M}$
    \FOR{step in each iter}
    \STATE // World Model Learning
    \STATE Sample context $c=\{(s_t, a_t, r_t, s'_t)\}_{t=1}^T$ from $\mathcal{D}$
    \STATE Get the observation $o_t=[s_t, r_{t-1}]$
    \STATE Sample latent observation by $e_t \sim q_\phi(e_t \mid o_t)$
    \STATE Compute hidden states $h_{1:T}, h_z=f_\phi(\tau_{1:T}^{e,a} = \{(e_t, a_t)\}_{t=1}^T)$
    \STATE Predict next latent observation $\hat{e}_{t+1} \sim p_\phi(\hat{e}_{t+1} \mid h_t)$
    \STATE Predict continuation $\hat{c}_t \sim p_\phi(\hat{c}_t \mid h_t)$
    \STATE Project $h_z$ to task embedding space $z \sim q_{\phi}(z \mid h_z)$
    \STATE Calculate $\mathcal{L}_{\mathrm{pred}}$, $\mathcal{L}_{\mathrm{dyn}}$ and $\mathcal{L}_{\mathrm{rep}}$ according to Eq. \eqref{eq:pred_dyn_rep_loss}
    \STATE Calculate $\mathcal{L}_{\mathrm{WM}}$ according to Eq. \eqref{eq:L_wm} 
    \STATE Calculate $\mathcal{L}_{\mathrm{FOCAL}}$ according to Eq. \eqref{eq:L_FOCAL}
    \STATE Update $\phi$ via $\mathcal{L}_{\mathrm{WM}}$ and $\mathcal{L}_{\mathrm{FOCAL}}$ according to Eq. \eqref{eq:total_loss}
    \STATE // Agent Learning
    \STATE Sample imagination context$\{(s_t, a_t, r_t, s'_t)\}_{t=1}^L$ from $\mathcal{D}$
    \STATE Imagine trajectories $\hat{d}_{\phi}=\{ (\hat{s}_t, \hat{a}_t, \hat{r}_t, \hat{s}'_t) \} \sim W_{\phi}$
    \STATE Sample offline data $d \sim \mathcal{D}$
    \STATE $d_f=d \cup \hat{d}_{\phi}$
    \STATE Conservatively evaluate $\pi_{\theta}$ to obtain $\widehat{\mathcal{B}}^\pi Q_\omega(s,z,a)$ using $d_f$
    \STATE Update $\omega$ to minimize $\mathcal{L}_{critic}$ according to Eq. \eqref{eq:conservative_q}
    \STATE Update $\theta$ to minimize $\mathcal{L}_{actor}$ according to Eq. \eqref{eq:actor_loss}
    \ENDFOR
    \ENDWHILE
  \end{algorithmic}
  \end{minipage}%
  \hfill 
  \begin{minipage}[t]{0.36\textwidth}
  \raggedright
    \textbf{Components of world model $W_{\phi}$}
    \begin{tabular}{@{}ll@{}}
        Observation encoder: & $q_\phi(e_t \mid o_t)$ \\
        Observation decoder: & $p_\phi(\hat{o}_t \mid e_t)$ \\
        Sequence model:      & $ f_\phi(\tau_{1:t}^{e,a})$ \\
        Latent dynamics predictor:  & $p_\phi(\hat{e}_{t+1} \mid h_t)$ \\
        Continuation predictor:   & $p_\phi(\hat{c}_t \mid h_t)$ \\
        Task embedding projector: & $q_\phi(z \mid h_z)$ \\
    \end{tabular}
  \end{minipage}
\end{algorithm}


\begin{algorithm}[H]
  \caption{Meta-testing}
  \label{alg:Meta-testing}
  \begin{algorithmic}
    \STATE {\bfseries Require:} A set of testing tasks  $\bm{M}=\{M_i\}_{i=1}^{N_{\mathrm{test}}}$, learned policy $\pi_{\theta}$, world model $W_{\phi}$, initial exploration length $L_c$
    \FOR{each task $M_i$}
    \STATE Initialize context buffer $c=\mathrm{Queue}(\mathrm{maxlen}=L_c)$
    \FOR{$t=0,...,T-1$}
    \IF{$t<L_c$}
    \STATE Agent samples a random action $a_t$ to roll out $(s_t, a_t, r_t, s'_t)$
    \ELSE
    \STATE Compute task embedding $z \sim q_\phi(z\mid h_z)$, where $h_z = W_\phi(c)$ 
    \STATE Agent uses $a_t \sim \pi_{\theta}(\cdot|s,z)$ to roll out $(s_t, a_t, r_t, s'_t)$
    \ENDIF
    \STATE Push transition $(s_t, a_t, r_t, s'_t)$ into $c$
    \ENDFOR
    \STATE Compute episode return for evaluation
    \ENDFOR
  \end{algorithmic}
\end{algorithm}

\section{Additional Experiments}

\subsection{Offline Meta-Testing Performance}
\begin{figure*}[ht]
  \begin{center}
    \centerline{\includegraphics[width=\textwidth]{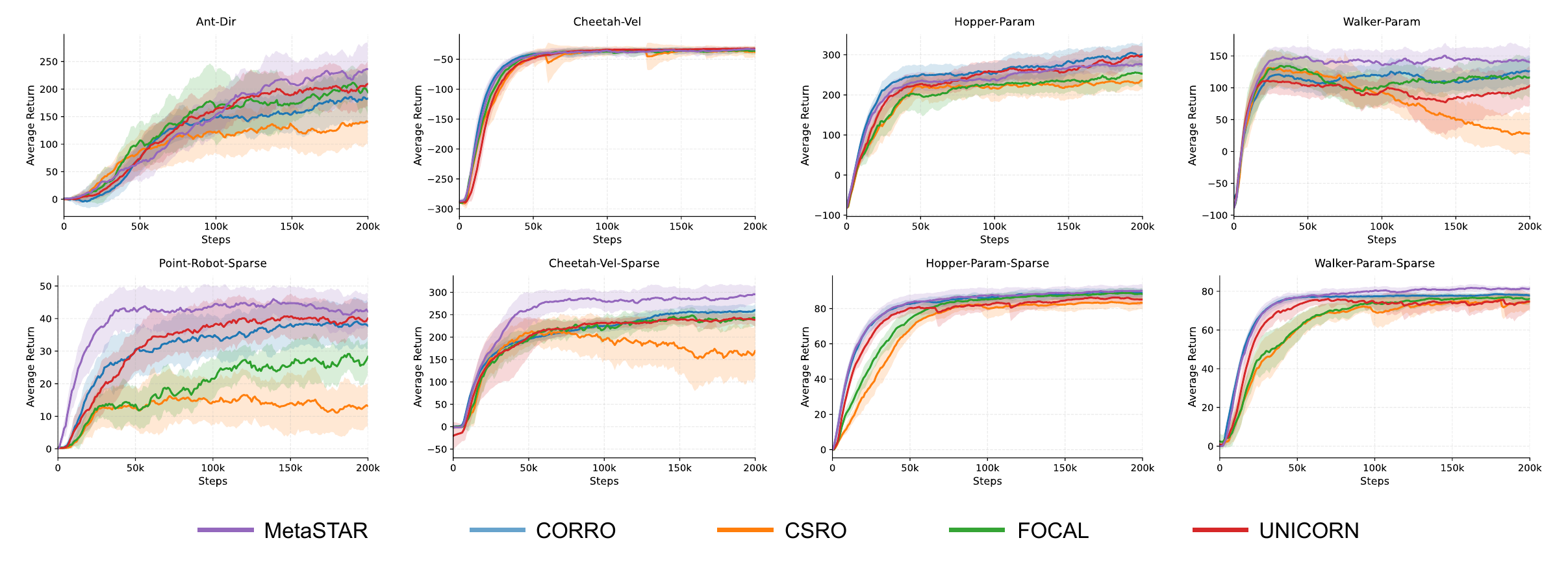}}
    \caption{Average offline meta-testing performance on 4 dense-reward environments and 4 sparse-reward environments.}
    \label{offline_learning_curve}
  \end{center}
  \vspace{-20pt}
\end{figure*}
Figure~\ref{offline_learning_curve} reports the average offline meta-testing performance on four dense-reward environments and four sparse-reward environments. This protocol allows us to quantify the performance penalty caused by the mismatch between the behavior policies used for offline data collection and the exploration policy used during online adaptation. Under offline testing, all methods generally achieve improved performance compared with online testing, since they can infer tasks from pre-collected contexts without facing online context shift. Nevertheless, MetaSTAR remains competitive or superior in most environments under this idealized setting. This indicates that its performance gain does not solely come from handling online context collection, but also from stronger task representation learning and more reliable policy optimization. Comparing the offline and online testing curves further reveals the sensitivity of different methods to the source of context. Several baselines benefit substantially from pre-collected offline contexts, but their performance degrades when they are required to infer tasks from online contexts collected by the learned policy. In contrast, MetaSTAR exhibits a smaller performance gap between offline and online testing, indicating better robustness to changes in the context-collection policy and more reliable offline-to-online adaptation.

\subsection{Online Adaptation on In-Distribution Tasks}
\begin{figure*}[ht]
  \begin{center}
    \centerline{\includegraphics[width=\textwidth]{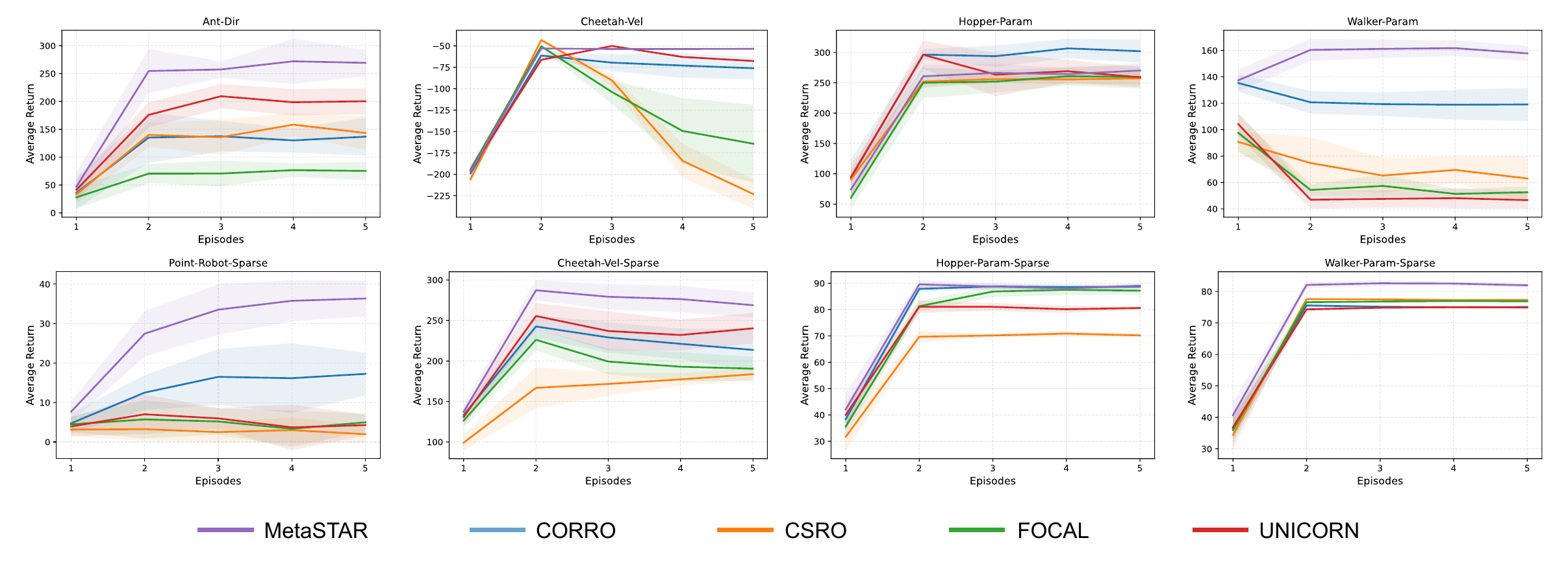}}
    \caption{Average online adaptation performance of the first five episodes on in-distribution tasks.}
    \label{episodes_return}
  \end{center}
  \vspace{-20pt}
\end{figure*}
Figure~\ref{episodes_return} reports the average online adaptation performance over the first five episodes on in-distribution tasks. Under the ID setting, several baselines, achieve competitive adaptation performance, indicating that their learned task representations can handle moderate context shifts when the test tasks remain close to the training distribution. However, comparing these ID results with the OOD adaptation results in Figure~\ref{ood_episodes_return} reveals a clear difference. Some baselines that perform well on ID tasks suffer substantial degradation under OOD tasks, suggesting that mitigating context shift alone may not be sufficient for robust offline-to-online adaptation when policy distribution shift and task distribution shift become more severe. In contrast, MetaSTAR maintains strong performance in both ID and OOD settings. This supports the effectiveness of combining behavior-invariant task representation learning with conservative policy optimization for more reliable online adaptation.

\subsection{Additional t-SNE Visualizations of Task Representations}
\label{app:tsne_visualization}
We provide additional t-SNE visualizations of task representations under online testing in both dense-reward and sparse-reward environments. These visualizations complement the quantitative results by showing how task embeddings are organized when contexts are collected online by the agent rather than sampled from pre-collected offline data. Since online contexts may differ substantially from the offline training contexts, a well-structured embedding space indicates that the representation is less sensitive to the behavior policy and more aligned with task-relevant information.

\begin{figure*}[ht]
  \begin{center}
    \centerline{\includegraphics[width=\textwidth]{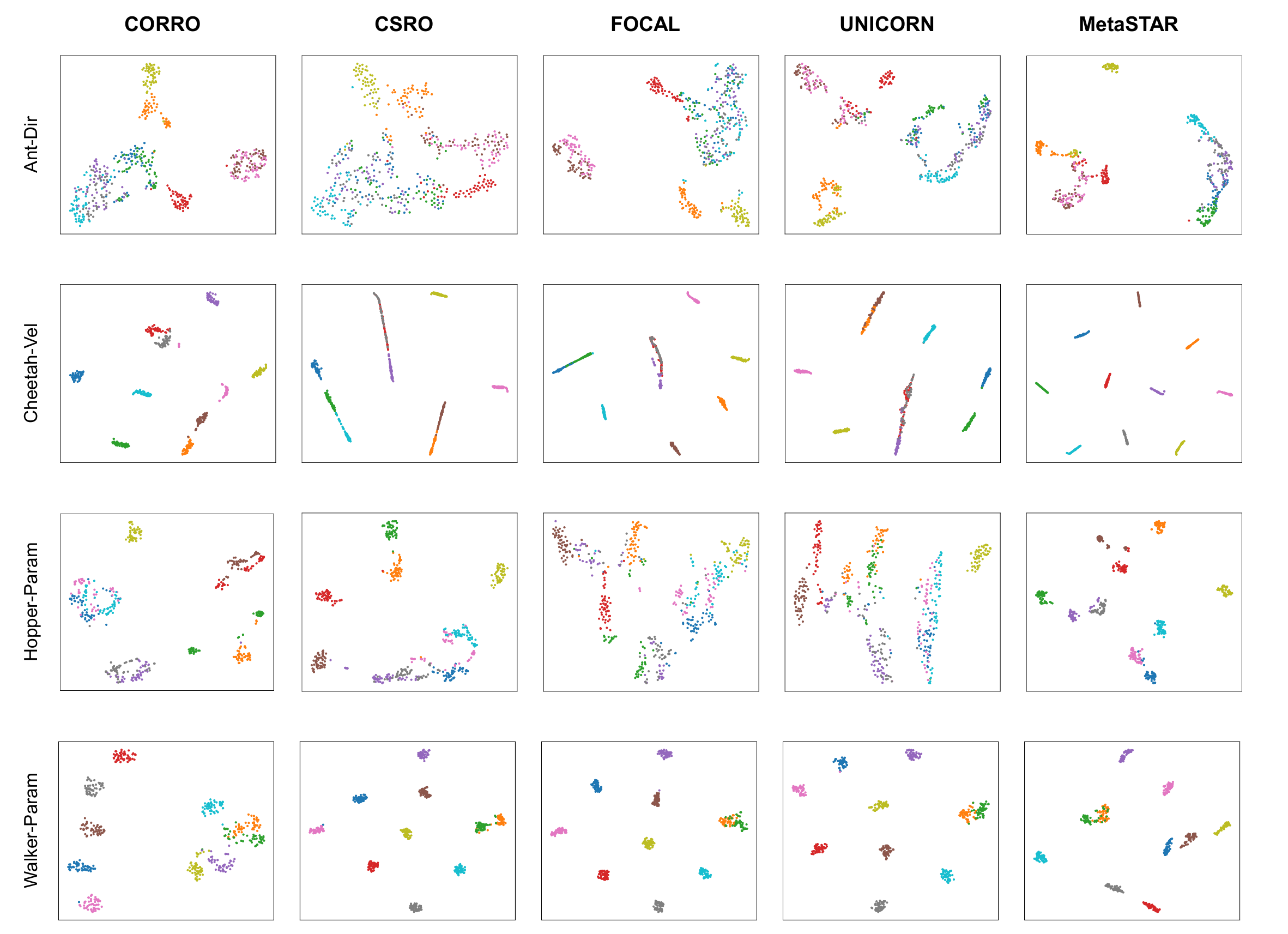}}
    \caption{T-SNE visualization of the task representation with online testing in dense-reward environments.}
    \label{dense_tsne}
  \end{center}
  \vspace{-20pt}
\end{figure*}
Figure~\ref{dense_tsne} shows the results on dense-reward environments. MetaSTAR produces structured latent clusters, where samples from the same task are generally grouped together and different tasks are better separated. This suggests that the learned representation can preserve task identity even when the context is collected through online interaction. In relatively simpler environments such as Cheetah-Vel and Walker-Param, some baselines also produce recognizable clusters, indicating that dense rewards provide sufficient task-identifying signals for these methods. However, in more challenging environments such as Ant-Dir and Hopper-Param, the baseline representations become less structured and show more overlap across tasks. This suggests that their task embeddings are more sensitive to variations in the collected online contexts and may capture behavior-induced correlations rather than task-relevant dynamics.

\begin{figure*}[ht]
  \begin{center}
    \centerline{\includegraphics[width=\textwidth]{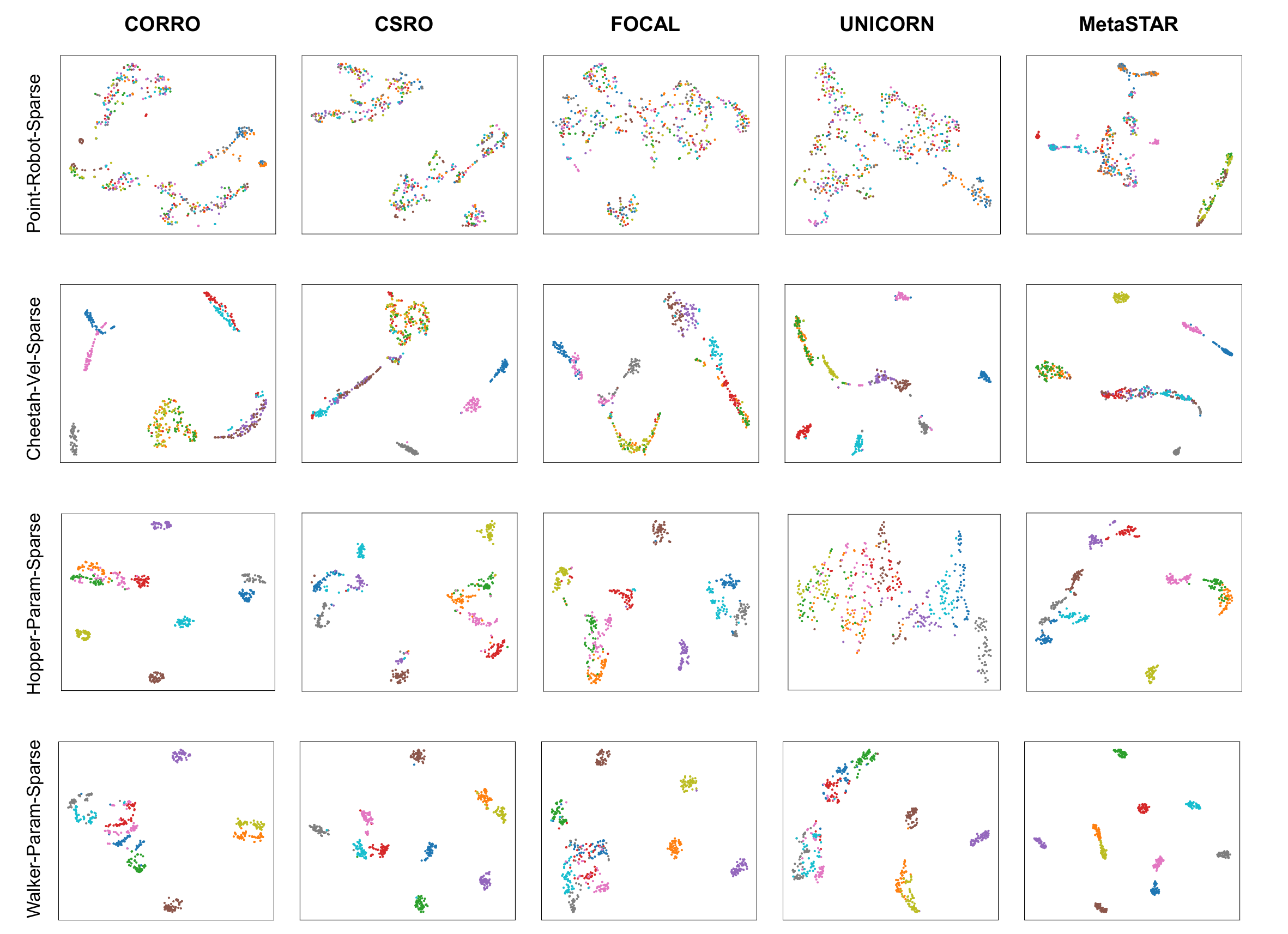}}
    \caption{T-SNE visualization of the task representation with online testing in sparse-reward environments.}
    \label{sparse_tsne}
  \end{center}
  \vspace{-20pt}
\end{figure*}
Figure~\ref{sparse_tsne} shows the visualization results on sparse-reward environments. Compared with dense-reward tasks, sparse-reward settings provide weaker and less frequent task-identifying signals, making task inference more challenging. Under this setting, MetaSTAR still maintains clearer cluster structures than the baselines, with embeddings from the same task grouped more consistently and different tasks better separated. In contrast, the baseline methods exhibit more overlap and less organized latent structures, suggesting that sparse feedback makes their task representations more vulnerable to context variation and behavior-induced bias.

This result is consistent with the design of MetaSTAR. The FOCAL metric objective encourages embeddings from the same task to be close and those from different tasks to be separated, while the world-model dynamics objective encourages the representation to preserve transition-related and reward-related predictive information. These two objectives provide complementary supervision: the former improves task-level discriminability, and the latter encourages dynamics-centric representations that are less dependent on behavior policies. Together, they help MetaSTAR learn more structured task representations under online testing, especially when sparse rewards make direct task identification difficult.

\subsection{Additional Euclidean Distance Analysis of Task Representations}
\label{app:euclidean_distance}
To further examine whether the learned representation space preserves meaningful task relationships on unseen tasks, we provide additional Euclidean distance matrices of task representations on Cheetah-Vel and Point-Robot-Sparse. Unlike t-SNE visualizations, which provide a qualitative two-dimensional projection, the distance matrices directly measure pairwise distances between task embeddings in the learned representation space. Therefore, they offer a complementary view of whether the learned embeddings reflect the semantic relationships among tasks.

\begin{figure*}[ht]
  \vspace{-5pt}
  \begin{center}
    \centerline{\includegraphics[width=0.75\textwidth]{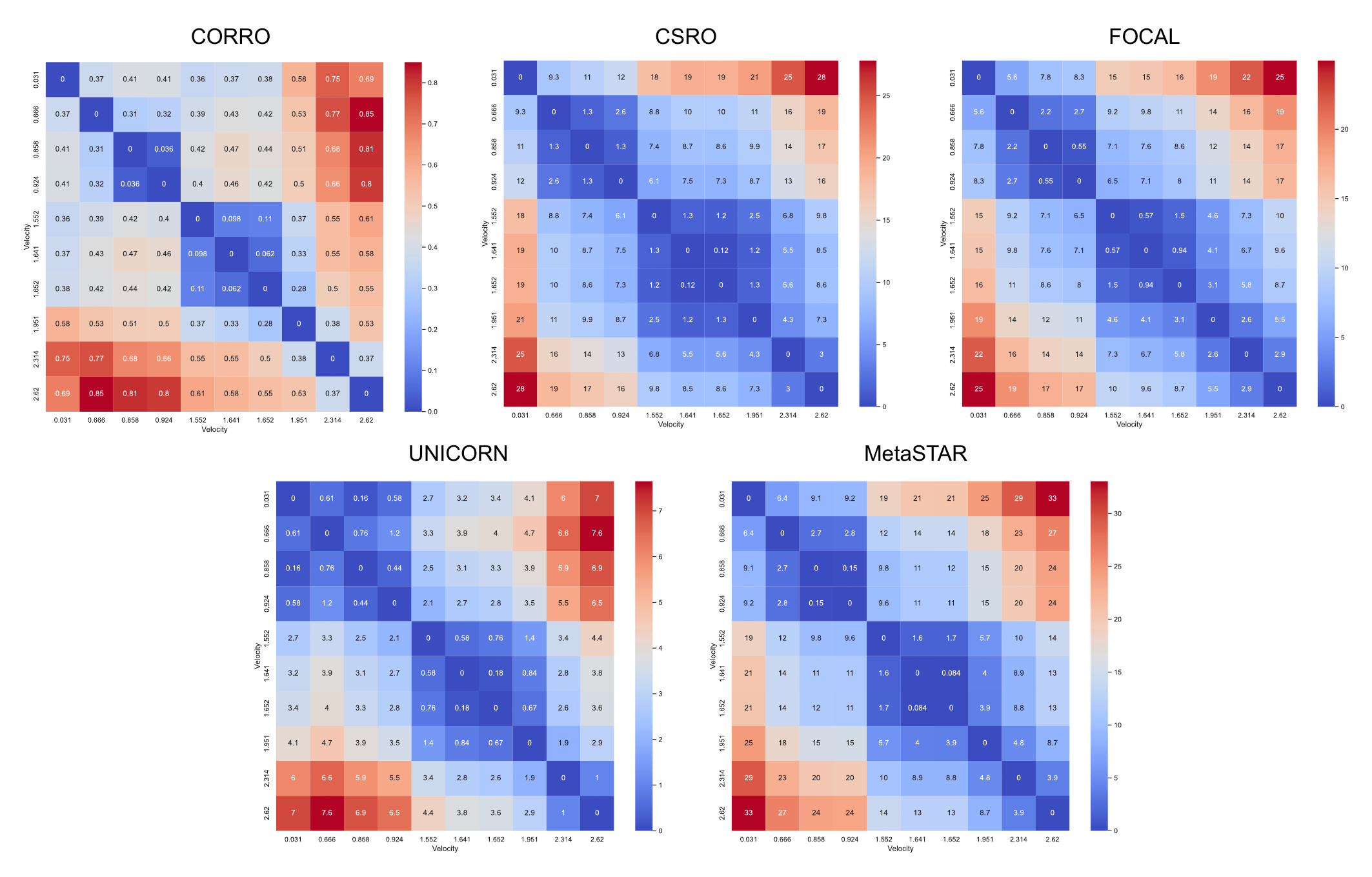}}
    \caption{Euclidean distance of task representations on Cheetah-Vel.}
    \label{dist_matrix_cheetah_vel}
  \end{center}
  \vspace{-20pt}
\end{figure*}
Figure~\ref{dist_matrix_cheetah_vel} shows the results on Cheetah-Vel, where the task goal is the target velocity sampled from [0.0, 3.0]. Since Cheetah-Vel is a relatively simple dense-reward task, most methods can capture a meaningful distance pattern to some extent: tasks with similar target velocities tend to be closer in the learned representation space, while tasks with larger velocity gaps are generally farther apart. MetaSTAR also preserves this locally smooth structure, indicating that its task embeddings are aligned with the underlying velocity semantics.

\begin{figure*}[htbp]
  \vspace{-5pt}
  \begin{center}
    \centerline{\includegraphics[width=0.75\textwidth]{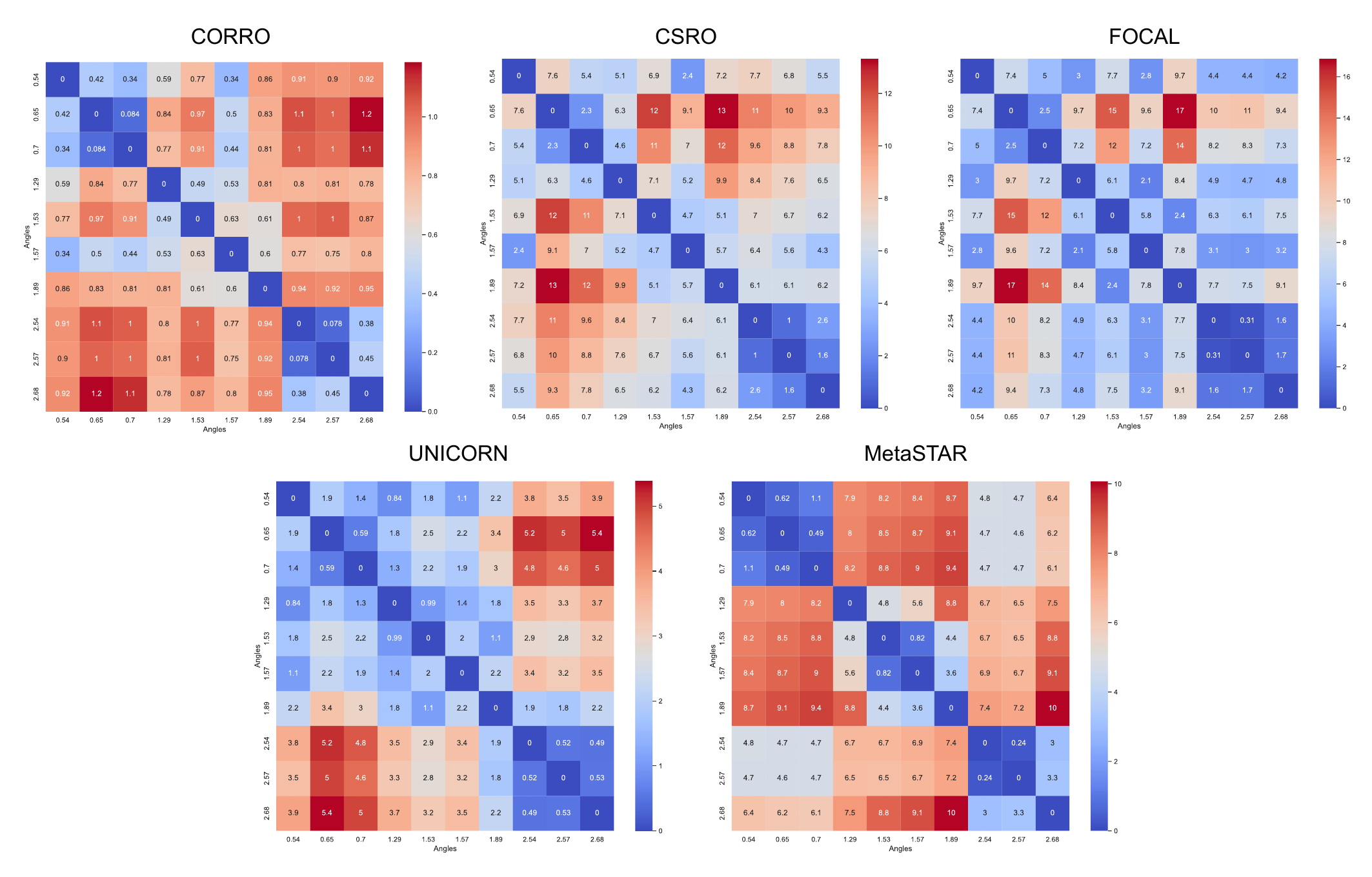}}
    \caption{Euclidean distance of task representations on Point-Robot-Sparse.}
    \label{dist_matrix_point_robot_sparse}
  \end{center}
  \vspace{-20pt}
\end{figure*}
Figure~\ref{dist_matrix_point_robot_sparse} shows the results on Point-Robot-Sparse, where the goal position is $(\cos\theta,\sin\theta)$ and $\theta$ is sampled from [0.0, $\pi$]. MetaSTAR also produces a distance pattern that is broadly consistent with the underlying task geometry: tasks with closer goal positions or similar angular structure tend to have smaller representation distances. This indicates that the representation learned by MetaSTAR captures geometric relationships among sparse-reward tasks, even though reward feedback is limited and task-identifying signals are harder to observe. We also note that angular distance alone does not always fully determine the representation distance in Point-Robot-Sparse. Some tasks with relatively different angles may still appear closer in the learned representation space because their goal positions share similar geometric components, such as similar $\sin\theta$ values. This behavior is reasonable because the task is defined by the position $(\cos\theta,\sin\theta)$ rather than by the scalar angle alone.

\subsection{Ablation Results}
\begin{figure*}[ht]
  \begin{center}
    \centerline{\includegraphics[width=\textwidth]{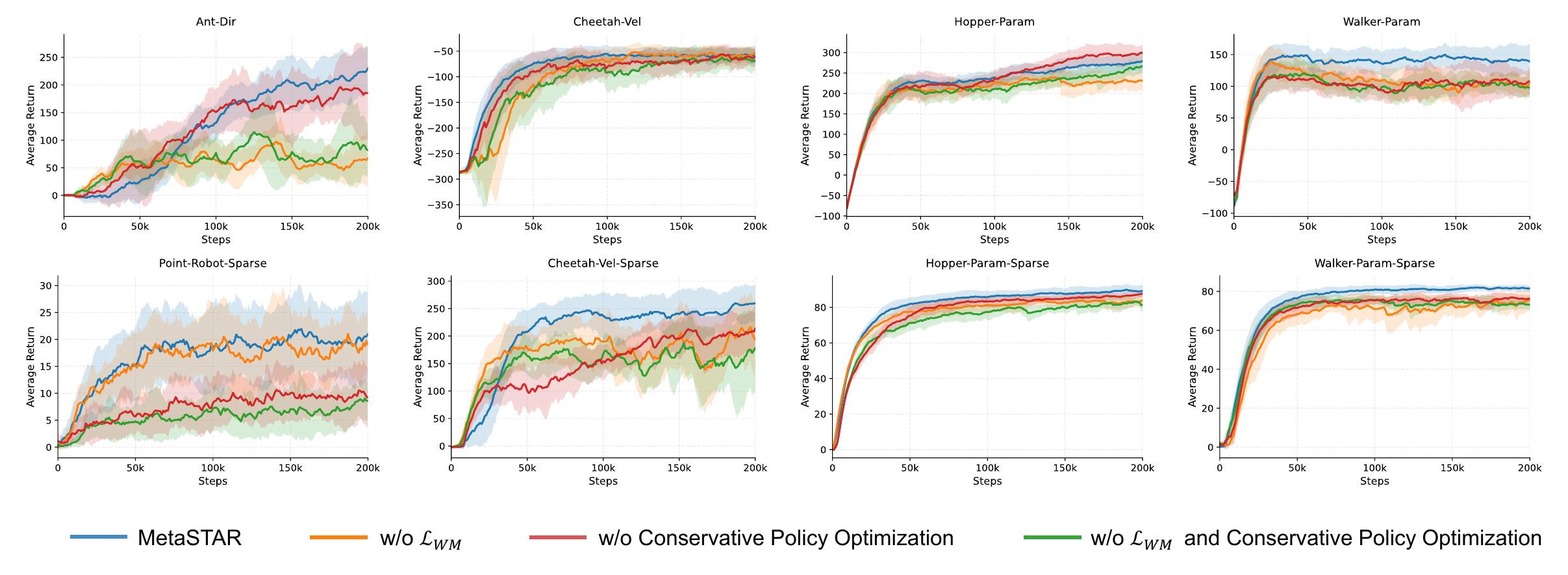}}
    \caption{Ablation study on online testing across 8 environments, to compare MetaSTAR with methods that without conservative policy optimization and/or $\mathcal{L}_{\mathrm{WM}}$.}
    \label{ablation_online}
  \end{center}
  \vspace{-20pt}
\end{figure*}

\begin{figure*}[ht]
  \begin{center}
    \centerline{\includegraphics[width=\textwidth]{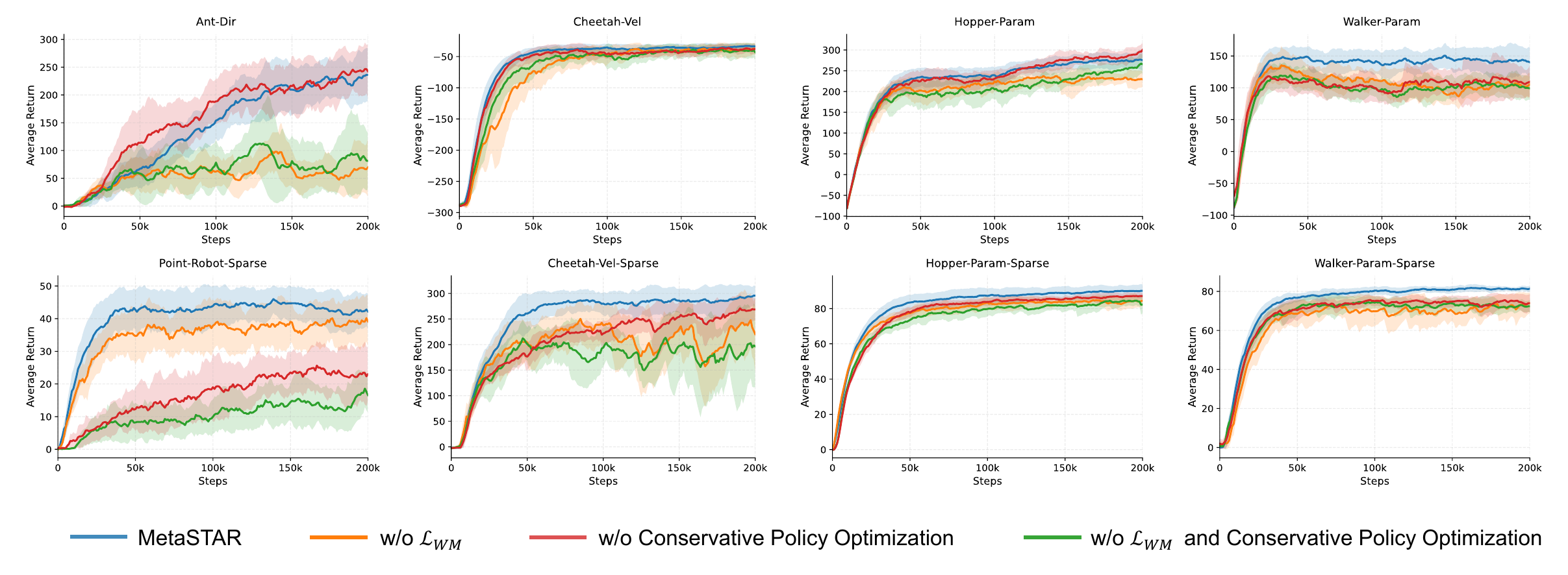}}
    \caption{Ablation study on offline testing across 8 environments, to compare MetaSTAR with methods that without conservative policy optimization and/or $\mathcal{L}_{\mathrm{WM}}$.}
    \label{ablation_offline}
  \end{center}
  \vspace{-20pt}
\end{figure*}

Figure~\ref{ablation_online} and Figure~\ref{ablation_offline} provide the full online and offline ablation curves across eight environments, complementing the ablation summary in the main text. The ablated variants remove either the world-model objective $\mathcal{L}_{\mathrm{WM}}$, conservative policy optimization, or both components. These curves allow us to examine not only the final performance but also the training dynamics of different variants.

We first observe that removing $\mathcal{L}_{\mathrm{WM}}$ generally weakens performance across most environments. This effect is more visible in tasks where task inference is crucial for adaptation, such as Ant-Dir and sparse-reward environments. Without the world-model dynamics objective, the task representation is mainly shaped by metric-level separation, but receives less direct supervision from transition- and reward-related prediction. Consequently, the learned embedding may be less informative about the underlying task dynamics, which can further affect downstream policy learning.

Removing conservative policy optimization also leads to performance degradation in many settings, especially in online testing. This is consistent with the role of the conservative objective: when imagined rollouts are used for policy improvement, the policy may otherwise exploit unsupported state-action regions or model prediction errors. By penalizing overly optimistic values on policy-induced regions that are insufficiently supported by the offline data, conservative policy optimization stabilizes imagination-based policy learning.

When both components are removed, the model becomes a simplified metric-learning variant without latent dynamics learning or conservative imagination-based policy optimization. This variant usually suffers the largest degradation, particularly in sparse-reward environments. This suggests that metric-based representation learning alone is not sufficient for robust adaptation when task-identifying reward signals are limited and the policy must generalize from static offline data.

The offline curves in Figure~\ref{ablation_offline} show a similar trend, but the gaps between variants are sometimes smaller than those in online testing. This is expected because offline testing uses pre-collected contexts and therefore avoids part of the context distribution shift introduced by online exploration. In contrast, online testing in Figure~\ref{ablation_online} better reflects the deployment setting, where both task inference and policy learning are affected by the mismatch between offline data and online interaction.

Finally, the contribution of conservative policy optimization is task-dependent. In some dense-reward environments, such as Hopper-Param, removing conservative policy optimization can slightly improve performance. One possible explanation is that dense rewards and relatively informative offline trajectories reduce the need for additional imagined data, while model-generated rollouts may introduce extra bias. Nevertheless, across the full set of environments, the results support the complementary roles of the two components: $\mathcal{L}_{\mathrm{WM}}$ improves task representation under context distribution shift, while conservative policy optimization stabilizes policy learning under policy distribution shift.

\subsection{Effect of Previous Reward in Observations}
\label{app:prev_reward_ablation}
\begin{figure*}[ht]
  \begin{center}
    \centerline{\includegraphics[width=0.8\textwidth]{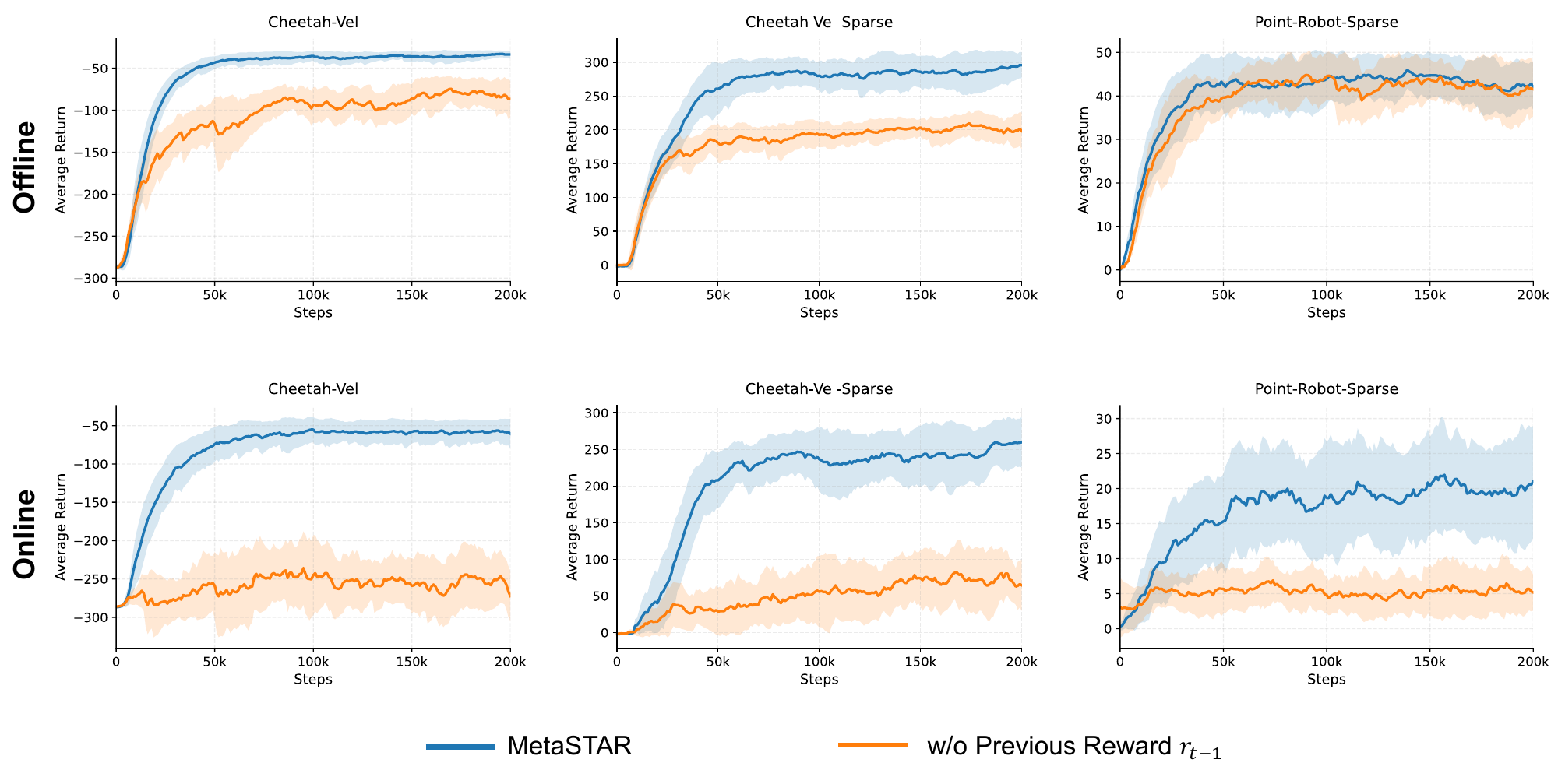}}
    \caption{Effect of removing the previous reward $r_{t-1}$ from the world-model observation on Cheetah-Vel, Cheetah-Vel-Sparse, and Point-Robot-Sparse.}
    \label{ablation_wo_pre_r}
  \end{center}
  \vspace{-20pt}
\end{figure*}
We additionally ablate the use of the previous reward $r_{t-1}$ in the world-model observation, where the observation is defined as $o_t=[s_t,r_{t-1}]$. Figure~\ref{ablation_wo_pre_r} compares the full MetaSTAR with a variant that removes $r_{t-1}$ from the observation. The results show that previous reward is an important component of MetaSTAR rather than an incidental implementation detail. On Cheetah-Vel, removing $r_{t-1}$ leads to clear performance degradation in both offline and online testing, with a more pronounced drop in the online setting. On Cheetah-Vel-Sparse, the performance also decreases noticeably after removing $r_{t-1}$, suggesting that previous reward still provides useful task-related cues even when rewards are sparse. On Point-Robot-Sparse, the offline testing gap is relatively small, but online adaptation becomes substantially worse without $r_{t-1}$. These results indicate that $r_{t-1}$ serves as an additional temporal signal for task inference. Although previous rewards are often zero in sparse-reward tasks, their occurrence pattern still carries information about the reward structure and helps the world model associate latent dynamics with task-specific feedback. This signal is particularly useful during online adaptation, where contexts are collected by the agent's exploration policy rather than sampled from the offline behavior data. Without this reward signal, the task representation becomes less informative, leading to weaker adaptation under context distribution shift.

\subsection{Effect of the Contrastive Loss Weight}
\label{app:lambda_sensitivity}
\begin{figure*}[ht]
  \begin{center}
    \centerline{\includegraphics[width=\textwidth]{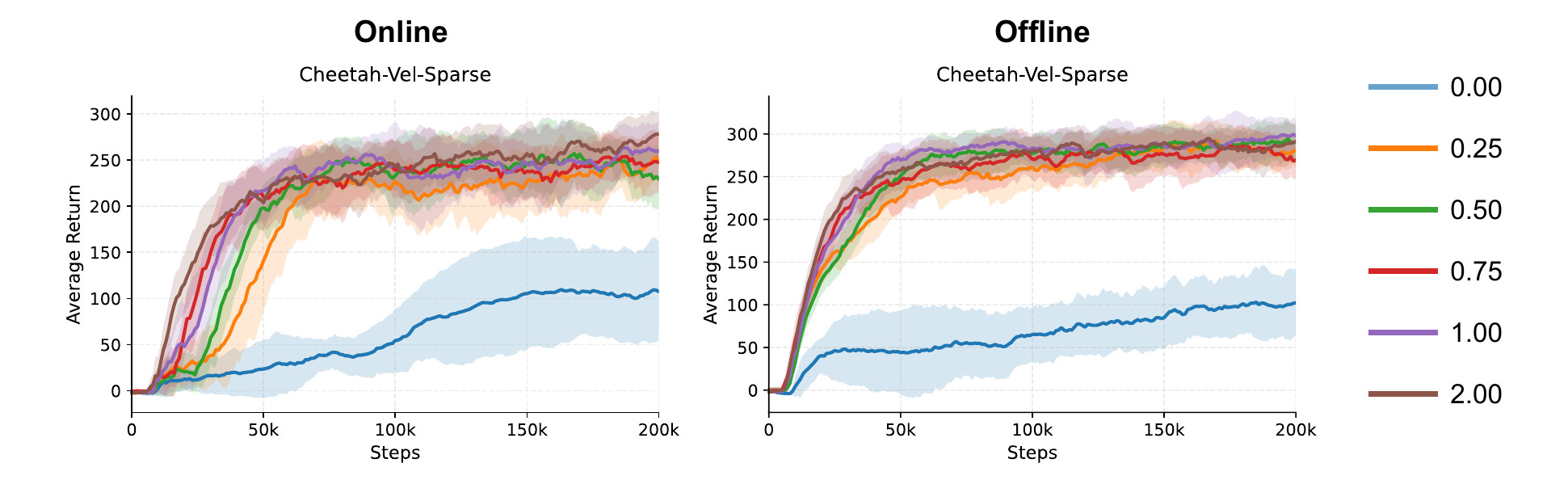}}
    \caption{Different hyperparameter settings of $\lambda$ on Cheetah-Vel-Sparse.}
    \label{lambda_settings}
  \end{center}
  \vspace{-20pt}
\end{figure*}
We conduct a sensitivity analysis on the contrastive loss weight $\lambda$ in Eq.~\eqref{eq:total_loss}. As shown in Figure~\ref{lambda_settings}, when $\lambda>0$, different values mainly affect the convergence speed in the early stage of training, while having limited influence on the final asymptotic performance. This suggests that the world-model objective $\mathcal{L}_{\mathrm{WM}}$ already provides a stable foundation for task representation learning, while the contrastive objective mainly serves as an auxiliary regularizer that improves task discriminability and accelerates representation convergence. Even with a relatively small $\lambda$, MetaSTAR can still extract task-relevant information and support stable policy learning. As $\lambda$ increases, the contrastive constraint can separate embeddings of different tasks more quickly in the early stage, leading to faster convergence. However, once the task representations become stable, its marginal contribution to final performance becomes less pronounced.

\subsection{Effect of the Conservative Penalty Weight}
We further analyze the effect of the conservative penalty weight $\beta$ in conservative meta-value evaluation. Figure~\ref{ablation_beta} shows the results on Cheetah-Vel, Cheetah-Vel-Sparse, and Point-Robot-Sparse. On Cheetah-Vel and Cheetah-Vel-Sparse, a large value such as $\beta=5.0$ noticeably reduces online testing performance, while smaller or moderate values, such as $\beta=0.5$ and $\beta=1.0$, perform better. This suggests that overly strong conservatism can suppress policy improvement by penalizing potentially useful imagined state-action regions. In contrast, the performance differences on Point-Robot-Sparse are relatively small, indicating that this task is less sensitive to the conservative penalty strength.

\begin{figure*}[ht]
  \begin{center}
    \centerline{\includegraphics[width=0.8\textwidth]{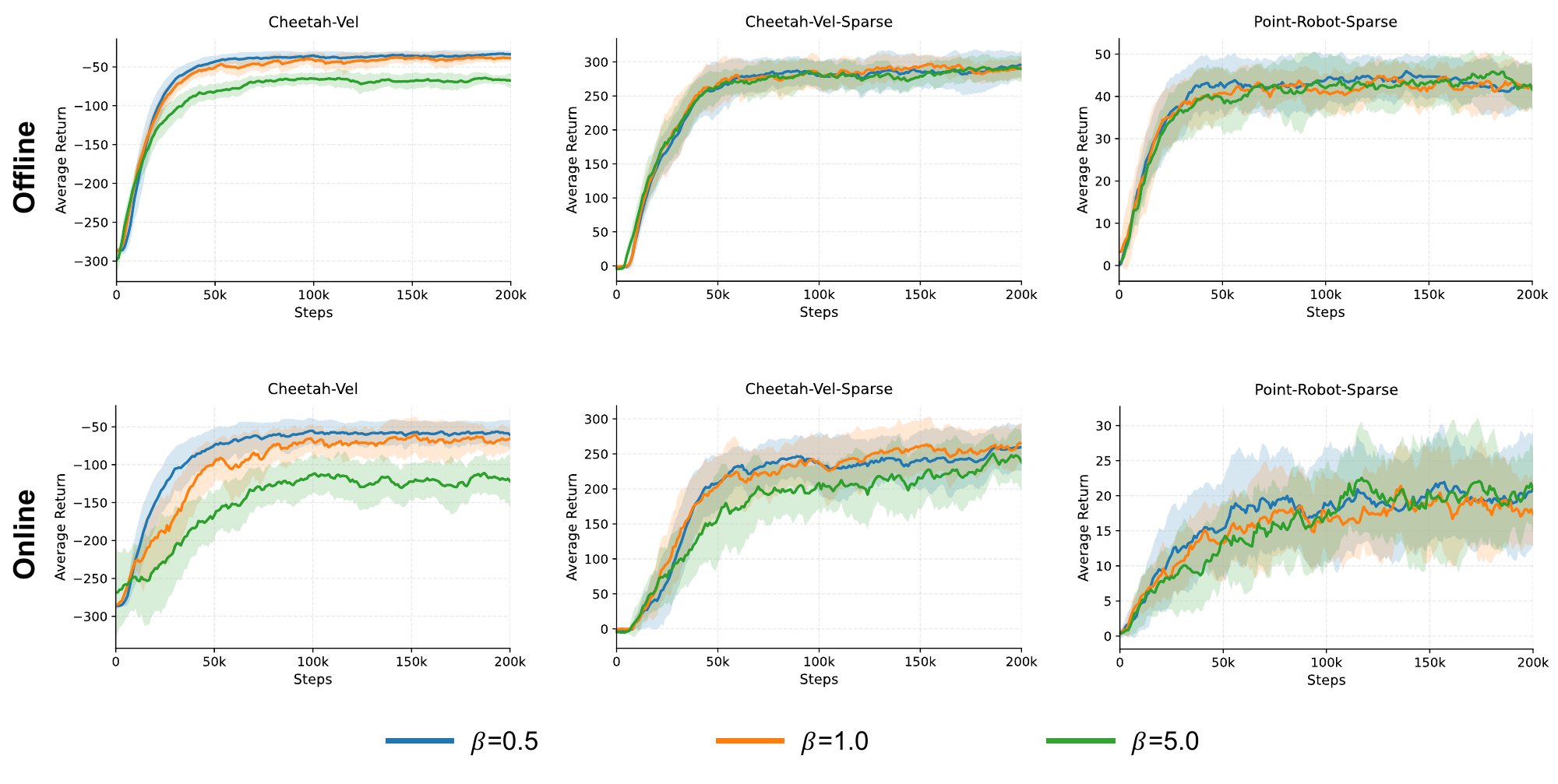}}
    \caption{Effect of the conservative penalty weight $\beta$ on Cheetah-Vel, Cheetah-Vel-Sparse, and Point-Robot-Sparse.}
    \label{ablation_beta}
  \end{center}
  \vspace{-20pt}
\end{figure*}
Mechanistically, $\beta$ controls the strength of pessimistic regularization on unsupported state-action regions, forming a trade-off between exploiting world-model imagination and suppressing model-error propagation. When $\beta$ is small, the policy can more actively use imagined transitions for optimization, which may improve adaptation but also increases the risk of value overestimation caused by model errors. When $\beta$ is large, the critic imposes stronger penalties on policy-induced regions, which can improve stability but may also lead to overly pessimistic value estimates and limit the upper bound of policy performance.

This trade-off is task-dependent. In dense-reward tasks such as Cheetah-Vel, the offline data often already contain informative trajectories, so excessive conservatism may unnecessarily restrict policy improvement. In Cheetah-Vel-Sparse, imagined rollouts are more useful for extending task-relevant experience, but an overly large $\beta$ can suppress the benefit of such rollouts. For Point-Robot-Sparse, the task structure is relatively simple and easier to identify from limited context, so the policy is less sensitive to the exact value of $\beta$.

\subsection{Effect of the Context Length in Imagination}
\begin{figure*}[ht]
  \begin{center}
    \centerline{\includegraphics[width=0.8\textwidth]{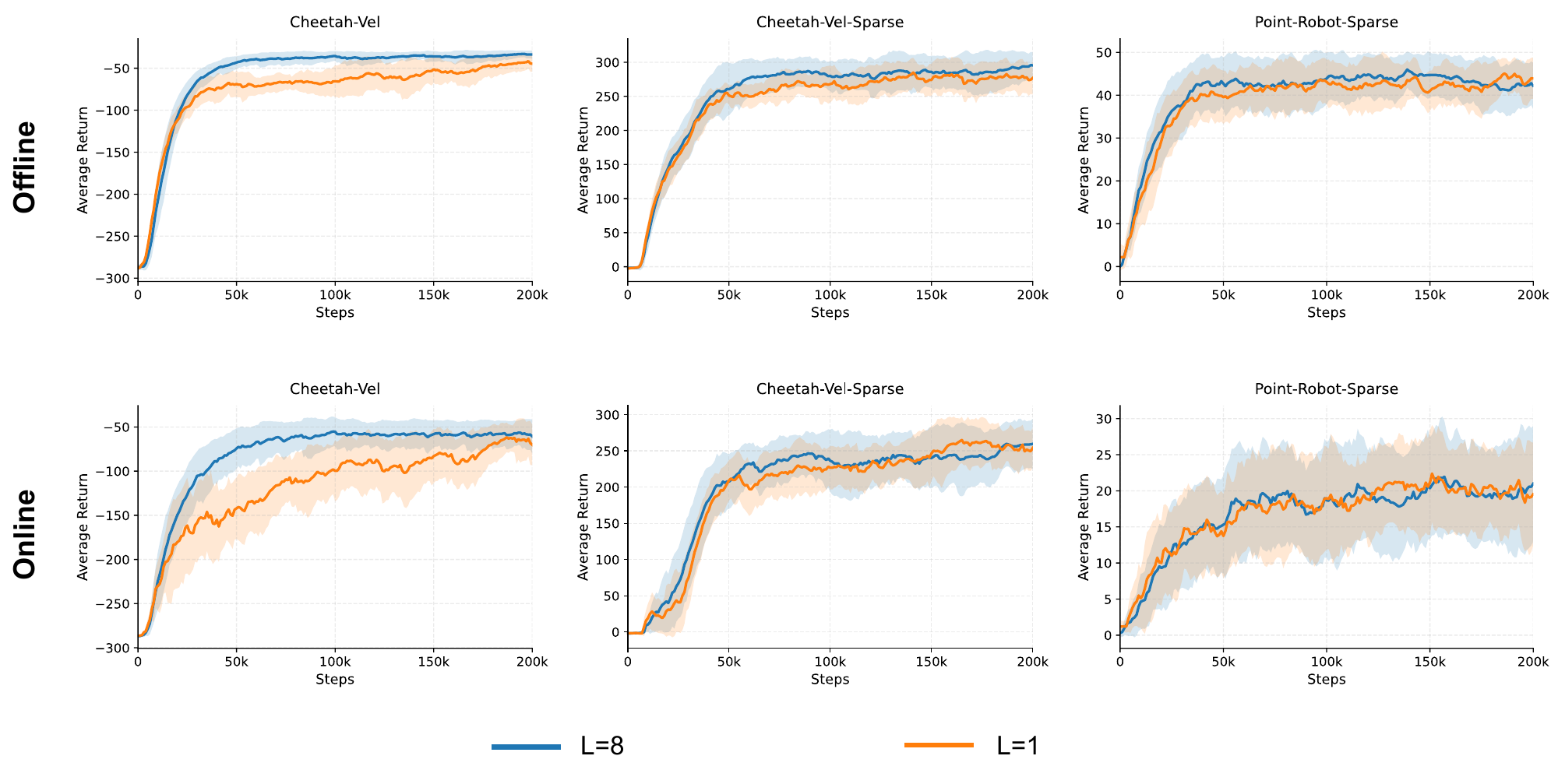}}
    \caption{Effect of the context length $L$ in contextual imagination on Cheetah-Vel, Cheetah-Vel-Sparse, and Point-Robot-Sparse.}
    \label{ablation_context_length}
  \end{center}
  \vspace{-20pt}
\end{figure*}
We also investigate the role of the real context length $L$ in contextual imagination by comparing $L=1$ and $L=8$. As shown in Figure~\ref{ablation_context_length}, a longer context prefix generally improves performance. On Cheetah-Vel, $L=8$ clearly outperforms $L=1$ in both offline and online testing, with a more pronounced advantage in online testing. On Cheetah-Vel-Sparse, $L=8$ also performs better overall, although the gap is smaller. On Point-Robot-Sparse, the two settings achieve similar performance, suggesting that this environment is less sensitive to the context length.

These results indicate that the real context prefix plays an important role in world-model-based task inference and imagination. A longer context provides more task-relevant evidence before imagined rollouts are generated, allowing the world model to build a more consistent latent history and reducing ambiguity about the underlying MDP. In contrast, an overly short context, such as $L=1$, may be insufficient to characterize the current task, especially in environments that require fine-grained task identification. This can lead to less stable task representations and weaker online adaptation.

The effect of $L$ also depends on the environment. In Cheetah-Vel, although single-step transitions already contain some task information, a longer context still improves the consistency of task inference and brings clearer gains during online adaptation. In Cheetah-Vel-Sparse, the reward signal is less informative, so task identification relies more on transition patterns; increasing $L$ remains beneficial, but the improvement is smaller due to the limited task-identifying feedback. In Point-Robot-Sparse, the effective task information contained in short online contexts can be very limited under sparse rewards. Therefore, performance is less determined by context length itself and more by whether the sampled context contains informative transitions, making this environment relatively insensitive to the choice of $L$.

\subsection{Effect of the Imagination Horizon}
\begin{figure*}[ht]
  \begin{center}
    \centerline{\includegraphics[width=0.8\textwidth]{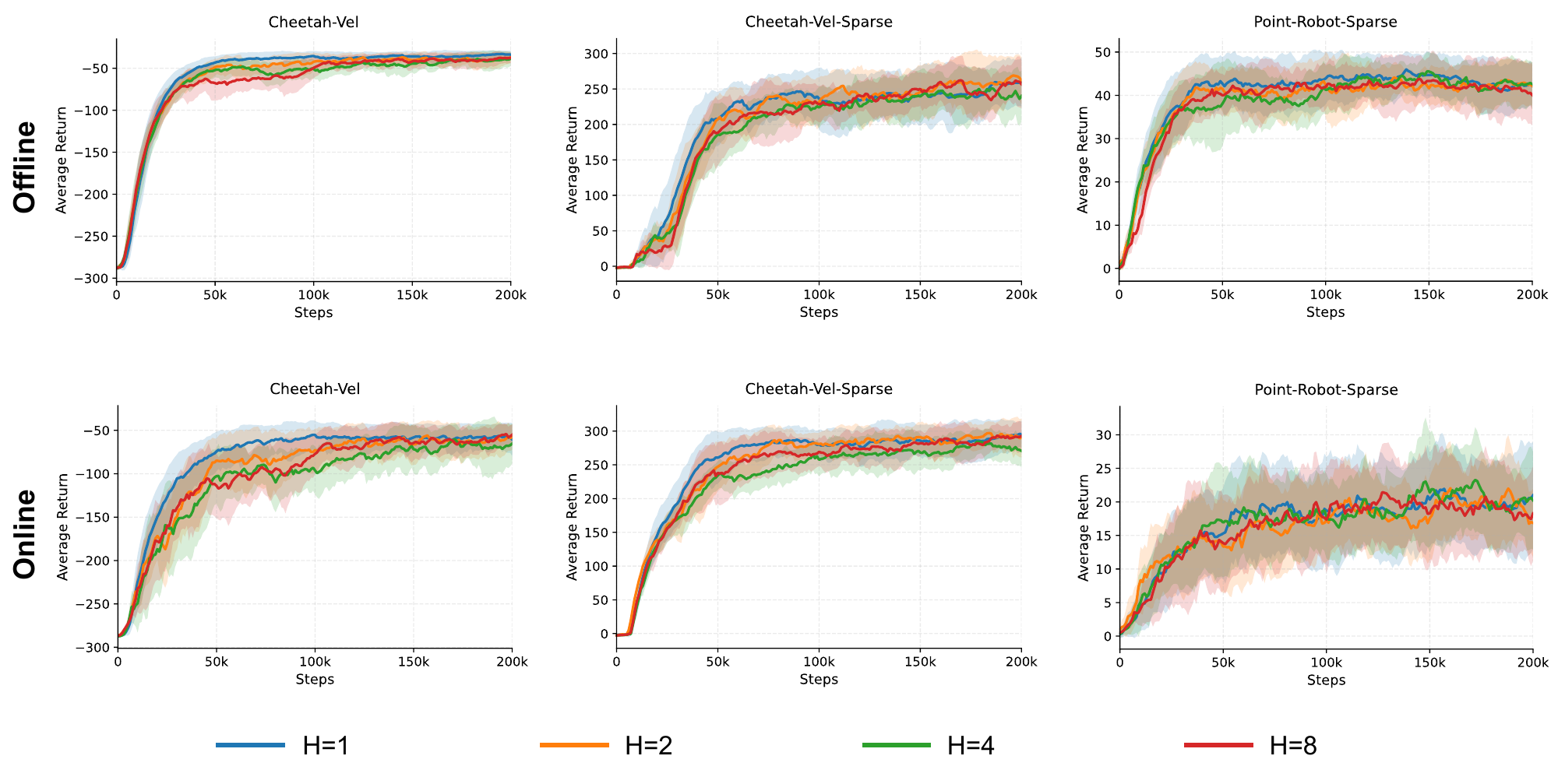}}
    \caption{Effect of the imagination horizon $H$ on Cheetah-Vel, Cheetah-Vel-Sparse, and Point-Robot-Sparse.}
    \label{ablation_horizon}
  \end{center}
  \vspace{-20pt}
\end{figure*}
We then evaluate the influence of the imagination horizon $H$ in contextual imagination. Figure~\ref{ablation_horizon} shows the results on Cheetah-Vel, Cheetah-Vel-Sparse, and Point-Robot-Sparse. On Cheetah-Vel, larger values of $H$ slightly slow down early-stage convergence, but different settings eventually reach similar final performance. On Cheetah-Vel-Sparse and Point-Robot-Sparse, the influence of $H$ is also relatively mild, suggesting that MetaSTAR is not highly sensitive to the imagination horizon within the tested range.

This result reflects the trade-off in imagination-based policy optimization. A longer horizon provides more imagined transitions for policy learning, but it may also introduce additional model bias through accumulated prediction errors. In dense-reward tasks such as Cheetah-Vel, reward signals are frequent and already provide sufficient learning feedback, so a short imagination horizon is usually enough for local model-based augmentation while reducing unnecessary error accumulation. In sparse-reward tasks, reward feedback is much less frequent, and moderately longer context-conditioned rollouts can provide additional imagined transitions that may help policy learning under limited feedback.

Therefore, we can use a short horizon for dense-reward tasks and a moderately longer horizon for sparse-reward tasks. This choice reflects the different role of imagination in the two settings: dense rewards provide frequent learning signals, whereas sparse rewards can benefit from additional context-conditioned imagined transitions. Importantly, the purpose of imagination in MetaSTAR is to conservatively expand the effective support region around real offline contexts, rather than to perform deep trajectory planning as in Dreamer-style agents. Overall, the results indicate that MetaSTAR only requires short-to-moderate horizon imagination for conservative policy learning, while the world model itself still learns task-relevant latent dynamics from full historical contexts.



\subsection{Training Time Analysis}
\label{app:training_time}

To evaluate the computational cost of MetaSTAR, we report the total offline training time of MetaSTAR and the baseline methods under the same experimental setup. All results are measured on a single RTX 4090 GPU, as shown in Table~\ref{tab:training_time}. We note that the absolute wall-clock time may vary across machines due to differences in CPU performance, memory bandwidth, data loading efficiency, and software configuration. Therefore, the table should be interpreted mainly as a relative comparison under the same hardware and implementation setting.

\begin{table}[htbp]
\centering
\caption{Training time comparison on a single RTX 4090 GPU, measured in hours.}
\label{tab:training_time}
\small
\begin{tabular}{lccccc}
\toprule
\textbf{Environment} & \textbf{CORRO} & \textbf{CSRO} & \textbf{FOCAL} & \textbf{UNICORN} & \textbf{MetaSTAR} \\
\midrule
Ant-Dir              & $8.5 \pm 2.4$  & $11.6 \pm 1.7$ & $12.1 \pm 2.6$ & $10.8 \pm 0.7$ & $15.2 \pm 4.4$ \\
Cheetah-Vel          & $10.6 \pm 2.5$ & $18.2 \pm 0.3$ & $17.0 \pm 0.9$ & $19.4 \pm 2.4$ & $23.5 \pm 0.5$ \\
Hopper-Param         & $9.6 \pm 2.0$  & $16.1 \pm 4.0$ & $13.8 \pm 3.2$ & $17.5 \pm 4.3$ & $21.7 \pm 2.1$ \\
Walker-Param         & $8.9 \pm 3.3$  & $13.2 \pm 4.7$ & $12.2 \pm 4.6$ & $14.2 \pm 4.9$ & $17.1 \pm 4.8$ \\
Point-Robot-Sparse   & $6.5 \pm 2.3$  & $8.8 \pm 1.8$  & $8.7 \pm 1.7$  & $9.2 \pm 2.7$  & $13.0 \pm 1.7$ \\
Cheetah-Vel-Sparse   & $9.4 \pm 2.2$  & $11.5 \pm 1.0$ & $10.9 \pm 0.6$ & $13.3 \pm 0.7$ & $17.5 \pm 1.1$ \\
Hopper-Param-Sparse  & $6.2 \pm 1.7$  & $7.5 \pm 0.4$  & $7.3 \pm 0.2$  & $10.3 \pm 1.3$ & $13.8 \pm 0.5$ \\
Walker-Param-Sparse  & $9.4 \pm 1.0$  & $16.6 \pm 2.1$ & $16.0 \pm 0.2$ & $17.8 \pm 0.5$ & $20.9 \pm 3.5$ \\
\bottomrule
\end{tabular}
\end{table}

Overall, under the same setting, MetaSTAR requires longer offline training time than the model-free baselines. This additional cost mainly comes from two components. First, MetaSTAR trains a stochastic Transformer-based world model, which introduces sequence modeling and latent dynamics prediction beyond standard context-encoder training. Second, during agent learning, MetaSTAR performs contextual imagination on the fly and optimizes the critic using both real and imagined transitions, increasing the cost of each update.

This overhead reflects a trade-off between computational cost and robustness. In offline meta-RL, training is performed once using static datasets, without requiring additional environment interactions. Moreover, at meta-test time, MetaSTAR only needs to infer the task representation and execute the policy through forward passes, so the additional cost is primarily concentrated in offline training rather than deployment. Improving the efficiency of world-model training and imagination-based policy optimization is an important direction for future work.

\section{Context-based Offline Meta-RL Baselines}
\label{comrl}
\paragraph{FOCAL \cite{li2020focal}} 
employs a deterministic context encoder trained via a negative-power distance metric to learn task representations on a bounded embedding space. The framework detaches the metric learning from the Bellman update to mitigate gradient interference during training. Built on the principles of Behavior Regularized Actor-Critic (BRAC), the algorithm addresses value function divergence and distributional shift by regularizing the learned policy against the behavior policy.

\paragraph{CORRO \cite{yuan2022robust}} 
is designed to learn robust task representations under behavior-policy mismatch. It features a bi-level task encoder where the first level extracts latent representations from individual transition tuples, rather than full trajectories, while the second level aggregates these representations into a task embedding. The framework employs a contrastive learning objective, formalized as mutual information maximization, to separate task-relevant information from behavior policy features. To optimize this objective, CORRO generates negative pairs using generative modeling via a Conditional VAE (CVAE) or through reward randomization. These task embeddings are then used to condition the policy and value functions within an offline reinforcement learning backbone.

\paragraph{CSRO \cite{gao2023context}}
is proposed to address the context-shift issue. During the meta-training phase, the framework employs a max-min mutual information representation learning mechanism. This mechanism uses the CLUB (Contrastive Log-ratio Upper Bound) estimator to minimize the mutual information between task representations and behavior policies while maximizing the mutual information between representations and task-specific information. In the meta-test phase, CSRO utilizes a non-prior context collection strategy, where the agent performs initial random exploration to gather transition data before updating its task posterior, thereby reducing the influence of biased initial task priors. The framework is typically implemented with Behavior Regularized Actor-Critic (BRAC) as the offline backbone to handle value function estimation and policy regularization.

\paragraph{UNICORN \cite{li2024towards}}
provides a theoretical consolidation of existing methodologies by framing task representation learning as the optimization of mutual information $I(Z; M)$ between the task variable $M$ and its latent representation $Z$. Specifically, the framework demonstrates that prior algorithms—such as FOCAL, CORRO, and CSRO—function as mathematical upper bounds, lower bounds, or linear interpolations of this mutual information objective. UNICORN introduces two primary instantiations: UNICORN-SUP, a supervised variant that utilizes cross-entropy loss for direct estimation, and UNICORN-SS, a self-supervised variant that combines contrastive objectives with generative reconstruction-based regularization. By seeking tighter approximations of $I(Z; M)$, the framework aims to trade off causal and spurious correlations to enhance robustness against context shift and improve generalization in out-of-distribution scenarios.

\section{Environments and Implementation Details}
\label{environment_details}

\begin{table}[htbp]
\centering
\caption{Episode length and goal settings of environments used.}
\label{tab:env_settings}
\begin{tabular}{lccccc}
\toprule
\multirow{2}{*}{\textbf{Environments}} & \textbf{Max steps} & \multirow{2}{*}{\textbf{Goal type}} &\multirow{2}{*}{\textbf{Goal range}} & \textbf{ID tasks} & \textbf{OOD tasks} \\
& \textbf{per episode} & & & \textbf{Range} & \textbf{Range} \\
\midrule
Ant-Dir & 200 & Direction & [0, $2\pi$) & [$\frac{\pi}{4}$, $\frac{7\pi}{4}$] & [0, $\frac{\pi}{4}$) $\cup$ ($\frac{7\pi}{4}$, $2\pi$) \\
Point-Robot-Sparse & 60 & Position & [0, $\pi$] & [$\frac{\pi}{8}$, $\frac{7\pi}{8}$] & [0, $\frac{\pi}{8}$) $\cup$ ($\frac{7\pi}{8}$, $\pi$] \\
Cheetah-Vel & 200 & Velocity & [0, 3] & [0.5, 2.5] & [0, 0.5) $\cup$ (2.5, 3] \\
Cheetah-Vel-Sparse & 200 & Velocity & [0, 3] & [0.5, 2.5] & [0, 0.5) $\cup$ (2.5, 3] \\
\midrule
Hopper-Param & 200 & \multirow{4}{*}{\makecell{Body mass \\ and \\ geom friction}}  & \multirow{4}{*}{$1.5^{[-3, 3]}$} & / & / \\
Hopper-Param-Sparse & 200 &  &  & / & / \\
Walker-Param & 200 &  &  & / & / \\
Walker-Param-Sparse & 200 &  &  & / & / \\
\bottomrule
\end{tabular}
\end{table}

\paragraph{Ant-Dir.}
The Ant-Dir environment requires controlling an ant robot to move along a target direction on the plane. For each task, a goal direction $\theta\in[0,2\pi)$ is sampled, which defines a unit direction vector $\mathbf{d}=(\cos\theta,\sin\theta)$. The reward encourages the agent to move along the specified direction by measuring its directional velocity. The state space contains the ant's joint positions, joint velocities, and body posture information, while the action space consists of continuous motor torques applied to the robot joints.

\paragraph{Point-Robot-Sparse.}
The Point-Robot-Sparse environment requires a point robot to navigate to a target position randomly placed on a unit half-circle. At the beginning of each episode, a goal position is sampled on the half-circle with radius $1$. The agent receives a reward only when it enters a radius of $0.2$ around the target. The observation consists of the current position of the robot, and the action controls movement along the $x$ and $y$ directions. Since rewards are provided only near the target, the sparse reward structure makes the task challenging and requires the agent to learn effective navigation with limited feedback.

\paragraph{Cheetah-Vel.}
The Cheetah-Vel environment requires controlling a HalfCheetah robot to reach and maintain a target velocity. For each task, the target velocity $v^\star$ is sampled uniformly from $[0.0,3.0]$. At each time step, the agent is rewarded according to how well its current velocity matches the target velocity, with a control regularization term. The reward is given by
$-|v_t-v^\star| - 0.05\lVert a_t\rVert_2^2$,
where $v_t=(x_{t+1}-x_t)/\Delta t$, and the second term penalizes large actions. The state space contains the joint positions, joint velocities, and body posture information of the HalfCheetah robot, while the action space consists of continuous motor torques.

\paragraph{Cheetah-Vel-Sparse.}
Cheetah-Vel-Sparse follows the same setting as Cheetah-Vel, except that it uses a sparse reward mechanism. Specifically, the agent receives a reward only when the difference between its current velocity and the target velocity is within a tolerance threshold, with goal radius $0.5$; otherwise, the reward is zero. This sparse feedback makes velocity adaptation more challenging than in the dense-reward version.

\paragraph{Hopper-Param.}
The Hopper-Param environment requires controlling a planar one-legged hopper to remain upright and move forward under randomized dynamics. At the beginning of each task, key physical parameters are perturbed by multiplicative log-scale factors: the body mass and ground friction coefficient are scaled by $1.5^u$, where $u\sim\mathrm{Uniform}[-3,3]$. The state space consists of the hopper's joint positions, joint velocities, and body posture information, while the action space consists of continuous motor torques. The reward encourages efficient forward locomotion and is given by the forward velocity $(x_{t+1}-x_t)/\Delta t$ plus an alive bonus of $1.0$, minus a small control penalty $10^{-3}\lVert a_t\rVert_2^2$.

\paragraph{Hopper-Param-Sparse.}
Hopper-Param-Sparse also controls a planar one-legged hopper under randomized dynamics, but uses a sparse velocity-tracking objective. Let $d_t=|v_t-1.5|$ denote the deviation between the current velocity and the target velocity $1.5$. The reward is provided only when the agent is close to the target velocity: if $d_t>0.5$, the reward is $0$; otherwise, the reward is $0.8-d_t$. A control cost $10^{-3}\lVert a_t\rVert_2^2$ is subtracted, and no alive bonus is used. This sparse feedback structure provides limited guidance and makes the task more difficult.

\paragraph{Walker-Param.}
The Walker-Param environment controls a planar Walker2D robot to remain upright and move forward under randomized dynamics. At the beginning of each task, the body mass and ground friction coefficient are scaled by $1.5^u$, where $u\sim\mathrm{Uniform}[-3,3]$. The state space contains the Walker2D robot's joint positions, joint velocities, and body posture information, while the action space consists of continuous motor torques. The reward encourages efficient forward locomotion and is given by the forward velocity $(x_{t+1}-x_t)/\Delta t$ plus an alive bonus of $1.0$, minus a small control penalty $10^{-3}\lVert a_t\rVert_2^2$.

\paragraph{Walker-Param-Sparse.}
Walker-Param-Sparse modifies Walker-Param by replacing the dense locomotion reward with a sparse velocity-tracking objective. The reward is determined by the deviation $d_t=|v_t-1.5|$ between the current velocity and the target velocity $1.5$. If $d_t>0.5$, the reward is $0$; otherwise, the reward is $0.8-d_t$. In addition, a control cost $10^{-3}\lVert a_t\rVert_2^2$ is subtracted to penalize large actions, and no alive bonus is used.

\section{Hyperparameters}
\begin{table}[htbp]
\centering
\caption{Hyperparameter settings for MetaSTAR in different environments.}
\label{tab:hyperparameters}
\vspace{2mm}
\small 
\setlength{\tabcolsep}{3pt} 
\begin{tabular}{lccccc}
\toprule
\textbf{Parameter} & \multirow{2}{*}{\textbf{Ant-Dir}} & \multirow{2}{*}{\textbf{Point-Robot-Sparse}} & \textbf{Cheetah-Vel} & \textbf{Hopper-Param} & \textbf{Walker-Param} \\
\textbf{Name} &  &  & \textbf{/-Sparse} & \textbf{/-Sparse} & \textbf{/-Sparse} \\
\midrule
Number of Tasks             & 40    & 40    & 40    & 40    & 40    \\
Number of Training Tasks    & 30    & 30    & 30    & 30    & 30    \\
Number of Evaluation Tasks  & 10    & 10    & 10    & 10    & 10    \\
Number of Offline Episodes  & 1     & 1     & 1     & 1     & 1     \\
Number of Online Episodes   & 2     & 2     & 2     & 2     & 2     \\
Number of Iterations        & 1000  & 1000  & 1000  & 1000  & 1000  \\
RL Updates per Iteration    & 200   & 200   & 200   & 200   & 200  \\
RL Batch Size               & 256   & 256   & 256   & 256   & 256   \\
Task Batch Size             & 16   & 16   & 16   & 16   & 16   \\
Context Size                & 100   & 30   & 100   & 100   & 100   \\
Policy Layers               & [256, 256] & [128, 128] & [256, 256] & [256, 256] & [256, 256] \\
World Model Learning Rate   & 0.0003  & 0.0003  & 0.0003 & 0.0003 & 0.0003 \\
Actor Learning Rate         & 0.0003  & 0.0003  & 0.0003 & 0.0003 & 0.0003 \\
Critic Learning Rate        & 0.0003  & 0.0003  & 0.0003 & 0.0003 & 0.0003 \\
Discount Factor ($\gamma$)  & 0.99  & 0.9  & 0.99   & 0.99   & 0.99   \\
Entropy Alpha               & 0.2   & 0.01   & 0.2  & 0.2  & 0.2  \\
Transformer Hidden Size     & 256    & 128    & 256    & 256    & 256    \\
Number of Transformer Layers & 2  & 2  & 2 & 2 & 2 \\
Number of Transformer Heads & 4   & 4   & 4   & 4   & 4   \\
Contrastive Loss Weight ($\lambda$) & 1.0   & 1.0   & 1.0   & 1.0   & 1.0   \\
Conservative Penalty Weight ($\beta$) & 0.5   & 0.5   & 0.5   & 0.5   & 0.5   \\
Context Length of Imagination & 8   & 8   & 8   & 8   & 8   \\
Imagination Horizon & 1   & 4   & 1/8   & 1/8   & 1/8   \\
Task Embedding Size         & 5     & 5     & 5     & 40    & 32     \\
\bottomrule
\end{tabular}
\end{table}


\end{document}